\theoremstyle{plain}
\newtheorem{theorem}{Theorem}[section]
\newtheorem{proposition}[theorem]{Proposition}
\newtheorem{lemma}[theorem]{Lemma}
\newtheorem{example}[theorem]{Example}
\theoremstyle{definition}
\newtheorem{definition}[theorem]{Definition}
\newtheorem{remark}[theorem]{Remark}
\newcolumntype{P}[1]{>{\centering\arraybackslash}p{#1}}
\def\vzero{{\bm{0}}}
\def\0{\bm{0}}
\def\1{\bm{1}}
\def\vc{{\bm{c}}}
\def\vi{{\bm{i}}}
\def\vs{{\bm{s}}}
\def\vu{{\bm{u}}}
\def\vv{{\bm{v}}}
\def\vx{{\bm{x}}}
\def\vy{{\bm{y}}}
\def\vI{{\bm{I}}}
\def\vP{{\bm{P}}}
\def\vU{{\bm{U}}}
\def\vV{{\bm{V}}}
\def\vW{{\bm{W}}}
\def\vSigma{{\bm{\Sigma}}}
\DeclareMathAlphabet{\mathsfit}{\encodingdefault}{\sfdefault}{m}{sl}
\SetMathAlphabet{\mathsfit}{bold}{\encodingdefault}{\sfdefault}{bx}{n}
\def\sR{{\mathbb{R}}}
\def\E{\displaystyle \mathbb{E}}
\def\Var{\displaystyle \mathrm{Var}}
\newcommand{\loss}{\mathcal{L}}
\newcommand\op[1]{\operatorname{#1}}
\newcommand{\norm}[1]{\left\lVert#1\right\rVert}
\DeclareMathOperator*{\argmin}{arg\,min}
\DeclareMathOperator{\tr}{tr}
\newcommand{\ie}{\textit{i.e., }}
\begin{document}

\twocolumn[
\icmltitle{
On the Similarities of Embeddings in Contrastive Learning
}

\begin{icmlauthorlist}
\icmlauthor{Chungpa Lee}{yonsei}
\icmlauthor{Sehee Lim}{yonsei}
\icmlauthor{Kibok Lee}{yonsei}
\icmlauthor{Jy-yong Sohn}{yonsei}
\end{icmlauthorlist}

\icmlaffiliation{yonsei}{Department of Statistics and Data Science, Yonsei University, Seoul, Korea}

\icmlcorrespondingauthor{Jy-yong Sohn}{jysohn1108@yonsei.ac.kr}

\icmlkeywords{Machine Learning, ICML, contrastive learning, representation learning, embedding, similarity, negative pair, positive pair}

\vskip 0.3in
]

\printAffiliationsAndNotice{}

\begin{abstract}
    Contrastive learning operates on a simple yet effective principle: Embeddings of positive pairs are pulled together, while those of negative pairs are pushed apart. In this paper, we propose a unified framework for understanding contrastive learning through the lens of cosine similarity, and present two key theoretical insights derived from this framework. First, in full-batch settings, we show that perfect alignment of positive pairs is \textit{unattainable} when negative-pair similarities fall below a threshold, and this misalignment can be mitigated by incorporating within-view negative pairs into the objective. Second, in mini-batch settings, smaller batch sizes induce stronger separation among negative pairs in the embedding space, i.e., \textit{higher variance} in their similarities, which in turn degrades the quality of learned representations compared to full-batch settings. To address this, we propose an auxiliary loss that reduces the variance of negative-pair similarities in mini-batch settings. Empirical results show that incorporating the proposed loss improves performance in small-batch settings.
\end{abstract}

\section{Introduction}\label{sec:introduction}

Contrastive learning (CL) has emerged as a powerful approach to representation learning~\citep{chen2021exploring, chen2020simple, he2020momentum, radford2021clip, zhai2023sigmoid, khosla2020supervised}. In CL, an embedding model is trained to produce similar representations for two different views of the same data instance---referred to as a positive pair---and dissimilar representations for views from different data instances---referred to as negative pairs. Recent empirical studies have shown that representations learned by these objectives, \ie  \textit{aligning} positive pairs while \textit{separating} negative pairs, achieve remarkable performance across various downstream tasks~\citep{wang2020understanding}.

In parallel, several theoretical studies have analyzed the embeddings obtained by CL in full-batch settings. \citet{lu2022neural} showed that the widely used InfoNCE loss~\citep{oord2018representation} achieves its minimum value when positive pairs are perfectly aligned, and negative pairs are uniformly separated with the cosine similarity of $-\tfrac{1}{n-1}$, where $n$ is the size of the training dataset. \citet{lee2024analysis} further extended this optimality analysis to other contrastive losses, including the SigLIP loss~\citep{zhai2023sigmoid}.

Due to computational constraints, CL methods are typically implemented using mini-batches rather than full batches in practical scenarios. This has motivated recent studies on understanding whether the property of the optimal state observed in full-batch settings, \ie perfect alignment of positive pairs and uniform separation of negative pairs with the similarity of $-\tfrac{1}{n-1}$, also holds under mini-batch settings. \citet{cho2024minibatch} partially addressed this by showing that, for the InfoNCE loss, the optimal embeddings of mini-batch settings are identical to those of full-batch settings, only when the sum of all possible mini-batch losses is minimized. \citet{koromilas2024bridging} explored embeddings learned through kernel-based contrastive losses~\citep{li2021selfsupervisedlearningkerneldependence, waida2023towards} in mini-batch settings. While these studies provide valuable insights into understanding CL, their analyses are limited to specific forms of contrastive losses.

To address this limitation, we propose a unified framework for analyzing the embeddings obtained by both full-batch and mini-batch settings. Our analysis centers on the cosine similarity of embeddings of both positive and negative pairs. By characterizing the statistical properties of similarities of these embeddings, we reveal how different contrastive losses influence the structure of the learned embedding space under various training conditions.

Our key contributions are summarized as follows:

\begin{itemize}[leftmargin=*]
\item
In full-batch settings, we identify a fundamental trade-off between the \textit{alignment} of positive pairs and the \textit{separation} of negative pairs. Specifically, we show that the perfect alignment of positive pairs is not feasible when the average similarity of negative pairs falls below the certain threshold $-\tfrac{1}{n-1}$. We demonstrate that such misalignment arises in a class of existing contrastive losses, which can be mitigated by incorporating within-view negative pairs into the contrastive loss.
\item
In mini-batch settings, we demonstrate that negative pairs within the same batch exhibit stronger separation compared to those from different batches. As a result, we show that smaller batch sizes induce a higher variance in the similarities of negative pairs in the embedding space. We identify this increased variance as a distinctive feature of mini-batch settings that is absent in full-batch settings.
\item
Motivated by prior studies that full-batch settings often outperform their mini-batch counterparts, we hypothesize that the increased variance may underlie this performance gap. To explore this, we propose an auxiliary loss that can be integrated into contrastive losses to reduce this variance. Empirical results show that incorporating the proposed term improves the performance of CL methods, especially in small-batch settings.
\end{itemize}

\section{Related Work}

\paragraph{Contrastive Loss.}
The InfoNCE loss~\citep{gutmann2010noise, oord2018representation} is a widely adopted contrastive loss that has been applied to various tasks~\citep{wu2018unsupervised, hjelm2018learning, bachman2019learning, chi2020infoxlm, gao2021simcse, qian2021spatiotemporal}. SimCLR~\citep{chen2020simple} modifies the InfoNCE loss to improve robustness to augmentations by treating different augmented views of the same instance as positives and all other augmented instances in the batch as negatives. However, SimCLR simultaneously optimizes both positive and negative pairs in the normalization, which can introduce conflicts during optimization. To address this issue, Decoupled Contrastive Loss (DCL)~\citep{yeh2022decoupledcontrastivelearning} modifies the normalization so that the selection of negative pairs is restricted. Building on DCL, Decoupled Hyperspherical Energy Loss (DHEL)~\citep{koromilas2024bridging} further refines the selection by focusing on negative pairs between augmented views of the same instance, which are more challenging than those between different instances. Figure~\ref{fig:neg:pos:info} visualizes which pairs are included as positives and negatives for each method.

Meanwhile, \citet{zhai2023sigmoid} raised concerns about the softmax function in the InfoNCE loss, noting that it causes all instances to be dependent on each other through normalization. To address this limitation, they propose replacing the softmax with a sigmoid function, which allows each instance to be processed independently in an additive manner.

\paragraph{Understanding CL Through Embedding Structures.}
Several studies have examined how optimal embeddings should be structured to minimize contrastive loss~\citep{lee2025ssem}. \citet{lu2022neural} showed that the optimal embeddings that minimize the InfoNCE loss form a simplex Equiangular Tight Frame (ETF)~\citep{papyan2020prevalence, sustik2007existence}, where each positive pair is perfectly aligned and negative pairs are equally separated at the same angle, resulting in maximal separation among representations. \citet{lee2024analysis} further showed that the sigmoid-based contrastive loss, a variant of the softmax-based InfoNCE loss, also achieves the same simplex ETF optimum when the temperature parameter of the loss is sufficiently large. This optimal simplex ETF structure still holds even in mini-batch settings, provided that optimization is performed over all possible mini-batch combinations, rather than a single batch at a time~\citep{cho2024minibatch}. Building on these findings, we introduce a unified theoretical analysis of the similarities of embedding pairs.

\paragraph{Effect of Batch Size in CL.}
CL shows outstanding performance, particularly when trained with large batch sizes~\citep{chen2020simple, radford2021clip, pham2023combinedscalingzeroshottransfer, tian2020makesgoodviewscontrastive, jia2021scalingvisualvisionlanguagerepresentation}. However, large batch sizes require substantial memory resources, which poses practical challenges and often necessitates the use of smaller batches. This compromise in batch size typically leads to performance degradation, motivating several theoretical studies to investigate its causes~\citep{cho2024minibatch, koromilas2024bridging}. For example, \citet{yuan2022provablestochasticoptimizationglobal} demonstrate that the optimization error in SimCLR~\citep{chen2020simple} is upper bounded by a function inversely proportional to the batch size, indicating that smaller batches yield larger optimization errors. Additionally, \citet{chen2022why} show that contrastive losses exhibit increasing discrepancies between the true gradients and those estimated during training as the batch size decreases. While previous studies have primarily focused on optimization error and gradient estimation, we prove that training with small batch sizes leads to increased variance in the similarities of negative pairs in learned embeddings.

\section{Problem Setup}
\label{sec:problem:setup}

Let $(\vx, \vy)$ denote a pair of data points used for model training, where $\vx$ and $\vy$ correspond to two distinct views of instances. This formulation provides a unified framework for CL in both unimodal~\citep{chen2020simple} and multimodal~\citep{radford2021clip} settings. In the unimodal case, $\vx$ and $\vy$ are two randomly augmented views. In the multimodal case, $\vx$ and $\vy$ are views from different modalities. For clarity, we present our analysis in the unimodal setting, but the findings also apply to the multimodal case.

In CL, an encoder $f(\cdot) \in \sR^d$ is trained to map inputs into $d$-dimensional embedding vectors, thereby representing the data. The encoder is assumed to produce normalized embeddings such that $\norm{\vu}_2 = 1$ for all embeddings $\vu$. This normalization is commonly adopted in related works for mathematical simplicity~\citep{wang2017normface, wu2018unsupervised, tian2020contrastive, wang2020understanding, zimmermann2021contrastive, cho2024minibatch, lee2024analysis}, and is widely used in practice, as experimental results consistently demonstrate its effectiveness in improving performance~\citep{chen2020simple, chen2021exploring, xue2024investigatingbenefitsprojectionhead}.

\vspace{2mm}

The encoder produces outputs $\vu = f(\vx)$ and $\vv = f(\vy)$, which together form an \emph{embedding pair} $(\vu, \vv)$. When the embedding pair is generated from augmented views of the same instance, it is called a \emph{positive embedding pair} and is encouraged to be similar. In contrast, a \emph{negative embedding pair}, where each embedding comes from different instances, is encouraged to be dissimilar. For simplicity, we refer to positive embedding pairs as \textit{positive pairs} and similarly to \textit{negative pairs}, when there is no risk of confusion.

Formally, $p_{\op{pos}}(\vx, \vy)$ denotes the distribution of positive pairs, and $p_{\op{neg}} (\vx,\vy)$ represents the distribution of negative pairs. As in prior work~\citep{wang2020understanding}, the marginal distribution of augmented view $\vx$ is denoted by $p_x$, and similarly use $p_y$ to denote the marginal distribution for $\vy$.
These marginals 
satisfy the following conditions: $p_x(\vx) =\int p_{\op{pos}}(\vx, \vy) d\vy = \int p_{\op{neg}}(\vx, \vy) d\vy$ for all $\vx$, and $p_y(\vy) =\int p_{\op{pos}}(\vx, \vy) d\vx = \int p_{\op{neg}}(\vx, \vy) d\vx$ for all $\vy$. Note that the randomness in these distributions arises from the data augmentation process used to generate different views.

Let $n$ be the size of the training dataset. For any positive integers $a$ and $b$ with $a \leq b \leq n$, define the index sets $[a:b] := \{a, a+1, \cdots, b\}$ and $[a] := [1:a]$, where index $i \in [n]$ refers to the $i$-th instance in the dataset. For $i \in [n]$, let $\hat{p}_{\op{pos}}^i(\mathbf{x}, \mathbf{y})$ denote the empirical distribution of positive pairs derived from the $i$-th instance. We assume that the supports of  $\hat{p}_{\op{pos}}^i$ and $\hat{p}_{\op{pos}}^j$
are disjoint for $i \ne j$, as each instance is distinct. Moreover, we assume that each instance is used equally for training. Therefore, the empirical distribution of all positive pairs, denoted as $\hat{p}_{\mathrm{pos}}(\mathbf{x}, \mathbf{y})$, can be written as $\hat{p}_{\op{pos}}(\mathbf{x}, \mathbf{y}) = \frac{1}{n} \sum_{i \in [n]} \hat{p}_{\op{pos}}^i(\mathbf{x}, \mathbf{y})$. Similarly, the empirical distribution of all negative pairs is denoted by $\hat{p}_{\op{neg}}(\mathbf{x}, \mathbf{y})$.

Following the notations introduced in \citet{koromilas2024bridging}, we denote the element-wise pushforward measures induced by the encoder $f$ as $f_\sharp \hat{p}_x, f_\sharp \hat{p}_y, f_\sharp \hat{p}_{\op{pos}}$, and $f_\sharp \hat{p}_{\op{neg}}$. For example, $f_\sharp \hat{p}_{\op{neg}}(\mathbf{u}, \mathbf{v})$ represents the empirical distribution of negative embedding pairs, \ie the distribution of $(\mathbf{u}, \mathbf{v}) = (f(\vx), f(\vy))$ where $(\vx, \vy)$ comes from $\hat{p}_{\op{neg}}$.

\begin{figure*}[ht]
    \centering
    \includegraphics[width=\textwidth]{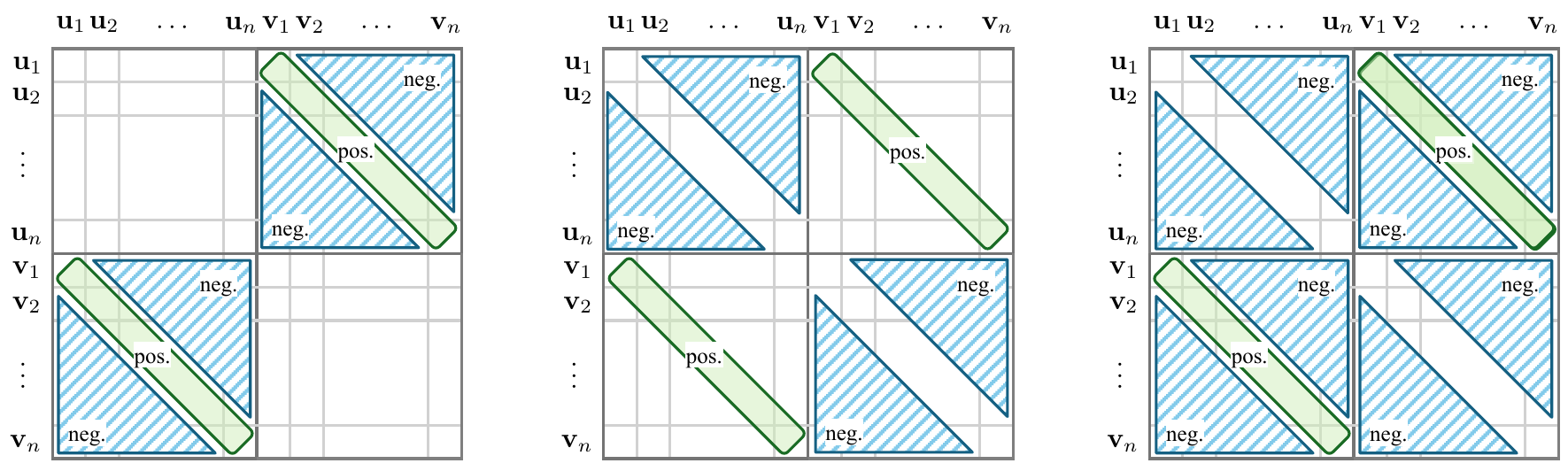}
      \begin{minipage}[t]{0.34\textwidth}
        \centering (a) $(c_1,c_2)=(1,0)$
      \end{minipage}\hfill
      \begin{minipage}[t]{0.30\textwidth}
        \centering (b) $(c_1,c_2)=(0,1)$
      \end{minipage}\hfill
      \begin{minipage}[t]{0.32\textwidth}
        \centering (c) $(c_1,c_2)=(1,1)$
      \end{minipage}
    \caption{
    Illustration of negative pair considered in the loss formulations defined in Def.~\ref{def:loss:infonce} and Def.~\ref{def:loss:ind}, which depends on the choice of $(c_1, c_2)$. Each grid shows all possible pairs of embeddings in $\vU_{[n]}$ and $\vV_{[n]}$, and each cell represents one pair. Green regions represent positive pairs, and blue-striped regions indicate which negative pairs are included in the loss.
    }
    \label{fig:c1c2}
\end{figure*}

\paragraph{Contrastive Loss.}

For any $a\leq b$ with $a,b \in [n]$, let $(\vU_{[a:b]}, \vV_{[a:b]}):=\{(\vu_i, \vv_i) : i\in [a:b]\}$ be
the set of embedding pairs for instances whose indices range from $a$ to $b$. The notation $(\vU_{[a:b]}, \vV_{[a:b]}) \sim f_\sharp \hat{p}_{\op{pos}}^{[a:b]}$ indicates that each positive pair $(\vu_i, \vv_i)$ is sampled from $f_\sharp \hat{p}_{\op{pos}}^i$ for all $i \in [a:b]$. Note that $\vu_i$ and $\vv_i$ are random variables, as they are the encoder outputs of randomly augmented views. For simplicity, the subscript is omitted for the set of all $n$ pairs, \ie $(\vU, \vV):= (\vU_{[n]}, \vV_{[n]})$.

Let $f^\star$ be the optimal encoder that minimizes the expected contrastive loss, given by
\begin{align}
    \label{eq:expected:loss}
    f^\star &:= \argmin_f \E_{ (\vU, \vV)\sim f_\sharp \hat{p}_{\op{pos}}^{[n]}} \left[ \loss \left(\vU, \vV\right) \right],
\end{align}
where $\loss \left(\vU, \vV\right)$ denotes the contrastive loss for a given sample $(\vU, \vV)$. In this work, we focus on two specific forms of contrastive losses used in practice:

\begin{definition}[InfoNCE-Based Contrastive Loss]
    \label{def:loss:infonce}
    For a given index set $\vI \subseteq [n]$, the contrastive loss $\loss_{\op{info-sym}} \left(\vU_{\vI}, \vV_{\vI}\right)$ is defined in its symmetric form as
    \begin{equation}
        \label{eq:loss:infonce}
        \loss_{\op{info-sym}}\! \left(\vU_{\vI}, \vV_{\vI}\right)\! := \! \frac{1}{2}\; \loss_{\op{info}}\!\left(\vU_{\vI}, \vV_{\vI}\right) + \frac{1}{2} \; \loss_{\op{info}}\!\left(\vV_{\vI}, \vU_{\vI}\right),
    \end{equation}
    where the asymmetric component $\loss_{\op{info}} \left(\vU_{\vI}, \vV_{\vI}\right)$ is
    \begin{align*}
        \loss_{\op{info}}\!\left(\vU_{\vI}, \vV_{\vI}\right) &:= \frac{1}{|\vI|} \sum_{i\in \vI} \psi \bigg(
            c_1\sum_{j\in\vI\setminus \{i\}} \phi \left( (\vv_j - \vv_i)^\top \vu_i \right)
        \\&\qquad\quad\quad
            + c_2\sum_{j\in\vI\setminus \{i\}} \phi \left( (\vu_j - \vv_i)^\top \vu_i
            \right)
        \bigg),
    \end{align*}
    for some constants $(c_1, c_2) \in \{(0,1), (1,0), (1,1)\}$ and some convex and increasing functions $\phi, \psi : \sR \to \sR$.
\end{definition}

\begin{definition}[Independently Additive Contrastive Loss]
    \label{def:loss:ind}
    For a given index set $\vI \subseteq [n]$, the contrastive loss $\loss_{\op{ind-add}}(\vU_{\vI}, \vV_{\vI}) $ is defined as
    \begin{align}
        \label{eq:loss:ind}
        \loss_{\op{ind-add}}(\vU_{\vI}&, \vV_{\vI}) 
        :=  -\frac{1}{|\vI|}\sum_{i\in \vI}  \phi (\vu_i^\top \vv_i)
        \\& \nonumber 
        + \frac{c_1}{|\vI|(|\vI|-1)}\sum_{i \neq j\in \vI} \psi(\vu_i^\top \vv_j)
        \\& \nonumber
         + \frac{c_2}{2|\vI|(|\vI|-1)}\sum_{i \neq j\in \vI} 
        \!\!\left(
        \psi(\vu_i^\top \vu_j)
        + \psi(\vv_i^\top \vv_j)
        \right)
    \end{align}
    for some constants $(c_1, c_2) \in \{(0,1), (1,0), (1,1)\}$, 
    where $\phi :\sR \to \sR$ is a differentiable, concave, and increasing function, and $\psi:\sR \to \sR$ is a differentiable, convex, and increasing function.
    Here, $i \neq j \in \vI$ is a simplified notation representing $i\in\vI$ and $j\in\vI \setminus \{i\}$.
\end{definition}

The constants $(c_1, c_2) \in \{(0, 1), (1, 0), (1, 1)\}$ in both loss formulations determine which types of negative pairs are included in the contrastive loss. The \emph{cross-view negatives} refer to pairs of embeddings from different views of different instances, \ie $(\vu_i, \vv_j)$ with $i \neq j$, whereas \textit{within-view negatives} are pairs from the same view but different instances, \ie $(\vu_i, \vu_j)$ or $(\vv_i, \vv_j)$ with $i \neq j$~\citep{shen2016semi}. Setting $c_1=1$ includes cross-view negatives, while $c_2=1$ includes within-view negatives. Figure~\ref{fig:c1c2} summarizes which types of negatives are incorporated for each configuration of $(c_1, c_2)$. Further discussion on the distinction between cross-view and within-view negatives in the single-modal case are provided in Appendix~\ref{sec:appendix:cross:within:negatives}, emphasizing that their key difference lies in how they are structurally incorporated into the loss, not in how they are generated.

\vskip 0.1in

\begin{remark}
    The first form of losses in Def.~\ref{def:loss:infonce} encompasses a variety of contrastive losses such as InfoNCE~\citep{oord2018representation, radford2021clip}, SimCLR~\citep{chen2020simple}, DCL~\citep{yeh2022decoupledcontrastivelearning}, and DHEL~\citep{koromilas2024bridging}, see Appendix~\ref{sec:appendix:loss:info}. The second form in Def.~\ref{def:loss:ind} includes contrastive losses such as SigLIP~\citep{zhai2023sigmoid} and Spectral CL~\citep{haochen2021provable}, see Appendix~\ref{sec:appendix:loss:pair}.

    The difference between the two forms of losses lies in computational efficiency. The first form in Def.~\ref{def:loss:infonce} necessitates simultaneous computation based on pairwise similarities across the entire set of embeddings due to the need for normalization, which becomes impractical for extremely large batch sizes. In contrast, the second form in Def.~\ref{def:loss:ind} is independently additive, allowing it to compute components of each pairwise similarity individually and aggregate them, making it applicable to larger datasets.
\end{remark}

\begin{figure*}[ht]
    \centering
    \resizebox{\textwidth}{!}{
    \begin{tabular}{cc}
    \hspace{60pt}
    Mini-batch loss: $\argmin \left\{ \loss \left(\vU_{[1:2]}, \vV_{[1:2]}\right) + \loss \left(\vU_{[3:4]}, \vV_{[3:4]}\right)  \right\}$
    \hspace{60pt}
    &
    Full-batch loss: $\argmin \left\{ \loss \left(\vU_{[1:4]}, \vV_{[1:4]}\right)  \right\}$
    \end{tabular}
    }
    \includegraphics[width=\textwidth]{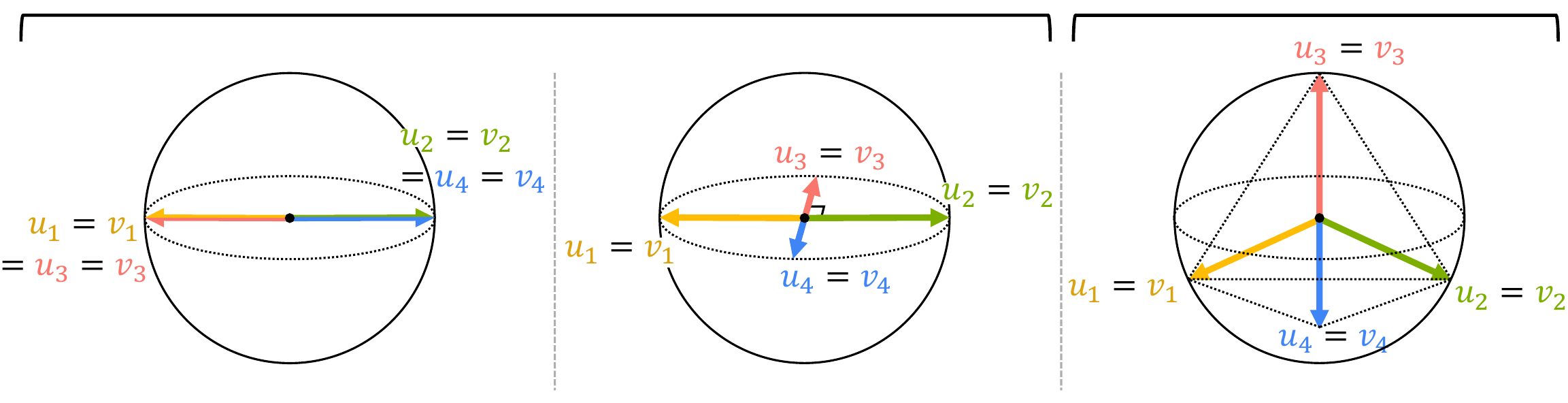}
    \resizebox{\textwidth}{!}{
    \begin{tabular}{ccc}
    (a) $\E_{i\neq j \in[4]}[\vu_i^\top \vv_j]=-\frac{1}{3}, \Var_{i\neq j \in[4]}[\vu_i^\top \vv_j]=\frac{8}{9}$ & (b) $\E_{i\neq j \in[4]}[\vu_i^\top \vv_j]=-\frac{1}{3}, \Var_{i\neq j \in[4]}[\vu_i^\top \vv_j]=\frac{2}{9}$ & (c) $\E_{i\neq j \in[4]}[\vu_i^\top \vv_j]=-\frac{1}{3}, \Var_{i\neq j \in[4]}[\vu_i^\top \vv_j]=0$ \\
    \end{tabular}
    }
    \caption{
    Visualization of three different cases of eight embeddings on the three-dimensional unit sphere. Positive embedding pairs are represented in the same color and share the same subscript, while negative pairs refer to any two embeddings with different subscripts.
    In (a) and (b), embeddings minimize the fixed mini-batch contrastive loss described in Theorem~\ref{thm:mini-batch}, with the batches partitioned as $\{\vu_1, \vv_1, \vu_2, \vv_2\}$ and $\{\vu_3, \vv_3, \vu_4, \vv_4\}$. In (c), the embeddings minimize the full-batch contrastive loss described in Theorem~\ref{thm:full:batch}. In all cases, the expectation of negative-pair similarities, $\E_{i\neq j \in[4]}[\vu_i^\top \vv_j]$, remains the same. However, the variance of negative-pair similarities, $\Var_{i\neq j \in[4]}[\vu_i^\top \vv_j]$, increases in mini-batch settings, indicating that some negative pairs are much more similar to each other while others are more dissimilar.
    }
    \label{fig:embeddings}
\end{figure*}

\section{Similarities of Embedding Pairs}
\label{sec:similarity}

In CL, the encoder is trained to bring positive embedding pairs closer together while pushing negative embedding pairs further apart. Accordingly, the cosine similarities of embedding pairs provide a straightforward way to evaluate how well representation achieves its objective. These similarities are formally defined as follows.

\begin{definition}[Similarities of Positive/Negative Pairs]
    \label{def:similarity}
    The similarity of embeddings of a positive pair is defined as
    \begin{equation*}
        \vs(f ; \hat{p}_{\mathrm{pos}}) := f(\vx)^\top f(\vy)
        \qquad
        \text{for} \quad (\vx, \vy) \sim \hat{p}_{\mathrm{pos}},
    \end{equation*}
    dubbed as the \emph{positive-pair similarity} for the encoder $f$. 
    Similarly, the similarity of embeddings of a negative pair is defined as
    \begin{equation*}
        \vs(f ; \hat{p}_{\mathrm{neg}}) := f(\vx)^\top f(\vy)
        \qquad
        \text{for} \quad (\vx, \vy) \sim \hat{p}_{\mathrm{neg}},
    \end{equation*}
    dubbed as the \emph{negative-pair similarity} for the encoder $f$. 
    We call both similarities as 
    \textit{embedding similarities}.
\end{definition}

Note that the similarities of embeddings in Def.~\ref{def:similarity}, denoted by $\vs(f ; \hat{p}_{\mathrm{pos}})$ and $\vs(f ; \hat{p}_{\mathrm{neg}})$, are random variables, where randomness arises from data sampling and augmentation. Specifically, a positive pair is generated by selecting an instance from a dataset of size $n$ and applying two random augmentations. Similarly, a negative pair is generated by randomly selecting an instance pair from the $n(n-1)$ possible combinations and independently applying random augmentations to each instance.

\paragraph{Expectation \& Variance of Negative-pair Similarities.}

Recall that the negative-pair similarity $\vs(f ; \hat{p}_{\mathrm{neg}})$ is a random variable. Here we investigate how the expectation and the variance of $\vs(f ; \hat{p}_{\mathrm{neg}})$ affect the learned embeddings.  
First, as the expectation
$\E \left[ \vs(f ; \hat{p}_{\mathrm{neg}}) \right]$
increases, the negative pairs are less separated, which typically degrades the representation quality. 
Second, 
the high variance $\Var \left[ \vs(f ; \hat{p}_{\mathrm{neg}}) \right]$ implies that some negative pairs are mapped unusually close, while others are mapped much farther apart.
Figure~\ref{fig:embeddings} shows the effect of the variance $\Var \left[ \vs(f ; \hat{p}_{\mathrm{neg}}) \right]$ of the negative-pair similarities. 
Here, we have three different cases of learned embeddings $\{\vu_i, \vv_i\}_{i\in [4]}$ in three dimensional space. While all three cases have same expectation $\E \left[ \vs(f ; \hat{p}_{\mathrm{neg}}) \right]$, the geometry of embeddings significantly changes depending on the variance $\Var \left[ \vs(f ; \hat{p}_{\mathrm{neg}}) \right]$. One can confirm that the rightmost case having equi-distant embeddings $\{\vu_i\}_{i\in [4]}$ achieves the \textit{zero} variance of negative-pair similarities.

\paragraph{Comparison with Existing Metrics.}

The similarities of embeddings in Def.~\ref{def:similarity} are related with metrics proposed in previous work.
For example, the \textit{alignment} metric used in ~\citep{wang2020understanding} can be represented as
\begin{equation}
    \label{eq:align}
    \E_{(\vu, \vv)\sim f_\sharp\hat{p}_{\op{pos}}}\left[\norm{\vu-\vv}_2^2\right] = - 2\E\left[ \vs(f ; \hat{p}_{\op{pos}}) \right] +2,
\end{equation}
which is related with the positive-pair similarity $\vs(f ; \hat{p}_{\op{pos}})$.
Here, the higher \textit{alignment} value indicates that representations are largely invariant to random noise factors introduced by augmentations.

On the other hand, the \textit{uniformity} metric used in ~\citep{wang2020understanding} is related with the negative-pair similarity $\vs(f ; \hat{p}_{\op{neg}})$, since it is defined by the logarithm of the Gaussian potential function~\citep{cohn2007universally} as
\begin{align}
    & \label{eq:uniform}
    \log \E_{\substack{ \vu \sim f_\sharp \hat{p}_x \\ \vv \sim f_\sharp \hat{p}_y } } 
    \left[ \exp \left( -\norm{\vu-\vv}_2^2 \right) \right]
    \\
    & \quad \qquad
    \approx \label{eq:uniform:approx}
    2 \left( 
    \E\left[ \vs(f ; \hat{p}_{\op{neg}}) \right] + \Var\left[ \vs(f ; \hat{p}_{\op{neg}}) \right] -1
    \right), 
\end{align}
where the approximation in \eqref{eq:uniform:approx} is detailed in Appendix~\ref{sec:appendix:approxmiation:uniformity}.
The \textit{uniformity} metric measures how representations are uniformly distributed on the unit hypersphere, and thus a lower uniformity value indicates that the representations preserve more information from the data.

\section{Behavior of Learned Embeddings}
\label{sec:similarity:optimal}

In this section, we interpret the behavior of embeddings trained by contrastive learning, through the lens of similarities of positive/negative pairs, denoted by $\vs(f ; \hat{p}_{\mathrm{pos}})$ and $\vs(f ; \hat{p}_{\mathrm{neg}})$, defined in Sec.~\ref{sec:similarity}. We begin by examining the full-batch CL and subsequently extend our analysis to the mini-batch CL.
Proofs for all statements in Sec.~\ref{sec:similarity:optimal:full:batch} and Sec.~\ref{sec:similarity:optimal:mini:batch} are provided in Appendix~\ref{appendix:full:batch} and~\ref{appendix:mini:batch}, respectively.

\subsection{Full-Batch Contrastive Learning}
\label{sec:similarity:optimal:full:batch}

Recall that we consider various contrastive losses which can be categorized into two parts: the InfoNCE-based contrastive loss in Def.~\ref{def:loss:infonce} and the independently additive contrastive loss in Def.~\ref{def:loss:ind}. Our first main result below provides the similarities of positive/negative pairs for the two types of contrastive losses, when the encoder is full-batch trained.

\begin{theorem}
    \label{thm:full:batch}
    Suppose $d\geq n-1$. Let the contrastive loss $\loss\!\left(\vU,\! \vV\right)$ be one of the following forms:
    \begin{enumerate}[leftmargin=*]
    \renewcommand{\labelenumi}{\roman{enumi}.}
        \item $\!\!\loss_{\op{info-sym}}\!\left(\vU,\! \vV\right)$ in Def.~\ref{def:loss:infonce}.
        \item \mbox{$\!\!\loss_{\op{ind-add}}\!\left(\vU,\!\vV\right)$ in Def.~\ref{def:loss:ind}, where $(c_1\!,c_2) \!\in \! \{ (0,\!1), (1,\!1)\}$}.
        \item $\!\!\loss_{\op{ind-add}}\!\left(\vU,\! \vV\right)$ in Def.~\ref{def:loss:ind}, where $(c_1,\! c_2)=(1,\!0)$ and $\phi' \left( 1 \right) > \frac{n-2}{2(n-1)}  \cdot  \psi' \left( -\frac{1}{n-1} \right)$.
    \end{enumerate}
    Then, in full-batch settings, the embedding similarities for the optimal encoder $f^\star$ in \eqref{eq:expected:loss} are
    \begin{align*}
        \vs(f^\star ; \hat{p}_{\op{pos}}) = 1,
        \qquad
        \vs(f^\star ; \hat{p}_{\op{neg}}) = - \frac{1}{n-1}
        .
    \end{align*}
\end{theorem}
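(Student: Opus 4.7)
The plan is to reduce each loss form to a scalar optimization in the aggregate quantities $\bar a := \mathbb{E}[\vs(f;\hat p_{\op{pos}})]$, $\bar b := \mathbb{E}[\vs(f;\hat p_{\op{neg}})]$, and the within-view mean $\bar c := \tfrac{1}{2n(n-1)}\sum_{i\ne j}(\vu_i^\top\vu_j + \vv_i^\top\vv_j)$, and then to show that this minimum is attained exactly when all positive similarities equal $1$ and all negative similarities equal $-\tfrac{1}{n-1}$. The reduction uses Jensen's inequality via the convexity/concavity hypotheses on $\phi,\psi$; the attainability uses the elementary frame bounds $\|\sum_i\vu_i\|^2\geq 0$, $\|\sum_i\vv_i\|^2\geq 0$, and $\|\sum_i(\vu_i+\vv_i)\|^2\geq 0$, which translate into $\bar c\geq -\tfrac{1}{n-1}$ and $(1+\bar a)+(n-1)(\bar b+\bar c)\geq 0$, together with $\bar a\leq 1$ by Cauchy--Schwarz. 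The equality conditions in Jensen then upgrade averages to pointwise similarity values, yielding the claim.

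For the independently additive loss (cases ii and iii), Jensen applied separately to each of the three sums in \eqref{eq:loss:ind} yields
\[
\loss_{\op{ind-add}}(\vU,\vV) \;\geq\; -\phi(\bar a) + c_1\,\psi(\bar b) + c_2\,\psi(\bar c),
\]
and monotonicity of $\phi,\psi$ combined with the bounds above drives $\bar a\to 1$, $\bar c\to -\tfrac{1}{n-1}$ (in case ii where $c_2=1$), and consequently $\bar b\to -\tfrac{1}{n-1}$. For the InfoNCE-based loss (case i), I would apply Jensen twice: first on the inner sum using convexity of $\phi$, rewriting $(\vv_j-\vv_i)^\top\vu_i = b_{ij}-a_i$ and $(\vu_j-\vv_i)^\top\vu_i = e_{ij}-a_i$, and then on the outer mean using convexity of $\psi$, obtaining
\[
\loss_{\op{info-sym}}(\vU,\vV) \;\geq\; \psi\!\bigl((n-1)\bigl[c_1\phi(\bar b-\bar a)+c_2\phi(\bar c-\bar a)\bigr]\bigr)
\]
and its $\vu\leftrightarrow\vv$ counterpart; the same scalar constraints pinpoint the ETF minimizer.

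The main obstacle is case iii of the independently additive loss ($c_1=1$, $c_2=0$), where $\bar c$ is absent from the loss, so configurations can in principle enlarge $\bar c$ to relax $\bar b\geq-\tfrac{1+\bar a}{n-1}-\bar c$ and achieve smaller $\bar b$ at the cost of $\bar a<1$. I plan to close this gap by studying the one-parameter perturbation $\vu_i(t)=\cos(t/2)\,\ve_i+\sin(t/2)\,\vw$, $\vv_i(t)=\cos(t/2)\,\ve_i-\sin(t/2)\,\vw$, where $\{\ve_i\}$ is the candidate simplex ETF and $\vw$ is any unit vector orthogonal to $\mathrm{span}\{\ve_i\}$ (which exists whenever $d\geq n$; for $d=n-1$ the perturbation is void and the ETF is already uniquely optimal). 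A direct calculation gives $\bar a(t)=\cos t$ and $\bar b(t)=-\tfrac{1+(n-2)\sin^2(t/2)}{n-1}$, and one checks that this family exactly saturates the tight-frame constraint, so it witnesses the extremal $(\bar a,\bar b)$ frontier. The second-order Taylor expansion at $t=0$ is
\[
\loss(t)-\loss(0) \;=\; \tfrac{t^2}{2}\Bigl[\phi'(1)-\tfrac{n-2}{2(n-1)}\psi'\!\bigl(-\tfrac{1}{n-1}\bigr)\Bigr] + o(t^2),
\]
so the hypothesized strict inequality is precisely what makes this coefficient positive, ruling out the perturbation as a descent direction and making the ETF a strict local (and, via the frontier argument, global) minimum. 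The trickiest bookkeeping will be verifying that this particular perturbation is indeed the worst-case direction, which I expect to follow from the observation above that it saturates the frame bound $(1+\bar a)+(n-1)(\bar b+\bar c)=0$ throughout. The Jensen equality analysis then promotes the similarity averages to pointwise values, completing the proof.
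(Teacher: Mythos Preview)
Your overall strategy---reduce via Jensen to the aggregate quantities $\bar a,\bar b,\bar c$, then invoke frame/norm bounds---matches the paper's approach, and for the cases with $c_2=1$ your constraint set $\bar a\le 1$, $\bar c\ge -\tfrac{1}{n-1}$, $(1+\bar a)+(n-1)(\bar b+\bar c)\ge 0$ is enough. The gap is in case iii (and the $(c_1,c_2)=(1,0)$ subcase of the InfoNCE loss), where $\bar c$ drops out and you need a constraint on $(\bar a,\bar b)$ alone.

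The three inequalities you list do \emph{not} determine the $(\bar a,\bar b)$ frontier: they allow $\bar c$ to float, so for any $\bar a<1$ they permit $\bar b$ far below the true minimum. Your perturbation family does trace the actual frontier, but what you are saturating is not the bound $(1+\bar a)+(n-1)(\bar b+\bar c)=0$; it is the sharper inequality
\[
\bar b \;\ge\; \frac{n-2}{2(n-1)}\,\bar a \;-\; \frac{n}{2(n-1)},
\]
which you never stated. This is the paper's key lemma for case iii, and it needs one more ingredient beyond your norm bounds: the Jensen step $\tfrac{1}{n}\sum_i\|\vu_i-\vv_i\|^2 \ge \|\tfrac{1}{n}\sum_i(\vu_i-\vv_i)\|^2$, combined with the polarization identity $\|\sum(\vu_i-\vv_i)\|^2 = \|\sum(\vu_i+\vv_i)\|^2 - 4(\sum\vu_i)^\top(\sum\vv_i)$ and then $\|\sum(\vu_i+\vv_i)\|^2\ge 0$. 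Equality requires $\vu_i-\vv_i$ constant in $i$ and $\sum(\vu_i+\vv_i)=0$, which is exactly why your family (with $\vu_i-\vv_i = 2\sin(t/2)\vw$) achieves it---but you need the inequality direction to argue global optimality, not just that the curve is feasible.

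Once you have this constraint, the paper's argument is simpler than your second-order perturbation: set $h(\bar a):=-\phi(\bar a)+\psi\bigl(\tfrac{(n-2)\bar a-n}{2(n-1)}\bigr)$, note that $h$ is convex, and compute $h'(1)=-\phi'(1)+\tfrac{n-2}{2(n-1)}\psi'(-\tfrac{1}{n-1})<0$ by hypothesis, so $h$ is minimized on $[-1,1]$ at $\bar a=1$. Your Taylor coefficient is exactly $-h'(1)$, so the two computations agree, but the convexity route gives global optimality directly and works uniformly for $d\ge n-1$ without the separate case $d=n-1$ you flagged.
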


According to Theorem~\ref{thm:full:batch},  all positive pairs achieve the perfect alignment, while all negative pairs are uniformly separated with the cosine similarity of $-\frac{1}{n-1}$. This is a generalized version of 
previous studies~\citep{lu2022neural, cho2024minibatch, lee2024analysis, koromilas2024bridging}, where we extend in two directions. First, the CL loss formulations (Def.~\ref{def:loss:infonce} and Def.~\ref{def:loss:ind}) we considered are a much broader class of losses compared with existing work. 
Second, we consider the randomness of $n$ embedding pairs in the optimization, where this randomness is introduced through augmentations. 
Now we analyze the behavior of embeddings for \textit{arbitrary} encoder $f$, that is not necessarily in the optimal status $f^{\star}$.  
The below result provides the relationship between the positive-pair similarity and the negative-pair similarity.

\begin{theorem}
    \label{thm:overexpansion}
    For any normalized encoder $f$,
    \begin{equation}
        \label{proposition:eq:n}
        \E\left[\vs \left(f; \hat{p}_{\op{pos}}\right) \right]
        \leq 1 + \left(
        \E \left[ \vs\left(f; \hat{p}_{\op{neg}}\right) \right]
        +\frac{1}{n-1}
        \right)
        ,
    \end{equation}
    where the equality in \eqref{proposition:eq:n} holds if and only if $\tr\left(\Var_{ (\vu, \vv) \sim f_\sharp \hat{p}_{\op{pos}}}[\vu-\vv]\right)$ $=0$ and $\E_{ \substack{ \vu \sim f_\sharp \hat{p}_{x} \\ \vv \sim f_\sharp \hat{p}_{y} } } [\vu+\vv] = \vzero$.
\end{theorem}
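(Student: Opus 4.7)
The plan is to reduce the target inequality to a manifestly non-negative expression by combining two identities, and then read off the equality case. Write $\vmu_x := \E_{\vu \sim f_\sharp \hat p_x}[\vu]$ and $\vmu_y := \E_{\vv \sim f_\sharp \hat p_y}[\vv]$ throughout.

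First I would establish the averaging identity
\[
n\,\vmu_x^\top \vmu_y \;=\; \E[\vs(f;\hat p_{\op{pos}})] + (n-1)\,\E[\vs(f;\hat p_{\op{neg}})],
\]
by interpreting $\vmu_x^\top \vmu_y$ as $\E[\vu^\top \vv]$ under the product marginal $\hat p_x \otimes \hat p_y$ and partitioning this product into its ``same-instance'' component (total mass $1/n$) and its ``different-instances'' component (total mass $(n-1)/n$): because the two augmentations within a positive pair are drawn independently, the same-instance part collapses to $\tfrac1n\,\E[\vs(f;\hat p_{\op{pos}})]$, while the remaining part is $\tfrac{n-1}{n}\,\E[\vs(f;\hat p_{\op{neg}})]$ by definition. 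Second, using $\|\vu\|_2 = \|\vv\|_2 = 1$ to rewrite $\vu^\top\vv = 1 - \tfrac12\|\vu - \vv\|^2$, taking the $\hat p_{\op{pos}}$-expectation, and applying the bias–variance decomposition $\E\|\vX\|^2 = \|\E\vX\|^2 + \tr(\Var \vX)$ to $\vX = \vu - \vv$, I obtain
\[
\E[\vs(f;\hat p_{\op{pos}})] \;=\; 1 - \tfrac12 \|\vmu_x - \vmu_y\|^2 - \tfrac12\,\tr\bigl(\Var_{f_\sharp \hat p_{\op{pos}}}[\vu - \vv]\bigr).
\]

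I would then solve the averaging identity for $(n-1)\,\E[\vs(f;\hat p_{\op{neg}})]$, substitute it together with the bias–variance expression into the quantity $\bigl(1 + \E[\vs(f;\hat p_{\op{neg}})] + \tfrac{1}{n-1}\bigr) - \E[\vs(f;\hat p_{\op{pos}})]$, and simplify using the algebraic rearrangement $\vmu_x^\top \vmu_y + \tfrac12\|\vmu_x - \vmu_y\|^2 = \tfrac12\bigl(\|\vmu_x\|^2 + \|\vmu_y\|^2\bigr)$. The expression collapses to
\[
\frac{n}{2(n-1)}\Bigl(\|\vmu_x\|^2 + \|\vmu_y\|^2 + \tr\bigl(\Var_{f_\sharp\hat p_{\op{pos}}}[\vu - \vv]\bigr)\Bigr),
\]
which is non-negative term-by-term, establishing inequality (5). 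For the equality clause, all three summands must vanish simultaneously: the trace term is one of the stated conditions, while $\|\vmu_x\|^2 + \|\vmu_y\|^2 = 0$ combined with the almost-sure constancy of $\vu - \vv$ (forcing $\vmu_x - \vmu_y$ to coincide with that constant) together with the stated $\vmu_x + \vmu_y = \vzero$ yields the equality characterization.

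The main obstacle is the averaging identity in the first step: it tacitly uses that the two augmentations defining a positive pair are drawn independently conditional on the instance, which is standard in contrastive learning but not always made explicit in the abstract setup here. Without it, an additional correction $\tfrac1n\sum_i \tr\bigl(\Cov_{\hat p_{\op{pos}}^i}(\vu,\vv)\bigr)$ appears and must be carried through the subsequent algebraic simplification; the rest of the argument is routine manipulation.
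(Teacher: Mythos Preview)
Your route is essentially the paper's: combine the averaging identity $\hat p_x\otimes\hat p_y=\tfrac1n\hat p_{\op{pos}}+\tfrac{n-1}{n}\hat p_{\op{neg}}$ (Proposition~D.1 in the appendix) with a bias--variance expansion of $\E\|\vu-\vv\|^2$ under $\hat p_{\op{pos}}$, and read the inequality off a non-negative residual. Your computation of that residual,
\[
\frac{n}{2(n-1)}\Bigl(\|\vmu_x\|^2+\|\vmu_y\|^2+\tr\Var_{f_\sharp\hat p_{\op{pos}}}[\vu-\vv]\Bigr),
\]
is correct. Note, however, that the paper's Lemma~D.3/D.4 records the residual as $\tfrac{n}{2(n-1)}\bigl(\|\vmu_x+\vmu_y\|^2+\tr\Var\bigr)$; there is a factor-of-two slip in the rearrangement step of the proof of Lemma~D.3, and your form is the right one.

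This is exactly why your equality paragraph becomes muddled: your residual vanishes iff $\tr\Var=0$ \emph{and} $\vmu_x=\vmu_y=\vzero$, which is strictly stronger than the stated condition $\vmu_x+\vmu_y=\vzero$. The stated condition is only necessary, not sufficient---e.g.\ with $n=2$, $d=1$ and deterministic embeddings $\vu_1=\vu_2=1$, $\vv_1=\vv_2=-1$, one has $\tr\Var[\vu-\vv]=0$ and $\vmu_x+\vmu_y=0$, yet $\E[\vs_{\op{pos}}]=-1<1=1+\E[\vs_{\op{neg}}]+\tfrac1{n-1}$. So do not try to ``yield the equality characterization'' as stated; simply report what your own non-negative form gives. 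Finally, your caveat about conditional independence of the two augmentations is well taken, but it is precisely what the paper packages into Proposition~D.1, so you can invoke that directly rather than flag it as an obstacle.
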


Theorem~\ref{thm:overexpansion} highlights the relationship between the expectation of positive-pair similarities, $\E\left[\vs \left(f; \hat{p}_{\op{pos}}\right) \right]$, and that of negative-pair similarities, $\E\left[\vs \left(f; \hat{p}_{\op{neg}}\right) \right]$. When the average of negative-pair similarities drops below $-\frac{1}{n-1}$, the positive pairs cannot be fully aligned, which is not desired. We call such phenomenon as the \emph{excessive separation of negative pairs in full-batch CL}, since the average of negative-pair similarities drops below $-\frac{1}{n-1}$ when negative pairs are more separated compared with the optimal status specified in Theorem~\ref{thm:full:batch}. 
This issue may arise when certain losses are used, as shown in the following theorem.

\begin{theorem}[Excessive Separation in Full-Batch CL]
    \label{thm:full:batch:overexpansion}
    Suppose that $d\geq n$. Consider the contrastive loss $\loss\left(\vU, \vV\right)$ in the form of $\loss_{\op{ind-add}}\left(\vU, \vV\right)\;$ in Def.~\ref{def:loss:ind}, where $(c_1, c_2)=(1,0)$ and 
    \begin{equation}
        \label{eq::full:batch:overexpansion:condition}
        \phi' \left( 1 \right) < \frac{n-2}{2(n-1)}  \cdot  \psi' \left( -\frac{1}{n-1} \right)
        .
    \end{equation}
    Then, under full-batch settings, the embedding similarities for the optimal encoder $f^\star$ in \eqref{eq:expected:loss} satisfy 
    \begin{align*}
        \vs(f^\star ; \hat{p}_{\op{pos}}) < 1,
        \qquad
        \vs(f^\star ; \hat{p}_{\op{neg}}) < - \frac{1}{n-1}.
    \end{align*}
\end{theorem}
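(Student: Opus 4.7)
The plan is to show that the simplex ETF configuration described in Theorem~\ref{thm:full:batch} is no longer optimal once the inequality in Theorem~\ref{thm:full:batch}(iii) is reversed to~\eqref{eq::full:batch:overexpansion:condition}, by constructing an explicit one-parameter family of encoders whose expected loss is strictly lower than the ETF value at first order, and then converting this strict loss gap into the two desired strict similarity inequalities via Jensen's inequality and Theorem~\ref{thm:overexpansion}.

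For the perturbation, since $d\geq n$ we may fix a simplex ETF $\{\ve_i\}_{i=1}^n\subset\sR^{n-1}$ with $\ve_i^\top\ve_j=-\tfrac{1}{n-1}$ for $i\neq j$, together with a unit vector $\vz$ orthogonal to $\op{span}\{\ve_1,\ldots,\ve_n\}$. Consider the encoder $f_t$ that maps the two views of instance $i$ to $\vu_i(t)=\sqrt{1-t}\,\ve_i+\sqrt{t}\,\vz$ and $\vv_i(t)=\sqrt{1-t}\,\ve_i-\sqrt{t}\,\vz$ respectively. A direct calculation gives $\vu_i(t)^\top\vv_i(t)=1-2t$ and $\vu_i(t)^\top\vv_j(t)=-\tfrac{1+(n-2)t}{n-1}$ for $i\neq j$, so the expected loss of $f_t$, which I abbreviate as $\bar{\loss}(f_t)$, reduces to
\begin{equation*}
\bar{\loss}(f_t)=-\phi(1-2t)+\psi\!\left(-\tfrac{1+(n-2)t}{n-1}\right).
\end{equation*}
Differentiating at $t=0$ yields $2\phi'(1)-\tfrac{n-2}{n-1}\psi'(-1/(n-1))$, which is strictly negative by~\eqref{eq::full:batch:overexpansion:condition}. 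Consequently $\bar{\loss}(f^\star)\leq\bar{\loss}(f_{t^\star})<\bar{\loss}(f_0)=-\phi(1)+\psi(-1/(n-1))$ for sufficiently small $t^\star>0$.

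For the conclusion, the concavity of $\phi$ and convexity of $\psi$ yield, via Jensen's inequality,
\begin{equation*}
\bar{\loss}(f^\star)\geq-\phi\!\left(\E[\vs(f^\star;\hat{p}_{\op{pos}})]\right)+\psi\!\left(\E[\vs(f^\star;\hat{p}_{\op{neg}})]\right).
\end{equation*}
Both $\phi$ and $\psi$ are increasing, and $\E[\vs(f^\star;\hat{p}_{\op{pos}})]\leq 1$ unconditionally; hence if $\E[\vs(f^\star;\hat{p}_{\op{neg}})]\geq-\tfrac{1}{n-1}$ the right-hand side would be at least $-\phi(1)+\psi(-1/(n-1))$, contradicting the previous step. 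Therefore $\E[\vs(f^\star;\hat{p}_{\op{neg}})]<-\tfrac{1}{n-1}$, and substituting into Theorem~\ref{thm:overexpansion} gives $\E[\vs(f^\star;\hat{p}_{\op{pos}})]\leq 1+\E[\vs(f^\star;\hat{p}_{\op{neg}})]+\tfrac{1}{n-1}<1$; both strict inequalities then carry over to the random variables $\vs(f^\star;\cdot)$, since $\vs(f^\star;\hat{p}_{\op{pos}})\leq 1$ a.s.\ bounds the support of positive-pair similarities.

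The main subtlety I anticipate is justifying that $f_t$ is a legitimate encoder in the paper's formalism: a positive pair $(\vx,\vy)\sim\hat{p}_{\op{pos}}^i$ is a symmetric sample from the augmentation distribution, whereas the construction assigns $\vu_i(t)$ and $\vv_i(t)$ asymmetrically to its two arguments. This is handled in the same manner as the existence argument underlying Theorem~\ref{thm:full:batch}, either by appealing to a sufficiently expressive encoder class in which the two augmented views can be distinguished, or by working directly in terms of feasible joint distributions of embeddings; under either framing $\bar{\loss}(f_t)$ takes the displayed form and the first-order calculation goes through unchanged.
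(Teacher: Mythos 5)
Your perturbation argument and the Jensen-based contradiction are sound as far as they go: the family $\vu_i(t)=\sqrt{1-t}\,\ve_i+\sqrt{t}\,\vz$, $\vv_i(t)=\sqrt{1-t}\,\ve_i-\sqrt{t}\,\vz$ is admissible under $d\geq n$, the first-order computation $2\phi'(1)-\tfrac{n-2}{n-1}\psi'(-\tfrac{1}{n-1})<0$ is exactly condition~\eqref{eq::full:batch:overexpansion:condition}, and combining the resulting strict loss gap with concavity of $\phi$, convexity of $\psi$, and Theorem~\ref{thm:overexpansion} correctly yields $\E\left[\vs(f^\star;\hat{p}_{\op{neg}})\right]<-\tfrac{1}{n-1}$ and $\E\left[\vs(f^\star;\hat{p}_{\op{pos}})\right]<1$. (The realizability caveat you flag is acceptable; the paper itself delegates existence to Proposition 1 of \citet{lee2024analysis}.) This is a genuinely different route from the paper, which instead chains Jensen with Lemma~\ref{thm:pos:neg:inequality} to lower-bound the loss by the one-dimensional convex function $h(x)=-\phi(x)+\psi\!\left(\tfrac{n-2}{2(n-1)}x-\tfrac{n}{2(n-1)}\right)$ and reads off the optimum from $h'(1)>0$.

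The gap is in your last step. The theorem's conclusion is stated about the similarity random variables themselves (compare Theorem~\ref{thm:full:batch}, and note that Theorem~\ref{thm:mini-batch} writes $\E[\cdot]$ explicitly when only an expectation is meant): at the optimum \emph{every} positive pair has similarity below $1$ and \emph{every} negative pair below $-\tfrac{1}{n-1}$. From $\E[X]<1$ and $X\leq 1$ a.s.\ you can only conclude $\Pr\{X<1\}>0$, not $X<1$ a.s.; and for the negative-pair similarity there is no a.s.\ bound at $-\tfrac{1}{n-1}$ at all, so the claimed ``carry over to the random variables'' has no justification for either inequality. To close this you need to characterize the minimizer rather than just bound its loss: as in the paper's proof, the expected loss is lower-bounded by $h$ evaluated at the average positive similarity, $h$ is convex with $h'(1)>0$ under~\eqref{eq::full:batch:overexpansion:condition} so its minimizer over $[-1,1]$ is some $x^\star<1$, and this bound is attainable for $d\geq n$ only by configurations meeting the equality conditions of Lemma~\ref{thm:pos:neg:inequality} and of both Jensen steps ($\vu_i-\vv_i$ constant, zero centroid, constant arguments of $\phi$ and $\psi$), which force the positive-pair similarity to equal $x^\star<1$ and all negative-pair similarities to equal $\tfrac{n-2}{2(n-1)}x^\star-\tfrac{n}{2(n-1)}<-\tfrac{1}{n-1}$. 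Your perturbation argument is a clean substitute for showing the ETF value is not optimal, but without this attainment/equality-condition analysis it proves only the expectation version of the theorem.
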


The inequality condition on $\phi$ and $\psi$ in \eqref{eq::full:batch:overexpansion:condition} explains why the loss $\loss_{\op{ind-add}}(\vU, \vV)$ in Def.~\ref{def:loss:ind} causes the excessive separation of negative pairs. Specifically, $\loss_{\op{ind-add}}(\vU, \vV)$ is formulated as the sum of $\psi(\cdot)$ over negative pairs and $-\phi(\cdot)$ over positive pairs. Therefore, $\psi'(-\frac{1}{n-1})$ indicates how much the loss decreases if the negative-pair similarity falls below $-\frac{1}{n-1}$, while $\phi'(1)$ measures how much the loss increases if the positive-pair similarity drops below $1$. When the condition in \eqref{eq::full:batch:overexpansion:condition} is satisfied, ignoring the scaling factor, the loss reduction from separating negative pairs beyond the similarity threshold of $-\frac{1}{n-1}$ can be greater than the loss increase from reducing positive-pair similarity below $1$. As a result, the optimization process favors pushing negative pairs even further apart, leading to the excessive separation. We provide a specific loss where this issue arises:

\begin{example}
    Consider the sigmoid contrastive loss $\loss_{\op{sig}}(\vU, \vV)$~\citep{zhai2023sigmoid}, defined as
    \begin{align}
        \loss_{\op{sig}} (\vU, \vV)\! 
        &\nonumber
        := \!\frac{1}{n} \sum_{i \in [n]} \log \left(1 + \exp\left(- t \vu_i^\top \vv_i\right)\cdot\exp(b) \right)
        \\
        \label{eq:siglip}
        &\quad + \frac{1}{n} \! \sum_{i \neq j \in [n]} \!\!\log \left( 1 \! + \! \exp\left( t \vu_i^\top \vv_j \right) \! \cdot \! \exp(-b) \right) \! ,
    \end{align}
    where $t > 0$ and $b \in \mathbb{R}$ are hyperparameters.
    This loss follows the form of $\loss_{\op{ind-add}}\left(\vU, \vV\right)$ in Def.~\ref{def:loss:ind}, where $(c_1, c_2)=(1,0)$, $\phi(x)=-\log(1+\exp(-t x + b))$, and $\psi(x)=(n-1)\cdot\log(1+\exp(t x-b))$.
    If hyperparameters $t$ and $b$ are chosen such that 
    \begin{equation}
        \label{eq:hyperparameter:sigmoid}
        \frac{1+\exp\left(\frac{t}{n-1}+b\right)}{1 + \exp(t-b)} < \frac{n-2}{2},
    \end{equation}
    then embedding similarities for the full-batch optimal encoder $f^\star$ in \eqref{eq:expected:loss} satisfy
    \begin{align*}
        \vs(f^\star ; \hat{p}_{\op{pos}}) < 1,
        \qquad
        \vs(f^\star ; \hat{p}_{\op{neg}}) < - \frac{1}{n-1}.
    \end{align*}
\end{example}

Note that~\eqref{eq:hyperparameter:sigmoid} is just a rephrase of~\eqref{eq::full:batch:overexpansion:condition} by plugging in $\phi$ and $\psi$ for the sigmoid contrastive loss. 
When the hyperparameter $b$ of the sigmoid contrastive loss is sufficiently small, the condition in \eqref{eq:hyperparameter:sigmoid} is satisfied, thus the learned embedding suffers from the excessive separation issue. This can be also explained by the sigmoid contrastive loss formula. In \eqref{eq:siglip}, the relative weight of second term (compared to the first term) increases as $b$ decreases. In such case, minimizing the negative-pair similarity becomes more important. Consequently, decreasing $b$ induces the excessive separation of negative pairs.

\paragraph{Mitigating Excessive Separation.}

A natural question that arises is whether the excessive separation of negative pairs in full-batch CL can be mitigated. According to Theorem~\ref{thm:full:batch} and Theorem~\ref{thm:full:batch:overexpansion}, this issue depends on the specific form of the contrastive loss. In particular, under the independently additive loss $\loss_{\op{ind-add}}(\vU, \vV)$, the optimal embeddings do not suffer from excessive separation when $c_2 = 1$ (i.e., case (ii) of Theorem~\ref{thm:full:batch}), whereas the issue arises when $c_2 = 0$ and condition~\eqref{eq::full:batch:overexpansion:condition} holds.

This observation suggests two potential solutions to the excessive separation problem. The first is to set $c_2 = 1$ in the loss, effectively incorporating within-view negative pairs. The second is to tune hyperparameters such that condition~\eqref{eq::full:batch:overexpansion:condition} does not hold, \ie case (iii) in Theorem~\ref{thm:full:batch}. While both approaches are theoretically valid, the former offers a more principled and practical remedy. In contrast, the latter lacks clear guidance for selecting suitable hyperparameters and may incur significant computational overhead. Therefore, we advocate including within-view negative pairs as a straightforward and effective strategy to avoid excessive separation in full-batch CL.

\subsection{Mini-Batch Contrastive Learning}
\label{sec:similarity:optimal:mini:batch}

In Sec.~\ref{sec:similarity:optimal:full:batch}, we show that when a proper contrastive loss is chosen for full-batch settings, the learned embedding satisfies the following behavior: all negative pairs have the cosine similarity of $-\frac{1}{n-1}$ and all positive pairs are perfectly aligned. What about the practical scenarios when we use mini-batches for training? Suppose $n$ training samples are partitioned into $b$ mini-batches where each batch contains $m:= n/b$ samples. Consider training the embeddings by under the \textit{fixed} mini-batch configuration, where $k$-th mini-batch contains the samples with indices in 
$\vI_k = \{m(k-1)+1, m(k-1)+2, \cdots, mk \}$. Under this scenario, the below theorem analyzes the behavior of embeddings learned by mini-batch training.

\begin{theorem}[Excessive Separation in Mini-Batch CL]
   \label{thm:mini-batch}
   Suppose $d\geq m-1$.
   Let the contrastive loss $\loss\left(\vU, \vV\right)$ be one of the forms in Theorem~\ref{thm:full:batch}.
   Define $f^\star_{\op{batch}}$ as the optimal encoder that minimizes the fixed mini-batch loss, given by
    \begin{align*}
        f^\star_{\op{batch}} &:= \argmin_f 
             \E_{(\vU, \vV) \sim f_\sharp \hat{p}_{\op{pos}}^{[n]} }
            \left[ \sum_{k\in[b]}
            \loss \left(\vU_{\vI_k}, \vV_{\vI_k}\right) \right],
    \end{align*}
    where $\vI_k := [m(k-1)+1 : mk]\;$ for $k\in [b]$.
    Then, embedding similarities for the optimal encoder $f^\star_{\op{batch}}$ satisfy
    \begin{align}
        \nonumber
        &\vs(f^\star_{\op{batch}} ; \hat{p}_{\op{pos}}) = 1,
        \\
        \nonumber
        &\E \left[\vs(f^\star_{\op{batch}} ; \hat{p}_{\op{neg}})\right] = - \frac{1}{n-1},
        \\ 
        \label{eq:neg:var}
        &\Var \left[\vs(f^\star_{\op{batch}} ; \hat{p}_{\op{neg}})\right] 
        \in \! \left[
            \scalebox{0.98}{$\frac{n-m}{(m-1)(n-1)^2}$},
            \scalebox{0.98}{$\frac{n \,(n - m)}{(m-1)(n - 1)^{2}}$}
        \right]
        \!.
    \end{align}
    A necessary condition for attaining the minimum variance of negative-pair similarities in \eqref{eq:neg:var} is $d \geq b(m-1)$.
\end{theorem}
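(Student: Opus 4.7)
The plan is to reduce the fixed mini-batch problem to $b$ decoupled full-batch problems, apply Theorem~\ref{thm:full:batch} inside each batch, and then aggregate within-batch and between-batch contributions to recover the moments of the negative-pair similarity. Since the objective $\sum_{k\in[b]} \loss(\vU_{\vI_k}, \vV_{\vI_k})$ is a sum of terms that share no samples, a global minimizer of the sum can be obtained by optimizing each summand independently. Each summand has the form of a full-batch loss on $m$ samples, so Theorem~\ref{thm:full:batch} applied with $n$ replaced by $m$ (valid because $d \geq m-1$) gives that inside each batch $\vI_k$ the optimal encoder satisfies $\vu_i = \vv_i$ for all $i \in \vI_k$ and $\vu_i^\top \vu_j = -1/(m-1)$ for all $i \neq j$ in $\vI_k$. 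This immediately yields $\vs(f^\star_{\op{batch}}; \hat{p}_{\op{pos}}) = 1$. Moreover, the simplex-ETF structure forces $\sum_{i\in\vI_k}\vu_i = \vzero$ within every batch.

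For the expectation, I partition the $n(n-1)$ negative pairs into $n(m-1)$ within-batch pairs (each with similarity $-1/(m-1)$) and $n(n-m)$ between-batch pairs. Using $\vu_i = \vv_i$ together with the centroid identity,
\begin{equation*}
\sum_{k \neq \ell} \sum_{i \in \vI_k,\, j \in \vI_\ell} \vu_i^\top \vv_j
= \sum_{k \neq \ell} \Bigl(\sum_{i \in \vI_k} \vu_i\Bigr)^{\!\top} \Bigl(\sum_{j \in \vI_\ell} \vu_j\Bigr) = 0,
\end{equation*}
so the between-batch contribution vanishes, yielding $\E[\vs(f^\star_{\op{batch}};\hat{p}_{\op{neg}})] = -1/(n-1)$. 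Defining $T := \sum_{k \neq \ell}\sum_{i\in\vI_k,\, j\in\vI_\ell}(\vu_i^\top \vu_j)^2$, the same within/between decomposition of $\E[\vs^2]$, combined with $(\E[\vs])^2 = 1/(n-1)^2$, gives
\begin{equation*}
\Var[\vs(f^\star_{\op{batch}};\hat{p}_{\op{neg}})] = \frac{n-m}{(m-1)(n-1)^2} + \frac{T}{n(n-1)}.
\end{equation*}
It therefore suffices to show that $T$ ranges exactly over $[0,\, n(n-m)/(m-1)]$ and to characterize when $T=0$ is feasible.

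To bound $T$, let $V_k$ be the $(m-1)$-dimensional span of the $k$-th batch; since this batch forms a simplex ETF, $P_k := \sum_{i\in\vI_k}\vu_i\vu_i^\top = \tfrac{m}{m-1}\,\Pi_{V_k}$, where $\Pi_{V_k}$ is the orthogonal projection onto $V_k$. The identity $T = \|\sum_k P_k\|_F^2 - \sum_k \|P_k\|_F^2$ then reduces the problem to bounding $\|M\|_F^2$ with $M := \sum_k \Pi_{V_k}$. Observe that $M$ is PSD with $\tr(M) = b(m-1)$, and since $M$ is a sum of $b$ projections, all its eigenvalues lie in $[0,b]$. Optimizing $\sum_i \lambda_i^2$ under these constraints via a standard majorization argument gives $\|M\|_F^2 \in [b(m-1),\, b^2(m-1)]$: the lower extreme is attained exactly when $M$ is itself a projection, equivalently when the $V_k$ are mutually orthogonal, and the upper extreme when all $V_k$ coincide. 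Substituting back yields the stated interval for $\Var[\vs]$; since mutual orthogonality of $b$ subspaces of dimension $m-1$ demands $d \geq b(m-1)$, this is the necessary dimensional condition for attaining the minimum variance.

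The main obstacle I anticipate is the last step: one has to recognize that each $P_k$ is precisely a scaled projection onto an $(m-1)$-dimensional subspace, so that the feasibility set for $M$ is exactly the class of sums of $b$ rank-$(m-1)$ projections, which is what pins down the sharp eigenvalue constraints $\lambda_i \in [0,b]$. Once this structure is in hand, the eigenvalue optimization is routine; the rest is careful bookkeeping of within-batch versus between-batch pair counts and the centroid-cancellation identity that makes the cross-batch expectation collapse cleanly.
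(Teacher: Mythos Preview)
Your high-level strategy matches the paper's: decouple the batches, apply Theorem~\ref{thm:full:batch} inside each to obtain a per-batch $(m-1)$-simplex ETF, use the per-batch centroid identity $\sum_{i\in\vI_k}\vu_i=\vzero$ to collapse the between-batch first-moment contribution, and then split the second moment into within-batch and between-batch pieces. Your projection-matrix packaging $P_k=\tfrac{m}{m-1}\Pi_{V_k}$ is a cleaner formulation than the paper's (which parametrizes each batch as an orthogonal rotation of a fixed simplex $\vW$ and bounds the cross terms $\|\vW^\top(\vP^{(k_1)})^\top\vP^{(k_2)}\vW\|_F^2$ via SVD), but the two routes are computing the same quantity and the resulting variance formula and endpoints agree.

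There is, however, one genuine gap. The lower bound $\|M\|_F^2\ge b(m-1)$ does \emph{not} follow from the eigenvalue constraints you list ($M$ PSD, $\tr M=b(m-1)$, spectrum in $[0,b]$): under those constraints alone, the minimum of $\sum_i\lambda_i^2$ is $(b(m-1))^2/d$, attained at $\lambda_i\equiv b(m-1)/d$, and this is strictly smaller than $b(m-1)$ whenever $d>b(m-1)$. A ``standard majorization argument'' over that eigenvalue box therefore cannot deliver the lower endpoint you claim. What actually pins the lower bound is the extra structure that $M$ is a sum of projections, and the simplest way to use it is to bypass the eigenvalue detour entirely: by your own definition,
\[
T \;=\; \sum_{k\ne\ell}\;\sum_{i\in\vI_k,\,j\in\vI_\ell}(\vu_i^\top\vu_j)^2 \;=\; \sum_{k\ne\ell}\tr(P_kP_\ell)\;\ge\;0,
\]
with equality iff the batch subspaces $V_k$ are pairwise orthogonal; equivalently, $\|M\|_F^2=\sum_k\tr(\Pi_{V_k}^2)+\sum_{k\ne\ell}\tr(\Pi_{V_k}\Pi_{V_\ell})\ge b(m-1)$ since each cross term is nonnegative. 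This is also how the paper handles the lower bound (each cross Frobenius norm is trivially $\ge 0$). Your upper-bound step, $\sum_i\lambda_i^2\le b\sum_i\lambda_i=b^2(m-1)$ with equality when all $V_k$ coincide, is correct as stated.
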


According to Theorem~\ref{thm:mini-batch}, the embeddings learned by mini-batch settings have the following behaviors. First, the positive-pair similarity is equal to 1, \ie all positive pairs are fully aligned. Second, the \textit{expectation} of negative-pair similarities is equal to $-\frac{1}{n-1}$, which happens for full-batch settings as well. Third, unlike full-batch settings, the negative-pair similarity is \textit{not uniform} across the pairs, \ie the variance is positive when the mini-batch size $m$ is strictly less than the sample size $n$. Thus, the effect of using mini-batch (compared with using full-batch) is in the increased variance of negative-pair similarities. Throughout the paper, we call such phenomenon as the \emph{excessive separation of negative pairs in mini-batch CL}.

Figure~\ref{fig:embeddings} visualizes the effect of using mini-batches, compared with full-batch settings, when $n=4$ and $m=2$. For full-batch settings, shown in (c) of Figure~\ref{fig:embeddings}, the variance of negative-pair similarities is zero, indicating that all negative pairs are equi-distant. In contrast, (a) and (b) of Figure~\ref{fig:embeddings} illustrate the embeddings learned by mini-batch settings, where the fixed mini-batches are specified as $\left(\vU_{[1:2]}, \vV_{[1:2]}\right)$ and $\left(\vU_{[3:4]}, \vV_{[3:4]}\right)$. For both (a) and (b), the variance of negative-pair similarities is positive, where (a) represents the case that achieves the highest variance in~\eqref{eq:neg:var}, and (b) corresponds to the case that achieves the lowest variance.

\paragraph{Effect of Batch Size.}
Note that the variance of negative-pair similarities in~\eqref{eq:neg:var} depends on the batch size $m$. For example, in the full-batch case where $m = n$, the upper bound in ~\eqref{eq:neg:var} is zero, which is consistent with Theorem~\ref{thm:full:batch}. One can confirm that the upper and lower bounds on the variance is a monotonically decreasing function of $m$, which implies that smaller batch sizes inherently exacerbates the excessive separation of negative pairs in mini-batch settings.

The below theorem analyzes the effect of batch size on the training dynamics, when a popular CL loss is used:
\begin{theorem}
\label{thm:gradient-analysis}
    Consider the InfoNCE loss $\loss_{\op{InfoNCE}} \left( \vU , \vV\right)$~\citep{oord2018representation}, which corresponds to the loss $\loss_{\op{info-sym}}\left(\vU, \vV\right)$ in Def.~\ref{def:loss:infonce} where $\phi(x)=\exp(x/t)$ for some $t>0$, $\psi(x)=\log(1+x)$, and $(c_1, c_2)=(1,0)$.
    For any two integers $m_1, m_2 \in [n]$ satisfying $m_1 \leq m_2$, the gradient of the InfoNCE loss with respect to the negative-pair similarity satisfies 
    \begin{align}
        \nonumber
        0
        &\leq
        \frac{\partial }{ \partial \left(\vu_i^\top \vv_j\right)} \loss_{\op{InfoNCE}} \left( \vU_{[m_2]}, \vV_{[m_2]} \right)
        \\ &\leq
        \frac{\partial }{ \partial \left(\vu_i^\top \vv_j\right)} \loss_{\op{InfoNCE}}  \left( \vU_{[m_1]}, \vV_{[m_1]} \right)
        \label{eq:gradient:neg:pair}
    \end{align}
    for any distinct indices $i \neq j \in [m]$.
    Moreover, the equality in \eqref{eq:gradient:neg:pair} holds if and only if $m_1 =  m_2$.
\end{theorem}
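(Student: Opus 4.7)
The plan is to reduce both inequalities in \eqref{eq:gradient:neg:pair} to a simple monotonicity statement about a closed-form expression for $\partial \loss_{\op{InfoNCE}}/\partial(\vu_i^\top \vv_j)$ obtained by direct differentiation.

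First, I would unpack the loss. Substituting $\phi(x)=\exp(x/t)$, $\psi(x)=\log(1+x)$, and $(c_1,c_2)=(1,0)$ into Def.~\ref{def:loss:infonce}, and folding the constant $1$ inside $\log(1+\cdot)$ into the sum via $1 = \exp((\vu_k^\top\vv_k - \vu_k^\top\vv_k)/t)$, I would rewrite the asymmetric component in the familiar log-softmax form
\[
\loss_{\op{info}}(\vU_{[m]},\vV_{[m]}) = \frac{1}{m}\sum_{k\in[m]}\left[\log\sum_{l\in[m]} \exp\!\left(\tfrac{\vu_k^\top\vv_l}{t}\right) - \tfrac{1}{t}\vu_k^\top\vv_k\right],
\]
and analogously for $\loss_{\op{info}}(\vV_{[m]},\vU_{[m]})$. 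For a fixed pair of distinct indices $i,j\in[m_1]$, the scalar $\vu_i^\top\vv_j$ appears only in the $k=i$ summand of $\loss_{\op{info}}(\vU_{[m]},\vV_{[m]})$ and only in the $k=j$ summand of $\loss_{\op{info}}(\vV_{[m]},\vU_{[m]})$. Differentiating each and averaging gives
\[
\frac{\partial \loss_{\op{InfoNCE}}(\vU_{[m]},\vV_{[m]})}{\partial(\vu_i^\top\vv_j)} = \frac{\exp(\vu_i^\top\vv_j/t)}{2mt}\left[\frac{1}{S_i(m)} + \frac{1}{T_j(m)}\right],
\]
where I abbreviate $S_i(m):=\sum_{l\in[m]}\exp(\vu_i^\top\vv_l/t)$ and $T_j(m):=\sum_{l\in[m]}\exp(\vu_l^\top\vv_j/t)$.

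From this closed form both parts of \eqref{eq:gradient:neg:pair} follow. Non-negativity is immediate because every factor is strictly positive. For the upper bound, as $m$ increases from $m_1$ to $m_2$, the prefactor $1/m$ is non-increasing, while each of $S_i(m)$ and $T_j(m)$ is non-decreasing since each new term appended to the sum is a strictly positive exponential; hence the bracketed reciprocals are also non-increasing. A product of non-increasing positive factors is non-increasing, giving the stated inequality. For the equality clause, $m_1=m_2$ gives equality trivially; conversely, if $m_1<m_2$ then $1/m_1 > 1/m_2$ strictly while the other factors remain positive, forcing a strict inequality.

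I do not foresee a serious obstacle: the argument is essentially a one-shot differentiation plus an elementary monotonicity observation. The only point requiring care is the bookkeeping that isolates which summands of the two asymmetric components carry the dependence on the chosen off-diagonal scalar $\vu_i^\top\vv_j$, and confirming that the accompanying $\vu_k^\top\vv_k$ term inside the same summand contributes nothing to the partial derivative when $i\ne j$; after that, the chain-rule factor $1/t$ combines cleanly with the softmax-type denominator.
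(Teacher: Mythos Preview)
Your proposal is correct and follows essentially the same approach as the paper: compute the partial derivative in closed softmax form, read off non-negativity from the positivity of all factors, and obtain the monotonicity in $m$ from the fact that $m\cdot S_i(m)$ (and $m\cdot T_j(m)$) is increasing. The only cosmetic difference is that the paper packages the last step as a separate lemma bounding the ratio $\frac{m_1\sum_{l\in[m_1]}c_l}{m_2\sum_{l\in[m_2]}c_l}\le 1$, whereas you argue the same thing directly by noting that $1/m$ and $1/S_i(m)$ are each non-increasing; your phrasing is slightly more streamlined but the content is identical.
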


According to Theorem~\ref{thm:gradient-analysis}, 
the gradient of the loss with respect to the negative-pair similarity is always non-negative, implying that gradient descent decreases the similarities of negative pairs, thereby pushing them further apart. Notably, the magnitude of this gradient increases as the batch size gets smaller. 
As a result, negative pairs within each batch exhibit greater separation the batch size gets smaller.
 
\paragraph{Proposed Variance Reduction Method.}

Prior empirical studies on CL have shown that mini-batch settings often underperform compared to full-batch settings~\citep{chen2020simple, radford2021clip}. This naturally raises a question: \textit{What is the main factor contributing to this performance degradation in mini-batch settings, and how can it be addressed?} According to Theorem~\ref{thm:mini-batch}, from the perspective of cosine similarity of embeddings, the key difference introduced by mini-batch settings lies in the increased variance of negative-pair similarities. Motivated by this theoretical insight, we propose an approach to improve mini-batch contrastive learning by introducing an auxiliary loss term, $\loss_{\op{VRNS}}(\mathbf{U}, \mathbf{V})$, which explicitly reduces the variance of negative-pair similarities:
\begin{definition}[Reducing Variance of Negative-Pair Similarities]
\label{def:regularization}
Let $m$ be the mini-batch size. 
Define 
\begin{equation*}
\loss_{\op{VRNS}}(\mathbf{U}_{[m]}, \mathbf{V}_{[m]}) \! :=\!  \frac{1}{m(m\!-\!1)} \!\!  \sum_{i \neq j \in [m]}\! \!\!   \left( \mathbf{u}_i^\top \mathbf{v}_j \! +\! \frac{1}{n\!-\!1} \right)^{\!2} \! \!  
\end{equation*}
as the auxiliary loss for reducing the variance of negative-pair similarities.
\end{definition}

One can combine arbitrary conventional mini-batch loss $\loss \left( \vU_{[m]}, \vV_{[m]} \right)$ with the proposed auxiliary loss to get the modified loss, given by
\begin{align*}
\loss \left( \vU_{[m]}, \vV_{[m]} \right)
+ \lambda \cdot \loss_{\op{VRNS}} \left( \vU_{[m]}, \vV_{[m]} \right),
\end{align*}
where $\lambda > 0$ is a hyperparameter.
By including the proposed term into the mini-batch loss, we encourage all negative-pair similarities to be close to $-\frac{1}{n-1}$, the ideal value achieved in full-batch settings in Theorem~\ref{thm:full:batch}. As a result, the proposed loss controls the variance of negative-pair similarities.

\section{Empirical Validation}
\label{sec:experiment}

In this section, we empirically validate the impact of our theoretical results discussed in Sec.~\ref{sec:similarity:optimal}, especially for the practical scenarios of mini-batch settings. 
First, we empirically observe that 
the excessive separation of negative pairs (proven in Theorem~\ref{thm:mini-batch}) actually happens in experiments on benchmark datasets. 
Second, we empirically confirm that such excessive separation issue can be mitigated by using the proposed loss in Def.~\ref{def:regularization} which reduces the variance of the negative-pair similarities. Third, we observe such variance reduction improves the quality of learned representations in various real-world experiments.

\subsection{Excessive Separation of Negative Pairs}
\label{sec:experiment:excessive:separation}

To investigate the excessive separation of negative pairs in CL, we evaluate the variance of the negative-pair similarities of embeddings learned by real-world experiments. Following prior works on contrastive learning~\citep{chen2020simple, koromilas2024bridging}, we use a ResNet-18 encoder~\citep{he2016deep} followed by a two-layer projection head. The models are pretrained on CIFAR-100~\citep{krizhevsky2009cifar} by minimizing the SimCLR loss with the temperature parameter of $t=0.2$. Five models are trained with mini-batches sampled uniformly at random, with batch sizes of 32, 64, 128, 256, and 512, respectively. Additional details of the experimental setup are provided in Appendix~\ref{sec:appendix:experiment}.

Based on these pretrained models, we generate 5,000 positive embedding pairs by applying random augmentations to the training data and extracting the corresponding outputs from the projection head. We then compute the cosine similarities of negative pairs, and report the variance of these similarities in Table~\ref{tab:excessive_separation}. As shown in the table, training with smaller batch sizes leads to higher variance in negative-pair similarities, which aligns with the result in Theorem~\ref{thm:mini-batch}.

To evaluate the effectiveness of our proposed auxiliary loss,
we train five models for each batch size by minimizing the SimCLR loss combined with the auxiliary loss $\loss_{\op{VRNS}}(\mathbf{U}, \mathbf{V})$ in Def.~\ref{def:regularization} with the hyperparameter of $\lambda =30$. The variances of negative-pair similarities from these additional models are shown in the last column of Table~\ref{tab:excessive_separation}, and are consistently reduced across all batch sizes. This indicates that our proposed loss effectively mitigates excessive separation of negative pairs in mini-batch settings.

\begin{table}[t]
\vskip -0.1in
\caption{
Variance of the similarities of embeddings of negative pairs, obtained from models trained with different batch sizes. Each model is trained using either the SimCLR loss alone or jointly with our auxiliary loss in Def.~\ref{def:regularization}, which is proposed to reduce this variance. One can confirm that the variance is effectively reduced by using the proposed auxiliary loss. 
}
\vskip 0.1in
\label{tab:excessive_separation}
\centering
\footnotesize
\begin{tabular}{c P{70pt} P{70pt}}
\toprule
\multirow{2}{*}{Batch size} & \multicolumn{2}{c}{Variance of negative-pair similarities} \\ 
\cmidrule(lr){2-3}
 & SimCLR & SimCLR + Ours \\
\midrule
32 & 0.1649 & 0.1008 \\
64 & 0.1505 & 0.0952 \\
128 & 0.1444 & 0.0929 \\
256 & 0.1404 & 0.0921 \\
512 & 0.1396 & 0.0917 \\
\bottomrule
\end{tabular}
\end{table}

\subsection{Effect of Variance Reduction on Performance}
\label{sec:experiment:performance}

We further investigate whether reducing the variance of negative-pair similarities improves the quality of learned representations in terms of the downstream performances.

\paragraph{Experimental Setup.}

We pretrain models on CIFAR-10, CIFAR-100~\citep{krizhevsky2009cifar}, and ImageNet~\citep{deng2009imagenet} using various contrastive losses that follow the formulation in Def.~\ref{def:loss:infonce}, including SimCLR, DCL, and DHEL. For all methods, we compare models trained with and without incorporating the auxiliary loss $\loss_{\op{VRNS}}(\mathbf{U}, \mathbf{V})$ in Def.~\ref{def:regularization}. The hyperparameter $\lambda$ for the proposed loss is tuned over $\{0.1, 0.3, 1, 3, 10, 30, 100\}$. Unless otherwise specified, the other settings follow those in Sec.~\ref{sec:experiment:excessive:separation}.

\paragraph{Performance Gains from Variance Reduction.}

\begin{figure}[t]
    \centering
    \includegraphics[width=0.45\textwidth]{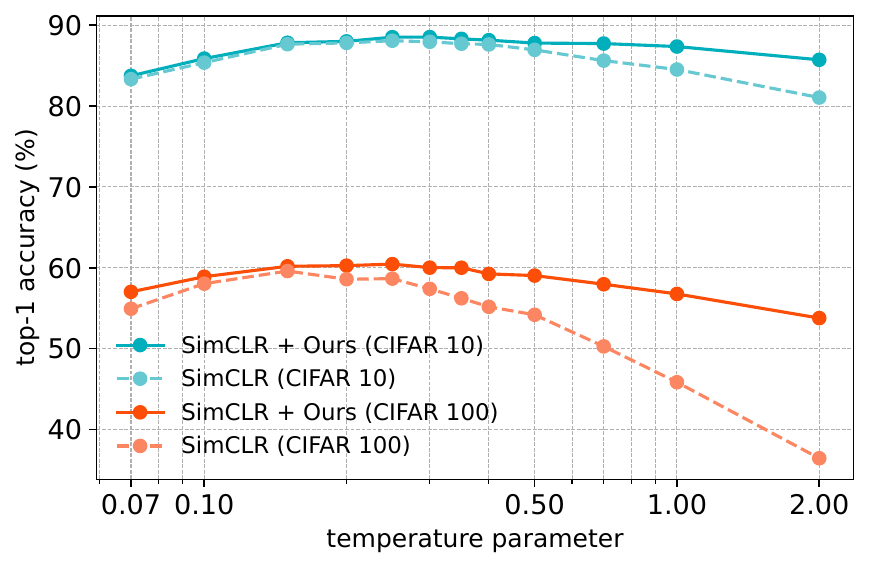}
    \caption{
    Classification accuracy on CIFAR datasets. Models are trained by minimizing the SimCLR loss with and without the auxiliary loss proposed in Def.~\ref{def:regularization}, using various temperature parameters in the SimCLR loss. 
    }
    \label{fig:temperature}
\end{figure}

The quality of the representations learned through CL is known to be sensitive to the choice of the temperature parameter in contrastive losses, as it influences the distribution of similarities among embeddings~\citep{wang2021understanding}. To investigate this sensitivity, we train models using the SimCLR loss with temperature values ranging from 0.07 to 2.00. We compare the standard SimCLR loss against our proposed variant, which incorporates the auxiliary loss introduced in Def.~\ref{def:regularization} to reduce the variance of negative-pair similarities. As shown in Figure~\ref{fig:temperature}, incorporating the proposed term leads to a consistently higher and more stable classification accuracy across all temperature settings, alleviating the need for careful temperature tuning.

\begin{figure}[t]
    \centering
    \includegraphics[width=0.48\textwidth]{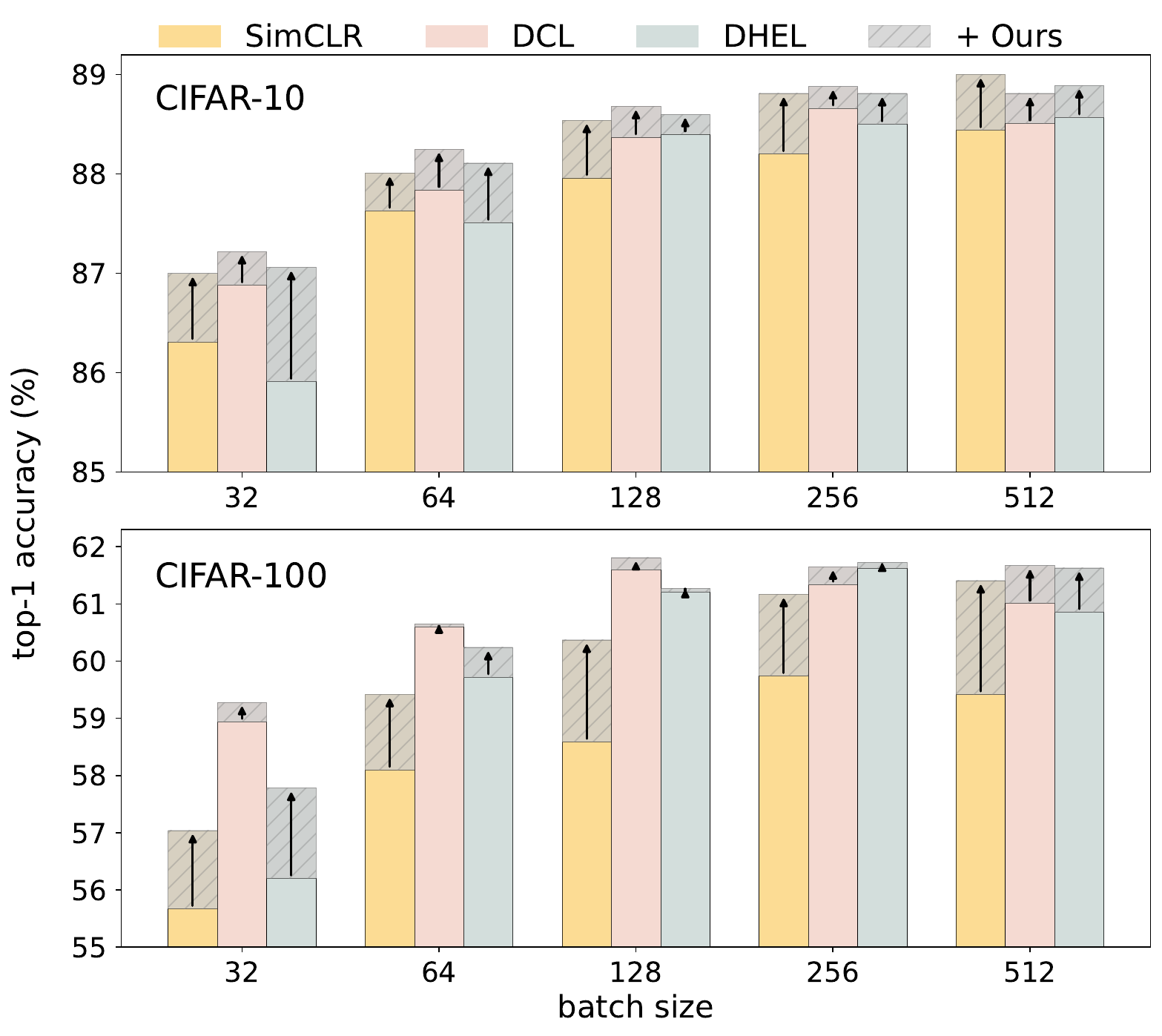}
    \caption{
    Effect of the auxiliary loss proposed in Def.~\ref{def:regularization} on top-1 classification accuracy when combined with various baseline methods (SimCLR, DCL, and DHEL) on CIFAR datasets. The gray bars highlight the performance gains achieved by incorporating the proposed term across various batch sizes. The proposed auxiliary loss consistently improves the model performance.
    }
    \label{fig:main_cifar10_arrow}
\end{figure}

We further evaluate the auxiliary loss in Def.~\ref{def:regularization} on existing CL methods, including DCL and DHEL. Experiments are conducted on the CIFAR datasets with various batch sizes. As shown in Figure~\ref{fig:main_cifar10_arrow}, incorporating the proposed term leads to improved classification accuracy, with the effect being more pronounced at smaller batch sizes. Additional results on the ImageNet dataset are presented in Appendix~\ref{sec:appendix:imagenet}.

\paragraph{Caveats of Variance Reduction.}
While the auxiliary loss proposed in Def.~\ref{def:regularization} effectively reduces the variance of negative-pair similarities, this reduction can influence both desirable and undesirable sources of variance. On the positive side, it helps reduce variance introduced by mini-batch sampling, which can degrade the representation quality. However, it may also suppress the variance that captures meaningful structure in the data. Further discussion of the limitations of the proposed loss is provided in Appendix~\ref{sec:appendix:discussion}.

\section{Conclusion}\label{sec:conclusion}

To understand contrastive learning (CL), we mathematically analyze the distributions of similarities of embeddings, measured for positive pairs and negative pairs. Our theoretical results in full-batch settings demonstrate that misalignment of positive pairs becomes inevitable when the average similarity of negative pairs falls below its optimal value, a situation that can arise with existing contrastive losses. In mini-batch settings, we prove that the variance of negative-pair similarities increases as the batch size decreases---a distinctive characteristic absent in full-batch settings and a potential contributor to the performance degradation observed in mini-batch settings. To address this, we propose an auxiliary loss that explicitly reduces the variance of negative-pair similarities. Empirical results show that incorporating the proposed loss improves the performance of CL methods, especially in small-batch settings.

Promising directions for future work include extending this work in two directions. First, disentangling the variance of negative-pair similarities that reflects intrinsic data structure from that caused by mini-batching could enable targeting only the variance that degrades representation quality. Second, analyzing the behavior of embedding similarities not only during pretraining but also during fine-tuning may provide deeper insights into its dynamics throughout different stages of training.

\section*{Acknowledgements}

This work was partially supported by the National Research Foundation of Korea (NRF) grant funded by the Ministry of Science and ICT (MSIT) of the Korean government (RS-2024-00341749, RS-2024-00345351, RS-2024-00408003), under the ICT Challenge and Advanced Network of HRD (ICAN) support program (RS-2023-00259934, RS-2025-02283048), supervised by the Institute for Information \& Communications Technology Planning \& Evaluation (IITP). This research was also supported by the Yonsei University Research Fund (2025-22-0025).

We sincerely thank the anonymous reviewers for their critical reading and constructive feedback enhancing this paper.

\section*{Impact Statement}

This paper presents work whose goal is to theoretically understand CL through embedding similarities. There are many potential societal consequences of our work, none which we feel must be specifically highlighted here.

\bibliography{__ref}

\begin{thebibliography}{45}
\providecommand{\natexlab}[1]{#1}
\providecommand{\url}[1]{\texttt{#1}}
\expandafter\ifx\csname urlstyle\endcsname\relax
  \providecommand{\doi}[1]{doi: #1}\else
  \providecommand{\doi}{doi: \begingroup \urlstyle{rm}\Url}\fi

\bibitem[Bachman et~al.(2019)Bachman, Hjelm, and Buchwalter]{bachman2019learning}
Bachman, P., Hjelm, R.~D., and Buchwalter, W.
\newblock Learning representations by maximizing mutual information across views.
\newblock In \emph{Advances in Neural Information Processing Systems}, volume~32, 2019.

\bibitem[Chen et~al.(2022)Chen, Zhang, Xu, Chen, Duan, Chen, Tran, Zeng, and Chilimbi]{chen2022why}
Chen, C., Zhang, J., Xu, Y., Chen, L., Duan, J., Chen, Y., Tran, S.~D., Zeng, B., and Chilimbi, T.
\newblock Why do we need large batchsizes in contrastive learning? a gradient-bias perspective.
\newblock In \emph{Advances in Neural Information Processing Systems}, 2022.

\bibitem[Chen et~al.(2020)Chen, Kornblith, Norouzi, and Hinton]{chen2020simple}
Chen, T., Kornblith, S., Norouzi, M., and Hinton, G.
\newblock A simple framework for contrastive learning of visual representations.
\newblock In \emph{International conference on machine learning}, pp.\  1597--1607. PMLR, 2020.

\bibitem[Chen \& He(2021)Chen and He]{chen2021exploring}
Chen, X. and He, K.
\newblock Exploring simple siamese representation learning.
\newblock In \emph{IEEE/CVF conference on Computer Vision and Pattern Recognition}, pp.\  15750--15758, 2021.

\bibitem[Chi et~al.(2021)Chi, Dong, Wei, Yang, Singhal, Wang, Song, Mao, Huang, and Zhou]{chi2020infoxlm}
Chi, Z., Dong, L., Wei, F., Yang, N., Singhal, S., Wang, W., Song, X., Mao, X.-L., Huang, H., and Zhou, M.
\newblock {I}nfo{XLM}: An information-theoretic framework for cross-lingual language model pre-training.
\newblock In \emph{Conference of the North American Chapter of the Association for Computational Linguistics}, June 2021.

\bibitem[Cho et~al.(2024)Cho, Sreenivasan, Lee, Mun, Yi, Lee, Lee, yong Sohn, Papailiopoulos, and Lee]{cho2024minibatch}
Cho, J., Sreenivasan, K., Lee, K., Mun, K., Yi, S., Lee, J.-G., Lee, A., yong Sohn, J., Papailiopoulos, D., and Lee, K.
\newblock Mini-batch optimization of contrastive loss.
\newblock \emph{Transactions on Machine Learning Research}, 2024.

\bibitem[Cohn \& Kumar(2007)Cohn and Kumar]{cohn2007universally}
Cohn, H. and Kumar, A.
\newblock Universally optimal distribution of points on spheres.
\newblock \emph{Journal of the American Mathematical Society}, 20\penalty0 (1):\penalty0 99--148, 2007.

\bibitem[da~Costa et~al.(2022)da~Costa, Fini, Nabi, Sebe, and Ricci]{JMLR:v23:21-1155}
da~Costa, V. G.~T., Fini, E., Nabi, M., Sebe, N., and Ricci, E.
\newblock solo-learn: A library of self-supervised methods for visual representation learning.
\newblock \emph{Journal of Machine Learning Research}, 23\penalty0 (56):\penalty0 1--6, 2022.

\bibitem[Deng et~al.(2009)Deng, Dong, Socher, Li, Li, and Fei-Fei]{deng2009imagenet}
Deng, J., Dong, W., Socher, R., Li, L.-J., Li, K., and Fei-Fei, L.
\newblock Imagenet: A large-scale hierarchical image database.
\newblock In \emph{IEEE/CVF Conference on Computer Vision and Pattern Recognition}, pp.\  248--255. Ieee, 2009.

\bibitem[Gao et~al.(2021)Gao, Yao, and Chen]{gao2021simcse}
Gao, T., Yao, X., and Chen, D.
\newblock {S}im{CSE}: Simple contrastive learning of sentence embeddings.
\newblock In \emph{Conference on Empirical Methods in Natural Language Processing}, 2021.

\bibitem[Gutmann \& Hyv{\"a}rinen(2010)Gutmann and Hyv{\"a}rinen]{gutmann2010noise}
Gutmann, M. and Hyv{\"a}rinen, A.
\newblock Noise-contrastive estimation: A new estimation principle for unnormalized statistical models.
\newblock In \emph{International Conference on Artificial Intelligence and Statistics}, pp.\  297--304, 2010.

\bibitem[HaoChen et~al.(2021)HaoChen, Wei, Gaidon, and Ma]{haochen2021provable}
HaoChen, J.~Z., Wei, C., Gaidon, A., and Ma, T.
\newblock Provable guarantees for self-supervised deep learning with spectral contrastive loss.
\newblock In \emph{Advances in Neural Information Processing Systems}, volume~34, pp.\  5000--5011, 2021.

\bibitem[He et~al.(2016)He, Zhang, Ren, and Sun]{he2016deep}
He, K., Zhang, X., Ren, S., and Sun, J.
\newblock Deep residual learning for image recognition.
\newblock In \emph{IEEE/CVF Conference on Computer Vision and Pattern Recognition}, 2016.

\bibitem[He et~al.(2020)He, Fan, Wu, Xie, and Girshick]{he2020momentum}
He, K., Fan, H., Wu, Y., Xie, S., and Girshick, R.
\newblock Momentum contrast for unsupervised visual representation learning.
\newblock In \emph{IEEE/CVF Conference on Computer Vision and Pattern Recognition}, pp.\  9729--9738, 2020.

\bibitem[Hjelm et~al.(2019)Hjelm, Fedorov, Lavoie-Marchildon, Grewal, Bachman, Trischler, and Bengio]{hjelm2018learning}
Hjelm, R.~D., Fedorov, A., Lavoie-Marchildon, S., Grewal, K., Bachman, P., Trischler, A., and Bengio, Y.
\newblock Learning deep representations by mutual information estimation and maximization.
\newblock In \emph{International Conference on Learning Representations}, 2019.

\bibitem[Horn \& Johnson(2012)Horn and Johnson]{horn2012matrix}
Horn, R.~A. and Johnson, C.~R.
\newblock \emph{Matrix analysis}.
\newblock Cambridge university press, 2012.

\bibitem[Jia et~al.(2021)Jia, Yang, Xia, Chen, Parekh, Pham, Le, Sung, Li, and Duerig]{jia2021scalingvisualvisionlanguagerepresentation}
Jia, C., Yang, Y., Xia, Y., Chen, Y.-T., Parekh, Z., Pham, H., Le, Q., Sung, Y.-H., Li, Z., and Duerig, T.
\newblock Scaling up visual and vision-language representation learning with noisy text supervision.
\newblock In \emph{International Conference on Machine Learning}, 2021.

\bibitem[Khosla et~al.(2020)Khosla, Teterwak, Wang, Sarna, Tian, Isola, Maschinot, Liu, and Krishnan]{khosla2020supervised}
Khosla, P., Teterwak, P., Wang, C., Sarna, A., Tian, Y., Isola, P., Maschinot, A., Liu, C., and Krishnan, D.
\newblock Supervised contrastive learning.
\newblock In \emph{Advances in Neural Information Processing Systems}, volume~33, pp.\  18661--18673, 2020.

\bibitem[Kornblith et~al.(2019)Kornblith, Shlens, and Le]{kornblith2019better}
Kornblith, S., Shlens, J., and Le, Q.~V.
\newblock Do better imagenet models transfer better?
\newblock In \emph{IEEE/CVF Conference on Computer Vision and Pattern Recognition}, 2019.

\bibitem[Koromilas et~al.(2024)Koromilas, Bouritsas, Giannakopoulos, Nicolaou, and Panagakis]{koromilas2024bridging}
Koromilas, P., Bouritsas, G., Giannakopoulos, T., Nicolaou, M., and Panagakis, Y.
\newblock Bridging mini-batch and asymptotic analysis in contrastive learning: From {I}nfo{NCE} to kernel-based losses.
\newblock In \emph{International Conference on Machine Learning}, 2024.

\bibitem[Krizhevsky et~al.(2009)Krizhevsky, Hinton, et~al.]{krizhevsky2009cifar}
Krizhevsky, A., Hinton, G., et~al.
\newblock Learning multiple layers of features from tiny images.
\newblock Technical report, University of Toronto, 2009.

\bibitem[Lee et~al.(2024)Lee, Chang, and Sohn]{lee2024analysis}
Lee, C., Chang, J., and Sohn, J.-y.
\newblock Analysis of using sigmoid loss for contrastive learning.
\newblock In \emph{International Conference on Artificial Intelligence and Statistics}, 2024.

\bibitem[Lee et~al.(2025)Lee, Oh, Lee, and Sohn]{lee2025ssem}
Lee, C., Oh, J., Lee, K., and Sohn, J.-y.
\newblock A theoretical framework for preventing class collapse in supervised contrastive learning.
\newblock In \emph{International Conference on Artificial Intelligence and Statistics}, 2025.

\bibitem[Lee et~al.(2021)Lee, Lee, Lee, Lee, and Shin]{lee2021improving}
Lee, H., Lee, K., Lee, K., Lee, H., and Shin, J.
\newblock Improving transferability of representations via augmentation-aware self-supervision.
\newblock In \emph{Advances in Neural Information Processing Systems}, volume~34, pp.\  17710--17722, 2021.

\bibitem[Li et~al.(2021)Li, Pogodin, Sutherland, and Gretton]{li2021selfsupervisedlearningkerneldependence}
Li, Y., Pogodin, R., Sutherland, D.~J., and Gretton, A.
\newblock Self-supervised learning with kernel dependence maximization.
\newblock In \emph{Advances in Neural Information Processing Systems}, 2021.

\bibitem[Lu \& Steinerberger(2022)Lu and Steinerberger]{lu2022neural}
Lu, J. and Steinerberger, S.
\newblock Neural collapse under cross-entropy loss.
\newblock \emph{Applied and Computational Harmonic Analysis}, 59:\penalty0 224--241, 2022.

\bibitem[Oord et~al.(2018)Oord, Li, and Vinyals]{oord2018representation}
Oord, A. v.~d., Li, Y., and Vinyals, O.
\newblock Representation learning with contrastive predictive coding.
\newblock \emph{arXiv preprint arXiv:1807.03748}, 2018.

\bibitem[Papyan et~al.(2020)Papyan, Han, and Donoho]{papyan2020prevalence}
Papyan, V., Han, X., and Donoho, D.~L.
\newblock Prevalence of neural collapse during the terminal phase of deep learning training.
\newblock \emph{Proceedings of the National Academy of Sciences}, 117\penalty0 (40):\penalty0 24652--24663, 2020.

\bibitem[Pham et~al.(2023)Pham, Dai, Ghiasi, Kawaguchi, Liu, Yu, Yu, Chen, Luong, Wu, et~al.]{pham2023combinedscalingzeroshottransfer}
Pham, H., Dai, Z., Ghiasi, G., Kawaguchi, K., Liu, H., Yu, A.~W., Yu, J., Chen, Y.-T., Luong, M.-T., Wu, Y., et~al.
\newblock Combined scaling for zero-shot transfer learning.
\newblock \emph{Neurocomputing}, 555:\penalty0 126658, 2023.

\bibitem[Qian et~al.(2021)Qian, Meng, Gong, Yang, Wang, Belongie, and Cui]{qian2021spatiotemporal}
Qian, R., Meng, T., Gong, B., Yang, M.-H., Wang, H., Belongie, S., and Cui, Y.
\newblock Spatiotemporal contrastive video representation learning.
\newblock In \emph{IEEE/CVF Conference on Computer Vision and Pattern Recognition}, 2021.

\bibitem[Radford et~al.(2021)Radford, Kim, Hallacy, Ramesh, Goh, Agarwal, Sastry, Askell, Mishkin, Clark, et~al.]{radford2021clip}
Radford, A., Kim, J.~W., Hallacy, C., Ramesh, A., Goh, G., Agarwal, S., Sastry, G., Askell, A., Mishkin, P., Clark, J., et~al.
\newblock Learning transferable visual models from natural language supervision.
\newblock In \emph{International conference on machine learning}, pp.\  8748--8763. PMLR, 2021.

\bibitem[Shen et~al.(2016)Shen, Sun, and Yuan]{shen2016semi}
Shen, X., Sun, Q.-S., and Yuan, Y.-H.
\newblock Semi-paired hashing for cross-view retrieval.
\newblock \emph{Neurocomputing}, 213:\penalty0 14--23, 2016.

\bibitem[Sustik et~al.(2007)Sustik, Tropp, Dhillon, and Heath~Jr]{sustik2007existence}
Sustik, M.~A., Tropp, J.~A., Dhillon, I.~S., and Heath~Jr, R.~W.
\newblock On the existence of equiangular tight frames.
\newblock \emph{Linear Algebra and its applications}, 426\penalty0 (2-3):\penalty0 619--635, 2007.

\bibitem[Tian et~al.(2020{\natexlab{a}})Tian, Krishnan, and Isola]{tian2020contrastive}
Tian, Y., Krishnan, D., and Isola, P.
\newblock Contrastive multiview coding.
\newblock In \emph{European Conference on Computer Vision}, 2020{\natexlab{a}}.

\bibitem[Tian et~al.(2020{\natexlab{b}})Tian, Sun, Poole, Krishnan, Schmid, and Isola]{tian2020makesgoodviewscontrastive}
Tian, Y., Sun, C., Poole, B., Krishnan, D., Schmid, C., and Isola, P.
\newblock What makes for good views for contrastive learning?
\newblock In \emph{Advances in Neural Information Processing Systems}, volume~33, pp.\  6827--6839, 2020{\natexlab{b}}.

\bibitem[Waida et~al.(2023)Waida, Wada, And{\'e}ol, Nakagawa, Zhang, and Kanamori]{waida2023towards}
Waida, H., Wada, Y., And{\'e}ol, L., Nakagawa, T., Zhang, Y., and Kanamori, T.
\newblock Towards understanding the mechanism of contrastive learning via similarity structure: A theoretical analysis.
\newblock In \emph{Joint European Conference on Machine Learning and Knowledge Discovery in Databases}, 2023.

\bibitem[Wang \& Liu(2021)Wang and Liu]{wang2021understanding}
Wang, F. and Liu, H.
\newblock Understanding the behaviour of contrastive loss.
\newblock In \emph{IEEE/CVF Conference on Computer Vision and Pattern Recognition}, pp.\  2495--2504, 2021.

\bibitem[Wang et~al.(2017)Wang, Xiang, Cheng, and Yuille]{wang2017normface}
Wang, F., Xiang, X., Cheng, J., and Yuille, A.~L.
\newblock Normface: L2 hypersphere embedding for face verification.
\newblock In \emph{ACM international conference on Multimedia}, 2017.

\bibitem[Wang \& Isola(2020)Wang and Isola]{wang2020understanding}
Wang, T. and Isola, P.
\newblock Understanding contrastive representation learning through alignment and uniformity on the hypersphere.
\newblock In \emph{International Conference on Machine Learning}, pp.\  9929--9939. PMLR, 2020.

\bibitem[Wu et~al.(2018)Wu, Xiong, Yu, and Lin]{wu2018unsupervised}
Wu, Z., Xiong, Y., Yu, S.~X., and Lin, D.
\newblock Unsupervised feature learning via non-parametric instance discrimination.
\newblock In \emph{IEEE/CVF Conference on Computer Vision and Pattern Recognition}, pp.\  3733--3742, 2018.

\bibitem[Xue et~al.(2024)Xue, Gan, Ni, Joshi, and Mirzasoleiman]{xue2024investigatingbenefitsprojectionhead}
Xue, Y., Gan, E., Ni, J., Joshi, S., and Mirzasoleiman, B.
\newblock Investigating the benefits of projection head for representation learning.
\newblock In \emph{International Conference on Learning Representations}, 2024.

\bibitem[Yeh et~al.(2022)Yeh, Hong, Hsu, Liu, Chen, and LeCun]{yeh2022decoupledcontrastivelearning}
Yeh, C.-H., Hong, C.-Y., Hsu, Y.-C., Liu, T.-L., Chen, Y., and LeCun, Y.
\newblock Decoupled contrastive learning.
\newblock In \emph{European Conference on Computer Vision}, 2022.

\bibitem[Yuan et~al.(2022)Yuan, Wu, Qiu, Du, Zhang, Zhou, and Yang]{yuan2022provablestochasticoptimizationglobal}
Yuan, Z., Wu, Y., Qiu, Z.-H., Du, X., Zhang, L., Zhou, D., and Yang, T.
\newblock Provable stochastic optimization for global contrastive learning: Small batch does not harm performance.
\newblock In \emph{International Conference on Machine Learning}, pp.\  25760--25782. PMLR, 2022.

\bibitem[Zhai et~al.(2023)Zhai, Mustafa, Kolesnikov, and Beyer]{zhai2023sigmoid}
Zhai, X., Mustafa, B., Kolesnikov, A., and Beyer, L.
\newblock Sigmoid loss for language image pre-training.
\newblock In \emph{IEEE/CVF International Conference on Computer Vision}, 2023.

\bibitem[Zimmermann et~al.(2021)Zimmermann, Sharma, Schneider, Bethge, and Brendel]{zimmermann2021contrastive}
Zimmermann, R.~S., Sharma, Y., Schneider, S., Bethge, M., and Brendel, W.
\newblock Contrastive learning inverts the data generating process.
\newblock In \emph{International Conference on Machine Learning}, pp.\  12979--12990. PMLR, 2021.

\end{thebibliography}
\bibliographystyle{icml2025}

\newpage
\appendix
\onecolumn

\section{Contrastive Losses}
\label{sec:appendix:loss}

We outline how various losses commonly used in CL can be instantiated by the general formulation provided in Def.~\ref{def:loss:infonce} and Def.~\ref{def:loss:ind}. For each case, we specify the corresponding choices of functions and parameters.

\subsection{Contrastive Losses Following Def.~\ref{def:loss:infonce}}
\label{sec:appendix:loss:info}

\begin{figure*}[h!]
    \centering
    \includegraphics[width=0.8\textwidth]{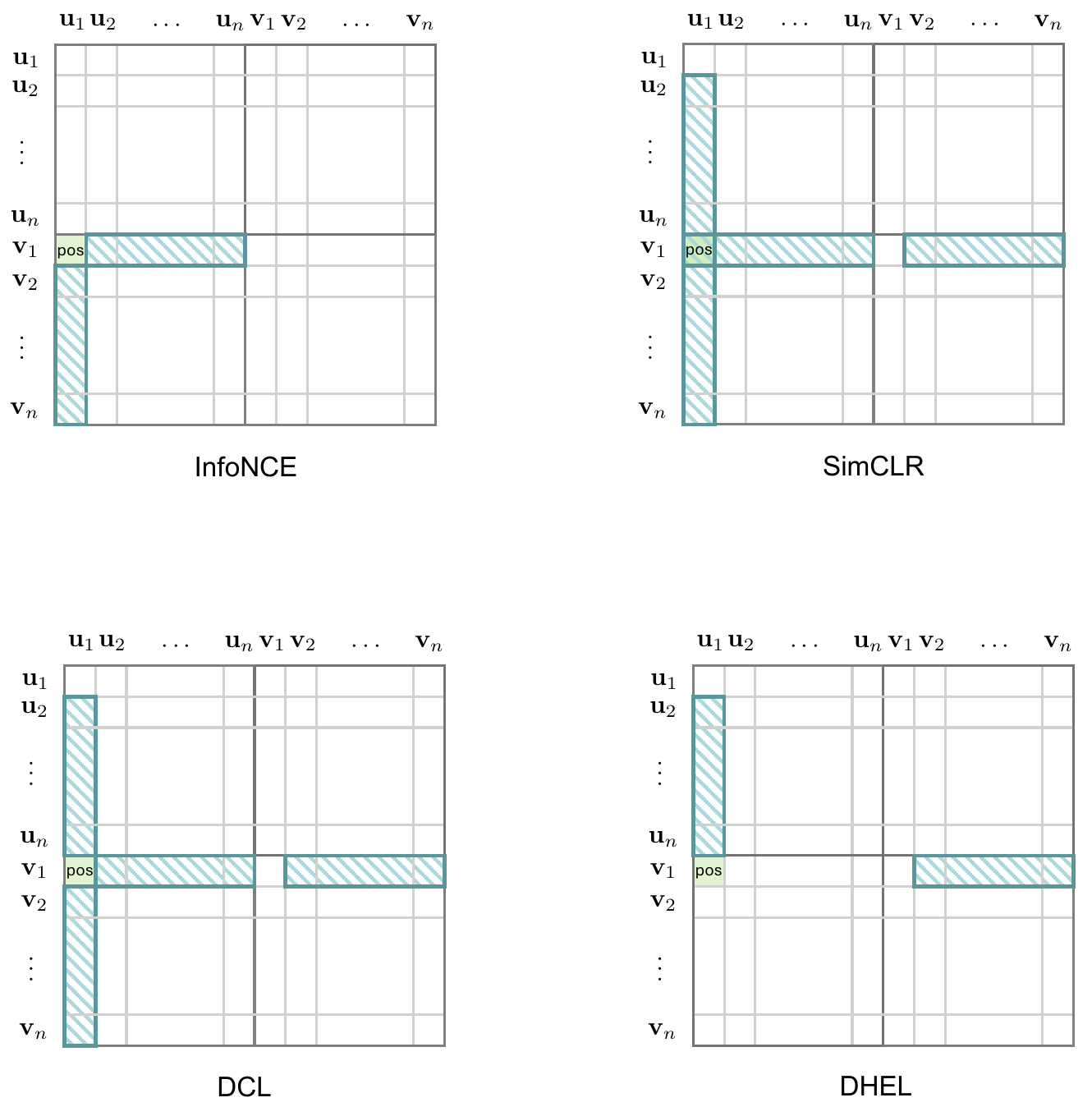}
    \caption{
    Illustration comparing four different contrastive losses, all following the form of Def.~\ref{def:loss:infonce}. The green area represents the positive pair, while the blue-striped regions indicate the negative pairs that are normalized together with the positive pair for each loss.
    }
    \label{fig:neg:pos:info}
\end{figure*}

\begin{table}[h!]
\centering
\caption{Function and parameter selections in Def.~\ref{def:loss:infonce} that correspond to contrastive losses.}
\vskip 0.15in
\begin{tabular}{ccccc}
\toprule
 & $\phi(x)$ & $\psi(x)$ & $c_1$ & $c_2$ \\
\midrule
InfoNCE~\citep{oord2018representation} & $\exp(x/t)$ & $\log(1+x)$ & 1 & 0 \\ 
SimCLR~\citep{chen2020simple} & $\exp(x/t)$ & $\log(1+x)$ & 1 & 1 \\ 
DCL~\citep{yeh2022decoupledcontrastivelearning} & $\exp(x/t)$ & $\log(x)$ & 1 & 1 \\ 
DHEL~\citep{koromilas2024bridging} & $\exp(x/t)$ & $\log(x)$ & 0 & 1 \\
\bottomrule
\end{tabular}
\end{table}

\begin{enumerate}
    \item InfoNCE~\citep{oord2018representation}, CLIP~\citep{radford2021clip}:
    \begin{align}
        \loss_{\op{InfoNCE}} (\vU, \vV) 
        &= \label{eq:def:infonce}
        - \frac{1}{2n}\sum_{i \in [n]}  \log \left(\frac{
            \exp\big(\vu_i^\top \vv_i / t \big)}{
            \sum_{j \in [n]} \exp\big(\vu_i^\top \vv_j / t \big) } \right) 
        - \frac{1}{2n}\sum_{i \in [n]}  \log \left(\frac{
            \exp\big(\vu_i^\top \vv_i / t \big)}{
            \sum_{j \in [n]} \exp\big(\vu_j^\top \vv_i / t \big)} \right)
        \\&= \nonumber
        \frac{1}{2n}\sum_{i \in [n]} \log \left(
            1+
            \sum_{j \in [n]\setminus \{i\}} \exp\big((\vv_j - \vv_i)^\top \vu_i / t \big)
         \right)
        \\&\qquad \nonumber
        + \frac{1}{2n}\sum_{i \in [n]}  \log \left(
            1+
            \sum_{j \in [n]\setminus \{i\}} \exp\big((\vu_j-\vu_i)^\top \vv_i / t \big)
        \right)
        ,
    \end{align}
    where $t >0$ is the temperature parameter.

    \item SimCLR~\citep{chen2020simple}:
    \begin{align*}
        \loss_{\op{SimCLR}} (\vU, \vV) 
        &= 
        - \frac{1}{2n}\sum_{i \in [n]}  \log \left(\frac{
            \exp\big(\vu_i^\top \vv_i / t \big)}{
            \sum_{j \in [n]} \exp\big(\vu_i^\top \vv_j / t \big) + \sum_{j \in [n] \setminus \{i\}} \exp\big(\vu_i^\top \vu_j / t \big)} \right) 
        \\&\qquad
        - \frac{1}{2n}\sum_{i \in [n]}  \log \left(\frac{
            \exp\big(\vu_i^\top \vv_i / t \big)}{
            \sum_{j \in [n]} \exp\big(\vu_j^\top \vv_i / t \big) + \sum_{j \in [n] \setminus \{i\}} \exp\big(\vv_j^\top \vv_i / t \big)} \right)
        \\&=
        \frac{1}{2n}\sum_{i \in [n]} \log \left(
            1+
            \sum_{j \in [n]\setminus \{i\}} \exp\big((\vv_j - \vv_i)^\top \vu_i / t \big) + \sum_{j \in [n] \setminus \{i\}} \exp\big((\vu_j-\vv_i)^\top \vu_i / t \big)
         \right)
        \\&\qquad
        + \frac{1}{2n}\sum_{i \in [n]}  \log \left(
            1+
            \sum_{j \in [n]\setminus \{i\}} \exp\big((\vu_j-\vu_i)^\top \vv_i / t \big) + \sum_{j \in [n] \setminus \{i\}} \exp\big((\vv_j-\vu_i)^\top \vv_i / t \big) \right)
        ,
    \end{align*}
    where $t >0$ is the temperature parameter.

    \item DCL~\citep{yeh2022decoupledcontrastivelearning}:
    \begin{align*}
        \loss_{\op{DCL}} (\vU, \vV) &= - \frac{1}{2n} \sum_{i \in [n]}  \log \left( \frac{
        \exp\big(\vu_i^\top \vv_i / t \big)}{
        \sum_{j \in [n] \setminus \{i\}} \exp\big(\vu_i^\top \vv_j / t \big) + \sum_{j \in [n] \setminus \{i\}} \exp\big(\vu_i^\top \vu_j / t \big)} \right)
        \\&\qquad
        - \frac{1}{2n} \sum_{i \in [n]}  \log \left( \frac{
        \exp\big(\vu_i^\top \vv_i / t \big)}{
        \sum_{j \in [n] \setminus \{i\}} \exp\big(\vu_j^\top \vv_i / t \big) + \sum_{j \in [n] \setminus \{i\}} \exp\big(\vu_j^\top \vu_i / t \big)} \right)
        \\&=
        \frac{1}{2n} \sum_{i \in [n]} \log \left(
            \sum_{j \in [n] \setminus \{i\}} \left(
            \exp\big((\vv_j - \vv_i)^\top \vu_i / t \big)
            + \exp\big((\vu_j - \vv_i)^\top \vu_i / t \big)
            \right)
        \right)
        \\&\qquad
        + \frac{1}{2n} \sum_{i \in [n]} \log \left(
            \sum_{j \in [n] \setminus \{i\}} \left(
            \exp\big((\vu_j - \vu_i)^\top \vv_i / t \big)
            + \exp\big((\vv_j - \vu_i)^\top \vv_i / t \big)
            \right)
        \right)
        ,
    \end{align*}
    where $t > 0$ is the temperature parameter.
    
    \item DHEL~\citep{koromilas2024bridging}:
    \begin{align*}
        \loss_{\op{DHEL}}(\vU, \vV) &= - \frac{1}{2n}\sum_{i \in [n]}  \log \left(\frac{
            \exp\big(\vu_i^\top \vv_i / t \big)}{
            \sum_{j \in [n] \setminus \{i\}} \exp\big(\vu_i^\top \vu_j / t \big) } \right) - \frac{1}{2n}\sum_{i \in [n]}  \log \left(\frac{
            \exp\big(\vu_i^\top \vv_i / t \big)}{
            \sum_{j \in [n] \setminus \{i\}} \exp\big(\vu_j^\top \vu_i / t \big) } \right)
        \\&= \frac{1}{2n}\sum_{i \in [n]} \log \left(
            \sum_{j \in [n]\setminus \{i\}} \!\!\! \exp\big((\vu_j - \vv_i)^\top \vu_i / t \big)
         \right)
        + \frac{1}{2n}\sum_{i \in [n]}  \log \left(
            \sum_{j \in [n]\setminus \{i\}} \!\!\! \exp\big((\vv_j-\vu_i)^\top \vv_i / t \big)
        \right),
    \end{align*}
    where $t > 0$ is the temperature parameter.
\end{enumerate}

\subsection{Contrastive Losses Following Def.~\ref{def:loss:ind}}
\label{sec:appendix:loss:pair}

\begin{table}[h!]
\vskip -0.15in
\centering
\caption{Function and parameter selections in Def.~\ref{def:loss:ind} that correspond to contrastive losses.}
\vskip 0.15in
\begin{tabular}{ccccc}
\toprule
 & $\phi(x)$ & $\psi(x)$ & $c_1$ & $c_2$ \\ 
\midrule
SigLIP~\citep{zhai2023sigmoid} & $- \log \! \left(1\! +\! \exp \left(- t x \!+\! b \right) \right)$ & $ (n\! -\! 1) \!\cdot \! \log \left( 1\! +\! \exp \left( t x \!-\! b \right) \right)$ & 1 & 0 \\
Spectral Contrastive Loss~\citep{haochen2021provable} & $x$ & $x^2$ & 1 & 0 \\ 
\bottomrule
\end{tabular}
\end{table}

\begin{enumerate}
    \item SigLIP~\citep{zhai2023sigmoid} :
    \begin{align*}
        \loss_{\op{SigLIP}}(\vU, \vV)
        &=
        \frac{1}{n} \sum_{i \in [n]} \log \left(1 + \exp\left(- t \vu_i^\top \vv_i + b\right) \right) 
        + \frac{1}{n} \sum_{i \in [n]} \sum_{j \in [n] \setminus \{i\}} \!\! \log \left( 1 \ + \exp\left( t \vu_i^\top \vv_j - b\right) \right)
        \\ &=
        -\frac{1}{n} \! \sum_{i \in [n]} \left( - \log \left( 1 \! +\! \exp\left( \!- t \vu_i^\top \vv_i \! +\! b\right) \right) \right)
        + \frac{1}{n(n-1)} \sum_{i \in [n]} \sum_{j \in [n] \setminus \{i\}} \!\!\!\!  \! (n-1)\! \cdot \! \log \left( 1 \!+\! \exp\left( t \vu_i^\top \vv_j \! -\!  b\right) \right)
        ,
    \end{align*}
    where $t > 0$ is the temperature parameter and $b \in \sR$ is the bias term.
    
    \item Spectral Contrastive Loss~\citep{haochen2021provable} :

    \begin{equation*}
        \loss_{\op{Spectral}}(\vU, \vV) = -\frac{1}{n} \sum_{i \in [n]}  \vu_i^\top \vv_j + \frac{1}{n(n-1)} \sum_{i \in [n]} \sum_{j \in [n] \setminus \{i\}} \left(\vu_i^\top \vv_j \right)^2
        .
    \end{equation*}
\end{enumerate}

\section{Distinction Between Cross-View and Within-View Negative Pairs}
\label{sec:appendix:cross:within:negatives}

\begin{figure*}[h]
    \centering
    \includegraphics[width=\textwidth]{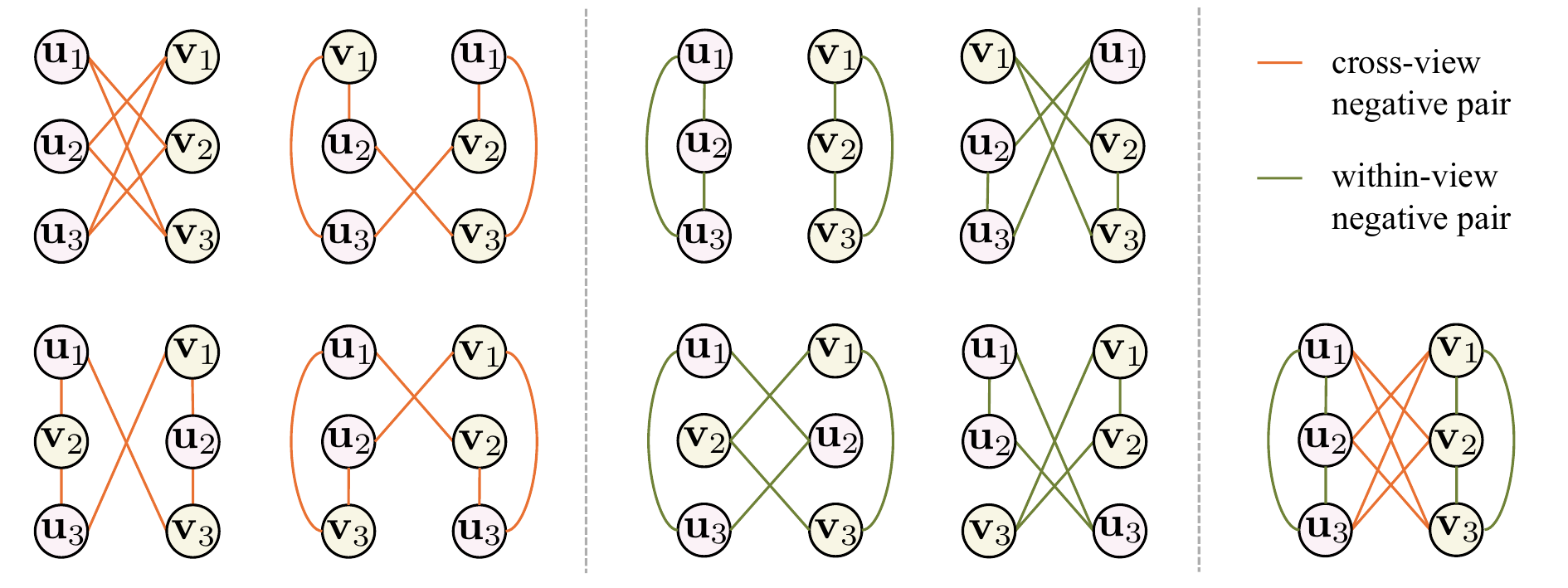}
      \begin{minipage}[t]{0.38\textwidth}
        \centering (a) $(c_1,c_2)=(1,0)$
      \end{minipage}\hfill
      \begin{minipage}[t]{0.38\textwidth}
        \centering (b) $(c_1,c_2)=(0,1)$
      \end{minipage}\hfill
      \begin{minipage}[t]{0.23\textwidth}
        \centering (c) $(c_1,c_2)=(1,1)$
      \end{minipage}
    \caption{
    Graphs illustrating different loss configurations for $n=3$.
    }
    \label{fig:c1c2_diff}
\end{figure*}

The key distinction between cross-view and within-view negative pairs in our analysis lies in their structural incorporation within the contrastive loss, rather than in the manner of their generation. To demonstrate that cross-view and within-view negatives are not equivalent, we present a graph-based representation in Figure~\ref{fig:c1c2_diff}, which reframes Figure~\ref{fig:c1c2}. In these graphs, each node corresponds to an embedding, and each edge indicates a negative pair considered in the loss.

In the unimodal CL, the distinction between the two views, $u_i$ and $v_i$ for $i \in [3]$, is not semantically meaningful. Thus, $u_i$ and $v_i$ may be interchanged without affecting the results. This implies that the four graphs depicted in Figure~\ref{fig:c1c2_diff}a are equivalent under permutation of views, and the same reasoning applies to Figure~\ref{fig:c1c2_diff}b. Nevertheless, the overall graph structures in Figure~\ref{fig:c1c2_diff}a and Figure~\ref{fig:c1c2_diff}b remain fundamentally different. One can confirm that cross-view graphs are fully connected bipartite, whereas within-view graphs consist of disconnected subgraphs. This topological difference highlights their non-equivalence.

\clearpage
\section{Proofs}

\subsection{Approximation of Uniformity Metric}
\label{sec:appendix:approxmiation:uniformity}

Under the normality assumption on $\vu^\top \vv$, Proposition~\ref{thm:appendix:approxmiation:uniformity} gives
\begin{align*}
\log \E \left[ \exp\left( 2\vu^\top \vv \right) \right]
= 2 \left(
\E\left[ \vu^\top\vv \right]
+ \Var\left[ \vu^\top\vv \right]
\right).
\end{align*}
Accordingly, the uniformity metric can be approximated as
\begin{align*}
\log \E_{\substack{ \vu \sim f_\sharp \hat{p}_x \ \vv \sim f\sharp \hat{p}_y } }
\left[ \exp \left( -\norm{\vu-\vv}_2^2 \right) \right]
&\approx
\log \E_{\substack{ (\vu, \vv) \sim f\sharp \hat{p}_{\op{neg}} } }
\left[ \exp \left( -\norm{\vu-\vv}_2^2 \right) \right]
\\&\approx
2 \left(
\E\left[ \vs(f ; \hat{p}_{\op{neg}}) \right]
+ \Var\left[ \vs(f ; \hat{p}_{\op{neg}}) \right]
- 1
\right),
\end{align*}
by Proposition~\ref{thm:appendix:neg:sim}, as $n$ goes to infinity.

\begin{proposition}
    \label{thm:appendix:approxmiation:uniformity}
    Assume that the random variable $\vu^\top \vv$ follows the normal distribution. Then,
    \begin{equation*}
        \log \E \left[ \exp\left(2\vu^\top \vv\right)\right]
        =
        2 \left( 
        \E \left[ \vu^\top \vv \right] + \Var \left[ \vu^\top \vv \right]
        \right)
    \end{equation*}
\end{proposition}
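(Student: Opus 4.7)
The statement is essentially an immediate consequence of the moment generating function (MGF) formula for a Gaussian random variable, so my plan is to recognize the left-hand side as an MGF evaluation and plug into the standard formula.

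The plan is to let $X := \vu^\top \vv$ and treat it as a scalar random variable with mean $\mu = \E[X]$ and variance $\sigma^2 = \Var[X]$. By hypothesis $X \sim N(\mu, \sigma^2)$. Then I would recall that the MGF of a normal distribution is
\begin{equation*}
    M_X(t) := \E[\exp(tX)] = \exp\!\left(t\mu + \tfrac{1}{2} t^2 \sigma^2\right),
\end{equation*}
which can be verified by completing the square inside the Gaussian integral $\int \exp(tx) \cdot \frac{1}{\sqrt{2\pi \sigma^2}} \exp(-(x-\mu)^2/(2\sigma^2)) dx$.

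Evaluating this at $t = 2$ gives $\E[\exp(2X)] = \exp(2\mu + 2\sigma^2)$, and taking the logarithm of both sides yields
\begin{equation*}
    \log \E[\exp(2X)] = 2\mu + 2\sigma^2 = 2\bigl(\E[\vu^\top \vv] + \Var[\vu^\top \vv]\bigr),
\end{equation*}
which is the claim. No step is really an obstacle here; the whole proof reduces to invoking a single textbook identity. If anything, the only thing worth noting is that the normality assumption on the scalar $\vu^\top \vv$ is what makes this exact rather than an approximation — without it, one would only get the second-order Taylor expansion $\log \E[\exp(2X)] \approx 2\E[X] + 2\Var[X]$ via the cumulant expansion, which is how this lemma ultimately justifies the $\approx$ sign in equation~\eqref{eq:uniform:approx} of the main text.
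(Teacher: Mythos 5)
Your proof is correct and follows exactly the same route as the paper's: set $X = \vu^\top \vv$, invoke the Gaussian moment generating function $\E[\exp(tX)] = \exp(\mu t + \tfrac{1}{2}\sigma^2 t^2)$ at $t=2$, and take logarithms. Nothing is missing; your closing remark on why normality makes the identity exact (versus the cumulant-expansion approximation used for the uniformity metric) is a fair observation but not needed for the proof itself.
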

\begin{proof}
    Let $X = \vu^\top \vv$. Since $X$ follows the normal distribution, we define $\mu := \E[X]$ and $\sigma^2 := \Var[X]$.
    
    Note that the moment generating function of normal distribution is given by
    \begin{equation*}
           \E\left[\exp(tX)\right]
    = \exp\left(\mu t + \frac{\sigma^2 t^2}{2}\right).
    \end{equation*}
    
    Substituting $t= 2$, we have
    \begin{equation*}
           \E\left[\exp(2X)\right]
           = \exp\left(2\mu + 2\sigma^2\right)
            = \exp\left(2\E[X] + 2\Var[X]\right),
    \end{equation*}
    which is equal to
    \begin{equation*}
           \log \E\left[\exp(2X)\right]
           = 2 \left(\E[X] + \Var[X]\right).
    \end{equation*}
    Since $X = \vu^\top \vv$, we conclude
    \begin{equation*}
        \log \E \left[ \exp\left(2\vu^\top \vv\right)\right]
        =
        2 \left( 
        \E \left[ \vu^\top \vv \right] + \Var \left[ \vu^\top \vv \right]
        \right).
    \end{equation*}
\end{proof}

\subsection{Proofs for Relation Between Positive and Negative Pairs}
\label{appendix:neg:sim}

\begin{proposition}
   \label{thm:appendix:neg:sim}
    The distribution of negative pairs satisfies $p_{\op{neg}}(\vx, \vy)=p_{x}(\vx)p_{y}(\vy)$ for all $\vx$ and $\vy$.
    However, for a training dataset of size $n$, the empirical distribution of negative pairs is given by
    \begin{equation}
        \nonumber
        \hat{p}_{\op{neg}}(\vx, \vy) = \frac{n}{n-1} \cdot \hat{p}_{x} (\vx)\hat{p}_{y} (\vy) - \frac{1}{n-1} \cdot \hat{p}_{\op{pos}} (\vx, \vy),
    \end{equation}
    for all $\vx$ and $\vy$.
\end{proposition}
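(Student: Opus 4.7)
The plan is to treat the two claims separately. For the population statement $p_{\op{neg}}(\vx, \vy) = p_x(\vx) p_y(\vy)$, I would argue directly from the generative description of a negative pair: the two views $\vx$ and $\vy$ are produced by independent augmentations applied to two independently drawn distinct instances, so the joint density factorizes into the product of the marginals. This is essentially a one-line observation once the generating process is spelled out.

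For the empirical statement, the strategy is to decompose the product $\hat{p}_x(\vx) \hat{p}_y(\vy)$ by instance. Substituting $\hat{p}_x(\vx) = \tfrac{1}{n} \sum_{i \in [n]} \hat{p}_x^i(\vx)$ and the analogous formula for $\hat{p}_y$, the product expands into a double sum over $(i,j) \in [n]^2$, which I would split into the diagonal $i = j$ and the off-diagonal $i \neq j$ parts:
\begin{equation*}
\hat{p}_x(\vx)\, \hat{p}_y(\vy) = \frac{1}{n^2} \sum_{i \in [n]} \hat{p}_x^i(\vx)\, \hat{p}_y^i(\vy) + \frac{1}{n^2} \sum_{i \neq j \in [n]} \hat{p}_x^i(\vx)\, \hat{p}_y^j(\vy).
\end{equation*}
Since the augmentations applied to the same instance are independent, $\hat{p}_{\op{pos}}^i(\vx, \vy) = \hat{p}_x^i(\vx)\, \hat{p}_y^i(\vy)$, so the first summand equals $\tfrac{1}{n}\, \hat{p}_{\op{pos}}(\vx, \vy)$. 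For the second summand, each term corresponds to an ordered pair of distinct instances, and since negative pairs are drawn uniformly over the $n(n-1)$ such pairs with independent augmentations on either side, the second summand equals $\tfrac{n-1}{n}\, \hat{p}_{\op{neg}}(\vx, \vy)$. Rearranging then yields the claimed identity.

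The main obstacle, if one can call it that, is bookkeeping: tracking the normalization constants $\tfrac{1}{n^2}$, $\tfrac{1}{n}$, and $\tfrac{n-1}{n}$ correctly when transitioning between the per-instance empirical distributions and the aggregate empirical positive and negative distributions. This relies on the equal-weighting assumption over instances and the disjoint-support property of $\{\hat{p}_{\op{pos}}^i\}_{i \in [n]}$ stated in Section~\ref{sec:problem:setup}. Beyond this, the argument is purely algebraic.
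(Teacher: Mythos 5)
Your proof is correct and takes essentially the same route as the paper's: the paper writes $\hat{p}_x(\vx)\hat{p}_y(\vy)$ as the mixture $\tfrac{1}{n}\hat{p}_{\op{pos}}(\vx,\vy) + \tfrac{n-1}{n}\hat{p}_{\op{neg}}(\vx,\vy)$ using the probability $\tfrac{1}{n}$ that a uniformly drawn index pair is positive, which is exactly your diagonal/off-diagonal expansion phrased as a law of total probability, and both arguments rest on the same (implicit) conditional independence of the two augmentations given the instance. The only minor divergence is the population claim, which you prove directly from independence of distinct instances, whereas the paper obtains it by letting $n \to \infty$ in the empirical identity; both routes are valid.
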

\begin{proof}

The empirical distribution of positive pairs, $\hat{p}_{\op{pos}}$, is defined under the assumption that all instances are equally weighted with the probability $\Pr  \{\vI = i\}= \frac{1}{n}$ for all $i\in[n]$, where $\vI$ is  a random variable representing the index. Under this assumption, the probability that a randomly selected pair is positive is
\begin{align*}
    \Pr \{\text{pos}\} 
    &= 
    \Pr \{\vi = \vi' \}
    =
    \sum_{i\in[n]}
    \Pr \{\vi = i \}\Pr \{\vi = i \}
    =
    n \cdot \frac{1}{n^2} 
    =
    \frac{1}{n}.
\end{align*}

Then, the empirical distribution of negative pairs, $\hat{p}_{\op{neg}}$, is subsequently derived as
\begin{align}
    \hat{p}_{x}(\vx) \hat{p}_{y}(\vy)
    &= \nonumber
    \hat{p}_{\op{pos}} (\vx, \vy)  \Pr\{\text{pos}\}
    + \hat{p}_{\op{neg}} (\vx, \vy)  (1-\Pr\{\text{pos}\})
    \\&= \label{eq:empi:pos:plus:neg}
    \hat{p}_{\op{pos}} (\vx, \vy) \cdot \frac{1}{n}
    + \hat{p}_{\op{neg}} (\vx, \vy) \cdot \frac{n-1}{n},
\end{align}
which leads to
\begin{align*}
    \hat{p}_{\op{neg}}(\vx, \vy) 
    &= \frac{n}{n-1} \cdot \hat{p}_x(\vx)\hat{p}_y(\vy) - \frac{1}{n-1} \cdot \hat{p}_{\op{pos}}(\vx, \vy)
    .
\end{align*}
Moreover, as $n \to \infty$, the above result implies that $p_{\op{neg}}(\vx, \vy)=p_{x}(\vx)p_{y}(\vy)$.
\end{proof}

\begin{lemma}
    \label{thm:main:lemma}
    Assume that the encoder $f(\cdot)$ satisfies $\norm{f(\vx)}_2^2=1$ for all $\vx$. For any distribution $p$, the following holds.
    \begin{equation*}
        1-\E_{ (\vu, \vv) \sim f_\sharp p_{\op{pos}}}\left[\vu^\top\vv \right] 
        + \E_{\substack{ \vu \sim f_\sharp p_x \\ \vv \sim f_\sharp p_y } } 
        \left[ \vu^\top\vv\right]
        = 
        \frac{1}{2} \tr\left(\Var_{ (\vu, \vv) \sim f_\sharp p_{\op{pos}}}[\vu-\vv] \right)
        + \frac{1}{2} \norm{\E_{ (\vu, \vv) \sim f_\sharp p_{\op{pos}}}\left[\vu+\vv\right]}_2^2 .
    \end{equation*}
\end{lemma}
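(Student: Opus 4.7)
The strategy is a direct algebraic unpacking. Three tools do all the work: the covariance-trace identity
$\tr(\Var[\vW]) = \E[\|\vW\|_2^2] - \|\E[\vW]\|_2^2$, the unit-norm constraint $\|\vu\|_2 = \|\vv\|_2 = 1$, and the observation that the marginals of $p_{\op{pos}}$ coincide with $p_x$ and $p_y$, which is what lets the product-distribution expectation on the left-hand side be rewritten in terms of moments taken under $p_{\op{pos}}$.

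First I would apply the covariance-trace identity to $\vW = \vu - \vv$ under $f_\sharp p_{\op{pos}}$, giving
$\tr(\Var[\vu-\vv]) = \E[\|\vu-\vv\|_2^2] - \|\E[\vu-\vv]\|_2^2$.
Expanding $\|\vu-\vv\|_2^2 = \|\vu\|_2^2 - 2\vu^\top \vv + \|\vv\|_2^2$ and invoking unit normality collapses the first term to $2 - 2\E_{p_{\op{pos}}}[\vu^\top \vv]$. Halving gives
$\tfrac{1}{2}\tr(\Var[\vu-\vv]) = 1 - \E_{p_{\op{pos}}}[\vu^\top \vv] - \tfrac{1}{2}\|\E_{p_{\op{pos}}}[\vu-\vv]\|_2^2$,
so the $1 - \E_{p_{\op{pos}}}[\vu^\top \vv]$ part of the left-hand side is already accounted for, and what remains is to show that $\tfrac{1}{2}\|\E[\vu+\vv]\|_2^2 - \tfrac{1}{2}\|\E[\vu-\vv]\|_2^2$ matches $\E_{\vu \sim p_x,\,\vv \sim p_y}[\vu^\top \vv]$.

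For this last step I would use the parallelogram-style identity $\|\va+\vb\|_2^2 - \|\va-\vb\|_2^2 = 4\,\va^\top \vb$ with $\va = \E_{p_{\op{pos}}}[\vu]$ and $\vb = \E_{p_{\op{pos}}}[\vv]$. Because $p_x$ and $p_y$ are the marginals of $p_{\op{pos}}$, these means coincide with $\E_{\vu \sim p_x}[\vu]$ and $\E_{\vv \sim p_y}[\vv]$, and by independence their inner product factorises as $\va^\top \vb = \E_{\vu \sim p_x,\,\vv \sim p_y}[\vu^\top \vv]$, reproducing the outstanding term of the left-hand side.

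No inequalities or analytic estimates are invoked anywhere, so the only place that demands care is coefficient bookkeeping in the parallelogram step: making sure the two $\tfrac{1}{2}$ prefactors combine to the correct multiple of $\va^\top \vb$ before being identified with the product-distribution expectation. Everything else is linearity of expectation together with the unit-norm assumption, so this is the main, and essentially only, pitfall in an otherwise mechanical computation.
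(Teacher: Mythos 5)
Your reduction is sound up to the halved trace identity, but the closing step does not balance---and it fails at exactly the spot you flagged but never actually checked. Writing $\va:=\E_{(\vu,\vv)\sim f_\sharp p_{\op{pos}}}[\vu]$ and $\vb:=\E_{(\vu,\vv)\sim f_\sharp p_{\op{pos}}}[\vv]$ (which equal the marginal means by the matching-marginals property), the term you still need to produce is $\E_{\vu\sim f_\sharp p_x,\,\vv\sim f_\sharp p_y}\left[\vu^\top\vv\right]=\va^\top\vb$, whereas your parallelogram step delivers $\frac{1}{2}\norm{\va+\vb}_2^2-\frac{1}{2}\norm{\va-\vb}_2^2=2\,\va^\top\vb$: off by a factor of two. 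This is not a repairable bookkeeping slip, because the displayed equality is not actually an identity. Take $f_\sharp p_{\op{pos}}$ to be a point mass at $(\ve_1,\ve_1)$: the left-hand side is $1-1+1=1$, while the right-hand side is $0+\frac{1}{2}\norm{2\ve_1}_2^2=2$. Carrying your own computation through correctly yields instead
\begin{equation*}
1-\E_{(\vu,\vv)\sim f_\sharp p_{\op{pos}}}\!\left[\vu^\top\vv\right]+\E_{\substack{\vu\sim f_\sharp p_x\\ \vv\sim f_\sharp p_y}}\!\left[\vu^\top\vv\right]
=\frac{1}{2}\tr\!\left(\Var_{(\vu,\vv)\sim f_\sharp p_{\op{pos}}}[\vu-\vv]\right)+\frac{1}{2}\norm{\va}_2^2+\frac{1}{2}\norm{\vb}_2^2,
\end{equation*}
that is, the correct second term is $\frac{1}{4}\norm{\va+\vb}_2^2+\frac{1}{4}\norm{\va-\vb}_2^2$, not $\frac{1}{2}\norm{\va+\vb}_2^2$.

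For comparison, the paper's own proof takes the same route (variance decomposition combined with $\norm{\E[\vu-\vv]}_2^2-\norm{\E[\vu+\vv]}_2^2=-4\,\va^\top\vb$) and commits the analogous coefficient error in its final ``rearranging and dividing by 2'' step; its concluding display even carries coefficient $1$ on $\norm{\E[\vu+\vv]}_2^2$, inconsistent with the $\frac{1}{2}$ in the lemma statement. So your gap mirrors a defect in the source rather than a missing idea. The downstream damage is limited: the corrected remainder $\frac{1}{2}\tr\left(\Var[\vu-\vv]\right)+\frac{1}{2}\left(\norm{\va}_2^2+\norm{\vb}_2^2\right)$ is still nonnegative, so the inequality of Theorem~\ref{thm:overexpansion} survives verbatim, but its equality condition should read $\E_{\vu\sim f_\sharp\hat{p}_x}[\vu]=\E_{\vv\sim f_\sharp\hat{p}_y}[\vv]=\vzero$ rather than merely $\E[\vu+\vv]=\vzero$. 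To complete your write-up, prove the corrected identity above; no amount of coefficient care will make the stated one come out.
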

\begin{proof}
    Note that
    \begin{align}
         \norm{\E_{ (\vu, \vv) \sim f_\sharp p_{\op{pos}}}\left[\vu-\vv\right]}_2^2
         - \norm{\E_{ (\vu, \vv) \sim f_\sharp p_{\op{pos}}}\left[\vu+\vv\right]}_2^2
         &= \nonumber
         -4\E_{ (\vu, \vv) \sim f_\sharp p_{\op{pos}}}\left[\vu\right]^\top
         \E_{ (\vu, \vv) \sim f_\sharp p_{\op{pos}}}\left[\vv\right]
         \\
         &= \label{eq:lemma:exp:margin}
         -4\E_{ \vu \sim f_\sharp p_{x}}\left[\vu\right]^\top
         \E_{ \vv \sim f_\sharp p_{y}}\left[\vv\right],
    \end{align}
    where the equality in \eqref{eq:lemma:exp:margin} follows from the assumption of matching marginals.

    From the definition of variance, we have
    \begin{align}
        \tr \left(\Var_{ (\vu, \vv) \sim f_\sharp p_{\op{pos}}}[\vu\!-\!\vv] \right)
        &= \nonumber
        \E_{ (\vu, \vv) \sim f_\sharp p_{\op{pos}}}\left[\tr\left( 
        \left( (\vu\!-\!\vv) - \E_{ (\vu, \vv) \sim f_\sharp p_{\op{pos}}}\left[\vu\!-\!\vv\right] \right)
        \left( (\vu\!-\!\vv) - \E_{ (\vu, \vv) \sim f_\sharp p_{\op{pos}}}\left[\vu-\vv\right] \right)^{\!\top}
        \right) \right]
        \\ &= \nonumber
        \E_{ (\vu, \vv) \sim f_\sharp p_{\op{pos}}}\left[\norm{ (\vu-\vv) - \E_{ (\vu, \vv) \sim f_\sharp p_{\op{pos}}}\left[\vu-\vv\right] }_2^2 \right]
        \\ &= \nonumber
        \E_{ (\vu, \vv) \sim f_\sharp p_{\op{pos}}}\left[\norm{\vu-\vv}_2^2 \right] - \norm{\E_{ (\vu, \vv) \sim f_\sharp p_{\op{pos}}}\left[\vu-\vv\right]}_2^2
        \\ &= \nonumber
        \E_{ (\vu, \vv) \sim f_\sharp p_{\op{pos}}}\left[2 - 2 \cdot \vu^\top\vv \right] 
        - \norm{\E_{ (\vu, \vv) \sim f_\sharp p_{\op{pos}}}\left[\vu+\vv\right]}_2^2
        + 4 \E_{ \vu \sim f_\sharp p_x}[\vu]^\top \E_{ \vv \sim f_\sharp p_y}[\vv]
        \\ &= \label{eq:proof:last}
        2-2\E_{ (\vu, \vv) \sim f_\sharp p_{\op{pos}}}\left[\vu^\top\vv \right] 
        - \norm{\E_{ (\vu, \vv) \sim f_\sharp p_{\op{pos}}}\left[\vu+\vv\right]}_2^2
        + 4 \E_{ \vu \sim f_\sharp p_x}[\vu]^{\top} \E_{ \vv \sim f_\sharp p_y}[\vv]
        .
    \end{align}

    By rearranging \eqref{eq:proof:last} and dividing by 2, we have
    \begin{align*}
        \frac{1}{2} \tr\left(\Var_{ (\vu, \vv) \sim f_\sharp p_{\op{pos}}}[\vu-\vv]\right)
        + \norm{\E_{ (\vu, \vv) \sim f_\sharp p_{\op{pos}}}\left[\vu+\vv\right]}_2^2
        &=
        1-\E_{ (\vu, \vv) \sim f_\sharp p_{\op{pos}}}\left[\vu^\top\vv \right] 
        +\E_{ \vu \sim f_\sharp p_x}[\vu]^\top \E_{ \vv \sim f_\sharp p_y}[\vv]
        \\ &=
        1-\E_{ (\vu, \vv) \sim f_\sharp p_{\op{pos}}}\left[\vu^\top\vv \right] 
        +\E_{\substack{ \vu \sim f_\sharp p_x \\ \vv \sim f_\sharp p_y } } 
        \left[ \vu^\top\vv\right]
    \end{align*}
\end{proof}

\begin{lemma}
    \label{thm:main:lemma:empirical}
    Assume that the encoder $f(\cdot)$ satisfies $\norm{f(\vx)}_2^2=1$ for all $\vx$. For any empirical distribution $\hat{p}$ with a sample size of $n$, the following holds.
    \begin{equation*}
        1 - \frac{n-1}{n}\E_{ (\vu, \vv) \sim f_\sharp \hat{p}_{\op{pos}}}\left[\vu^\top\vv \right] 
        + \frac{n-1}{n} \E_{ (\vu, \vv) \sim f_\sharp \hat{p}_{\op{neg}} } \left[ \vu^\top\vv\right]
        = 
        \frac{1}{2}\tr\left(\Var_{ (\vu, \vv) \sim f_\sharp \hat{p}_{\op{pos}}}[\vu-\vv]\right)
        + \frac{1}{2}\norm{\E_{ (\vu, \vv) \sim f_\sharp \hat{p}_{\op{pos}}}\left[\vu+\vv\right]}_2^2.
    \end{equation*}
\end{lemma}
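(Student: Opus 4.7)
The statement is just the empirical-distribution analogue of Lemma~\ref{thm:main:lemma}, so the plan is to obtain it by specializing that lemma to $p = \hat{p}$ and then eliminating the marginal-expectation term using the identity between $\hat{p}_{\op{pos}}$, $\hat{p}_{\op{neg}}$, and $\hat{p}_x \hat{p}_y$ established in Proposition~\ref{thm:appendix:neg:sim}.

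Concretely, I would first invoke Lemma~\ref{thm:main:lemma} with $p = \hat{p}$ to get
\begin{equation*}
1 - \E_{(\vu,\vv) \sim f_\sharp \hat{p}_{\op{pos}}}[\vu^\top \vv] + \E_{\substack{\vu \sim f_\sharp \hat{p}_x \\ \vv \sim f_\sharp \hat{p}_y}}[\vu^\top \vv] = \tfrac{1}{2}\tr\!\left(\Var_{(\vu,\vv)\sim f_\sharp \hat{p}_{\op{pos}}}[\vu-\vv]\right) + \tfrac{1}{2}\norm{\E_{(\vu,\vv)\sim f_\sharp \hat{p}_{\op{pos}}}[\vu+\vv]}_2^2.
\end{equation*}
This already has the correct right-hand side, so the only work left is to rewrite the third term on the left in terms of positive- and negative-pair expectations.

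Second, I would use relation~\eqref{eq:empi:pos:plus:neg} from the proof of Proposition~\ref{thm:appendix:neg:sim}, namely $\hat{p}_x(\vx)\hat{p}_y(\vy) = \tfrac{1}{n}\hat{p}_{\op{pos}}(\vx,\vy) + \tfrac{n-1}{n}\hat{p}_{\op{neg}}(\vx,\vy)$, which (after pushing forward by $f$) immediately gives
\begin{equation*}
\E_{\substack{\vu \sim f_\sharp \hat{p}_x \\ \vv \sim f_\sharp \hat{p}_y}}[\vu^\top \vv] = \tfrac{1}{n}\E_{(\vu,\vv) \sim f_\sharp \hat{p}_{\op{pos}}}[\vu^\top \vv] + \tfrac{n-1}{n}\E_{(\vu,\vv) \sim f_\sharp \hat{p}_{\op{neg}}}[\vu^\top \vv].
\end{equation*}
Substituting this into the previous display and combining the two $\hat{p}_{\op{pos}}$ terms yields the coefficient $-(1-\tfrac{1}{n}) = -\tfrac{n-1}{n}$ on $\E[\vu^\top\vv]$ under positives, while the $\hat{p}_{\op{neg}}$ term contributes $+\tfrac{n-1}{n}\E[\vu^\top\vv]$ under negatives. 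The right-hand side is untouched, giving exactly the claimed identity.

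There is no real obstacle here: Lemma~\ref{thm:main:lemma} is general enough to apply to any distribution (including the empirical one), and Proposition~\ref{thm:appendix:neg:sim} supplies the only nontrivial piece, namely the exact empirical decomposition of the product of marginals. The only mild care needed is to verify that the pushforward commutes with the convex combination (since $f$ is deterministic, this is immediate), and to keep track of the arithmetic $1 - 1 + \tfrac{1}{n} = \tfrac{1}{n}$ when merging the positive-pair terms. Hence the proof is essentially a one-line substitution once the two earlier results are invoked in order.
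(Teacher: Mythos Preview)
Your proposal is correct and essentially identical to the paper's own proof: the paper likewise applies Lemma~\ref{thm:main:lemma} to the empirical distribution $\hat p$ and then substitutes the decomposition $\E_{\vu\sim f_\sharp\hat p_x,\,\vv\sim f_\sharp\hat p_y}[\vu^\top\vv]=\tfrac{1}{n}\E_{f_\sharp\hat p_{\op{pos}}}[\vu^\top\vv]+\tfrac{n-1}{n}\E_{f_\sharp\hat p_{\op{neg}}}[\vu^\top\vv]$ coming from Proposition~\ref{thm:appendix:neg:sim}. The only cosmetic difference is the order in which the two ingredients are stated.
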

\begin{proof}
    Using Proposition~\ref{thm:appendix:neg:sim}, the expectation over the empirical distribution can be decomposed into the expectations of positive and negative pairs as follows.
    \begin{equation*}
        \E_{\substack{ \vu \sim f_\sharp \hat{p}_x \\ \vv \sim f_\sharp \hat{p}_y } } 
        \left[ \vu^\top\vv\right]
        =
        \frac{1}{n}
        \E_{ (\vu, \vv) \sim f_\sharp \hat{p}_{\op{pos}} } 
        \left[ \vu^\top\vv\right]
        +
        \frac{n-1}{n}
        \E_{ (\vu, \vv) \sim f_\sharp \hat{p}_{\op{neg}} } 
        \left[ \vu^\top\vv\right].
    \end{equation*}

    Applying Lemma~\ref{thm:main:lemma} to the empirical distribution $\hat{p}$, we have
    \begin{align*}
        \frac{1}{2}\tr\left(\Var_{ (\vu, \vv) \sim f_\sharp \hat{p}_{\op{pos}}}[\vu\!-\!\vv]\right)
        + \norm{\E_{ (\vu, \vv) \sim f_\sharp \hat{p}_{\op{pos}}}\left[\vu\!+\!\vv\right]}_2^2
        &=
        1-\E_{ (\vu, \vv) \sim f_\sharp \hat{p}_{\op{pos}}}\left[\vu^\top\vv \right] 
        +\E_{\substack{ \vu \sim f_\sharp \hat{p}_x \\ \vv \sim f_\sharp p_y } } 
        \left[ \vu^\top\vv\right]
        \\ &=
        1
        \!-\! \frac{n\!-\!1}{n}
        \E_{ (\vu, \vv) \sim f_\sharp \hat{p}_{\op{pos}} } 
        \left[ \vu^\top\vv\right]
        +
        \frac{n\!-\!1}{n}
        \E_{ (\vu, \vv) \sim f_\sharp \hat{p}_{\op{neg}} } 
        \left[ \vu^\top\vv\right]
        \!
        .
    \end{align*}
\end{proof}

\begin{theorem}
    Assume that the encoder $f(\cdot)$ satisfies $\norm{f(\vx)}_2^2=1$ for all $\vx$. For any empirical distribution $\hat{p}$ with a sample size of $n$, the following inequality holds.
    \begin{equation*}
        \E_{ (\vu, \vv) \sim f_\sharp \hat{p}_{\op{pos}}}\left[\vu^\top\vv \right]
        \leq 1 + \left(\E_{ (\vu, \vv) \sim f_\sharp \hat{p}_{\op{neg}}}\left[\vu^\top\vv \right]
        +\frac{1}{n-1}
        \right)
        ,
    \end{equation*}
    where equality holds if and only if $\tr\left(\Var_{ (\vu, \vv) \sim f_\sharp \hat{p}_{\op{pos}}}[\vu-\vv]\right) =0$ and $\E_{ \vu  \sim f_\sharp \hat{p}_{x}}[\vu] + \E_{ \vv  \sim f_\sharp \hat{p}_{y}}[\vv]= \vzero$.
\end{theorem}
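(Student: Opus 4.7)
The plan is to derive the theorem as an almost immediate corollary of Lemma~\ref{thm:main:lemma:empirical}, which already packages the exact identity we need. The key observation is that the right-hand side of that lemma, namely
$\tfrac{1}{2}\tr(\Var_{(\vu,\vv)\sim f_\sharp \hat{p}_{\op{pos}}}[\vu-\vv]) + \tfrac{1}{2}\norm{\E_{(\vu,\vv)\sim f_\sharp \hat{p}_{\op{pos}}}[\vu+\vv]}_2^2$, is manifestly non-negative: the trace of a covariance matrix is a sum of variances of coordinates, and the second term is a squared Euclidean norm.

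First, I would invoke Lemma~\ref{thm:main:lemma:empirical} and use this non-negativity to get the inequality
\[
1 - \tfrac{n-1}{n}\,\E_{f_\sharp \hat{p}_{\op{pos}}}[\vu^\top\vv] + \tfrac{n-1}{n}\,\E_{f_\sharp \hat{p}_{\op{neg}}}[\vu^\top\vv] \;\geq\; 0.
\]
Then I would multiply both sides by $\tfrac{n}{n-1}$ (positive since $n\geq 2$), rearrange to isolate $\E_{f_\sharp \hat{p}_{\op{pos}}}[\vu^\top\vv]$ on the left, and note that $\tfrac{n}{n-1} = 1 + \tfrac{1}{n-1}$. This yields the claimed inequality in one line.

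For the equality characterization, I would argue that equality in the derived inequality is equivalent to equality in the non-negativity step, i.e.\ the right-hand side of Lemma~\ref{thm:main:lemma:empirical} vanishes. Since both summands are non-negative, both must vanish simultaneously: $\tr(\Var_{f_\sharp \hat{p}_{\op{pos}}}[\vu-\vv]) = 0$, and $\norm{\E_{f_\sharp \hat{p}_{\op{pos}}}[\vu+\vv]}_2^2 = 0$, the latter being equivalent to $\E_{f_\sharp \hat{p}_{\op{pos}}}[\vu+\vv] = \vzero$. Finally, by linearity of expectation and the matching-marginal assumption on $\hat{p}_{\op{pos}}$, I can rewrite $\E_{f_\sharp \hat{p}_{\op{pos}}}[\vu+\vv] = \E_{\vu\sim f_\sharp \hat{p}_x}[\vu] + \E_{\vv\sim f_\sharp \hat{p}_y}[\vv]$, matching the stated condition.

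There is no real obstacle here, since the heavy algebraic lifting was absorbed into Lemma~\ref{thm:main:lemma:empirical}, which in turn followed from Lemma~\ref{thm:main:lemma} together with Proposition~\ref{thm:appendix:neg:sim} relating $\hat{p}_{\op{neg}}$ to $\hat{p}_x \hat{p}_y$ and $\hat{p}_{\op{pos}}$. The only mild care needed is to verify the two-sided implication in the equality case, which follows because a sum of non-negative quantities equals zero iff each summand does; and to confirm that the marginal identity can be used to convert $\E_{f_\sharp \hat{p}_{\op{pos}}}[\vu+\vv]$ into the sum of marginal expectations appearing in the stated equality condition.
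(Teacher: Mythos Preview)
Your proposal is correct and follows essentially the same route as the paper's proof: invoke Lemma~\ref{thm:main:lemma:empirical}, use non-negativity of the trace-of-covariance and squared-norm terms on its right-hand side to obtain the inequality, then rearrange; for the equality case, both non-negative summands must vanish, and the norm condition is rewritten via the matching marginals as $\E_{\vu\sim f_\sharp \hat{p}_x}[\vu] + \E_{\vv\sim f_\sharp \hat{p}_y}[\vv] = \vzero$.
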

\begin{proof}
    From Lemma~\ref{thm:main:lemma:empirical}, and variance and norm are non-negative, we have
    \begin{align}
        \E_{ (\vu, \vv) \sim f_\sharp \hat{p}_{\op{pos}}}\left[\vu^\top\vv \right]
        &= \nonumber
        \frac{n}{n-1}
        +
        \E_{ (\vu, \vv) \sim f_\sharp \hat{p}_{\op{neg}} } \left[ \vu^\top\vv\right]
        \\& \qquad \nonumber
        -\frac{n}{2(n-1)}\tr\left(\Var_{ (\vu, \vv) \sim f_\sharp \hat{p}_{\op{pos}}}[\vu-\vv]\right)
        - \frac{n}{2(n-1)}\norm{\E_{ (\vu, \vv) \sim f_\sharp \hat{p}_{\op{pos}}}\left[\vu+\vv\right]}_2^2
        \\&\leq \label{eq:overexpansion:ineqaulity}
        \frac{n}{n-1}
        +
        \E_{ (\vu, \vv) \sim f_\sharp \hat{p}_{\op{neg}} } \left[ \vu^\top\vv\right],
    \end{align}
    where equality in \eqref{eq:overexpansion:ineqaulity} holds if and only if $\tr\left(\Var_{ (\vu, \vv) \sim f_\sharp \hat{p}_{\op{pos}}}[\vu-\vv]\right) =0$ and $\norm{\E_{ (\vu, \vv) \sim f_\sharp \hat{p}_{\op{pos}}}[\vu+\vv]}_2^2 = 0$. Moreover, the condition of $\norm{\E_{ (\vu, \vv) \sim f_\sharp \hat{p}_{\op{pos}}}[\vu+\vv]}_2^2 = 0$ is equal to $\E_{ \vu  \sim f_\sharp \hat{p}_{x}}[\vu] + \E_{ \vv  \sim f_\sharp \hat{p}_{y}}[\vv]= \vzero$. As a result, the following holds:
    \begin{equation*}
        \E_{ (\vu, \vv) \sim f_\sharp \hat{p}_{\op{pos}}}\left[\vu^\top\vv \right]
        \leq 1 + \left(\E_{ (\vu, \vv) \sim f_\sharp \hat{p}_{\op{neg}}}\left[\vu^\top\vv \right]
        +\frac{1}{n-1}
        \right).
    \end{equation*}
\end{proof}

\subsection{Proofs for Full-Batch CL}
\label{appendix:full:batch}

\begin{lemma}[Restatement of Lemma 1 in \citet{lee2024analysis}]
    \label{thm:pos:neg:inequality}
    Let $\vu_1, \vv_1, \vu_2, \vv_2, \cdots \vu_n, \vv_n$ be $2n$ vectors, satisfying $\vu_i^\top\vu_i=\vv_i^\top\vv_i=1$ for all $i\in [n]$.
    Then, the following inequality holds.
    \begin{align}
        \frac{1}{n(n-1)}\sum_{i \in [n]} \sum_{j\in [n] \setminus \{i\}} \vu_i^\top \vv_j
        \geq
        \frac{n-2}{2n(n-1)}\sum_{i \in [n]} \vu_i^\top\vv_i
        - \frac{n}{2(n-1)},
    \end{align}
    where the equality conditions are
    \begin{align*}
    \begin{cases}
        \vu_i -\vv_i =\vc \text{ for all } i \in [n], \text{ for some constant vector } \vc, \\
        \sum_{i \in [n]} \vu_i + \sum_{i\in [n]} \vv_i= \vzero.
    \end{cases}        
    \end{align*}
\end{lemma}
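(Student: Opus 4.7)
The plan is to reformulate the target inequality in terms of the column-sum vectors $\vA := \sum_{i\in[n]} \vu_i$ and $\vB := \sum_{i\in[n]} \vv_i$. Since $\vA^\top \vB = \sum_{i,j}\vu_i^\top\vv_j = \sum_{i\in[n]} \vu_i^\top\vv_i + \sum_{i\neq j}\vu_i^\top\vv_j$, multiplying the target inequality by $n(n-1)$ and moving the diagonal term to the left reduces the lemma to showing
\[
\vA^\top \vB \;\geq\; \frac{n}{2}\sum_{i\in[n]} \vu_i^\top\vv_i \;-\; \frac{n^2}{2}.
\]

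The main step is then the polarization identity $4\,\vA^\top\vB = \norm{\vA+\vB}_2^2 - \norm{\vA-\vB}_2^2$, combined with two elementary estimates. The first is trivial: $\norm{\vA+\vB}_2^2 \geq 0$, with equality iff $\sum_{i\in[n]}(\vu_i + \vv_i) = \vzero$, which is precisely the second equality condition of the lemma. The second applies Cauchy--Schwarz (equivalently, Jensen's inequality for $x\mapsto x^2$) to the $n$ vectors $\vu_i - \vv_i$, giving
\[
\norm{\vA - \vB}_2^2 \;=\; \norm{\sum_{i\in[n]}(\vu_i-\vv_i)}_2^2 \;\leq\; n \sum_{i\in[n]} \norm{\vu_i-\vv_i}_2^2 \;=\; 2n^2 - 2n\sum_{i\in[n]}\vu_i^\top\vv_i,
\]
where the last equality uses the unit-norm assumption $\norm{\vu_i}_2 = \norm{\vv_i}_2 = 1$. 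Cauchy--Schwarz is tight exactly when the vectors $\vu_i - \vv_i$ all coincide with a common $\vc$, which matches the first equality condition.

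Substituting both estimates into the polarization identity yields $4\vA^\top\vB \geq 2n\sum_{i\in[n]}\vu_i^\top\vv_i - 2n^2$, i.e.\ the reformulated inequality. Dividing by $4$, subtracting $\sum_{i\in[n]}\vu_i^\top\vv_i$ to convert $\vA^\top\vB$ back into the off-diagonal sum, and then dividing by $n(n-1)$ restores the original form of the statement. The equality conditions traced through the two inequalities simultaneously yield exactly the pair listed in the lemma.

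I do not anticipate a serious obstacle: once the right aggregates are isolated, the proof collapses to a one-line application of the parallelogram identity together with a standard Cauchy--Schwarz bound. The only point that requires care is bookkeeping the diagonal term $\sum_{i\in[n]}\vu_i^\top\vv_i$ when passing between the unrestricted double sum $\vA^\top\vB$ and the off-diagonal sum $\sum_{i\neq j}\vu_i^\top\vv_j$ that appears in the statement, and verifying that the two independent equality criteria from the polarization step align with the pair of conditions claimed in the lemma.
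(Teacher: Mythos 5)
Your proposal is correct and is essentially the paper's own argument, just reorganized: the paper applies Jensen's inequality to $\frac{1}{n}\sum_i(\vu_i-\vv_i)$, rewrites via the polarization identity, and drops the nonnegative $\bigl\lVert\sum_i(\vu_i+\vv_i)\bigr\rVert_2^2$ term, which are exactly your three ingredients in a different order, with the same diagonal/off-diagonal bookkeeping and the same pair of equality conditions. No gap.
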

\begin{proof}
    By using Jensen’s inequality, we have
    \begin{align*}
        \frac{1}{n} \sum_{i\in[n]} \| \vu_i - \vv_i \|_2^2
        &\geq
        \left\| \frac{1}{n} \sum_{i\in[n]} (\vu_i - \vv_i) \right\|_2^2
        \\
        &=
        \left\| \frac{1}{n} \sum_{i\in[n]} (\vu_i + \vv_i) \right\|_2^2
        - 4\left(\frac{1}{n}\sum_{i\in[n]} \vu_i\right)^\top \left(\frac{1}{n}\sum_{i\in[n]}\vv_i\right)
        \\
        &\geq
        - 4 \left(\frac{1}{n}\sum_{i\in[n]} \vu_i\right)^\top \left(\frac{1}{n}\sum_{i\in[n]}\vv_i\right),
    \end{align*}
    where the equality conditions are 
    \begin{align}\label{eq:inequality:lemma:condition}
    \begin{cases}
        \vu_i -\vv_i =\vc \text{ for all } i \in [n], \text{ for some constant vector } \vc, \\
        \sum_{i \in [n]} \vu_i + \sum_{i\in [n]} \vv_i= \vzero.
    \end{cases}
    \end{align}
    From the above, it follows that
    \begin{align}
        \left(\frac{1}{n}\sum_{i\in[n]} \vu_i\right)^\top \left(\frac{1}{n}\sum_{i\in[n]}\vv_i\right)
        &\geq \label{eq:inequality:lemma:main}
        - \frac{1}{4n} \sum_{i\in [n]} \| \vu_i - \vv_i \|_2^2
        \\ &= \label{eq:inequality:lemma:main:final}
        - \frac{1}{2} +  \frac{1}{2n} \sum_{i\in [n]} \vu_i^\top \vv_i
        ,
    \end{align}
    where the last equality uses $\| \vu_i - \vv_i \|_2^2=2-2\vu_i^\top \vv_i$ for all $i\in [n]$.

    Note that the inner product of centroids is
    \begin{align*}
        \left(\frac{1}{n}\sum_{i\in[n]} \vu_i\right)^\top \left(\frac{1}{n}\sum_{i\in[n]}\vv_i\right)
        &=
        \frac{1}{n^2}\sum_{i\in [n]} \vu_i^\top \vv_i
        + \frac{1}{n^2}\sum_{i\in [n]} \sum_{j\in [n] \setminus \{i\}} \vu_i^\top \vv_j.
    \end{align*}

    Combining this with \eqref{eq:inequality:lemma:main:final}, we have
    \begin{align*}
        \frac{1}{n^2}\sum_{i\in [n]} \sum_{j\in [n] \setminus \{i\}} \vu_i^\top \vv_j
        &=
        \left(\frac{1}{n}\sum_{i\in[n]} \vu_i\right)^\top \left(\frac{1}{n}\sum_{i\in[n]}\vv_i\right)
        - \frac{1}{n^2}\sum_{i\in [n]} \vu_i^\top \vv_i
        \\
        &\geq
        - \frac{1}{2} +  \frac{1}{2n} \sum_{i\in [n]} \vu_i^\top \vv_i
        - \frac{1}{n^2}\sum_{i\in [n]} \vu_i^\top \vv_i
        \\
        &=
        \frac{n-2}{2n^2}\sum_{i\in [n]}  \vu_i^\top\vv_i
        - \frac{1}{2}.
    \end{align*}
    The inequality follows from \eqref{eq:inequality:lemma:main}, with equality achieved under the conditions specified in \eqref{eq:inequality:lemma:condition}.
\end{proof}

\begin{lemma}
    \label{thm:pos:neg:inequality:v2}
    Let $\vu_1, \vv_1, \vu_2, \vv_2, \cdots \vu_n, \vv_n$ be $2n$ vectors, satisfying $\vu_i^\top\vu_i=\vv_i^\top\vv_i=1$ for all $i\in [n]$.
    Then, the following inequality holds.
    \begin{align}
        \frac{1}{n(n-1)}
        \sum_{i\in [n]} \sum_{j\in [n] \setminus \{i\}}  (\vu_i^\top\vu_j + \vv_i^\top\vv_j + 2 \vu_i^\top\vv_i)
        \geq
        - \frac{2}{n(n-1)} \sum_{i \in[n]} \vu_i^\top\vv_i
        - \frac{2}{n-1},
    \end{align}
    where equality holds if and only if $\sum_{i\in[n]} (\vu_i + \vv_i)= \vzero$.
\end{lemma}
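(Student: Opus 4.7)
The plan is to prove the inequality by expanding a single non-negative quantity, namely $\bigl\|\sum_{i\in[n]}(\vu_i + \vv_i)\bigr\|_2^2 \geq 0$, and matching terms. This choice is dictated by the stated equality condition: a squared Euclidean norm vanishes exactly when the underlying vector does, which is precisely $\sum_{i\in[n]}(\vu_i + \vv_i) = \vzero$. So the whole lemma should reduce to recognizing the left-hand side minus the right-hand side as $\tfrac{1}{n(n-1)}\bigl\|\sum_i(\vu_i+\vv_i)\bigr\|_2^2$.

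Concretely, I would first expand
\begin{equation*}
\Bigl\|\sum_{i\in[n]}(\vu_i+\vv_i)\Bigr\|_2^2 \;=\; \sum_{i\in[n]}\|\vu_i+\vv_i\|_2^2 \;+\; \sum_{i\neq j\in[n]}(\vu_i+\vv_i)^\top(\vu_j+\vv_j).
\end{equation*}
Using $\|\vu_i\|_2 = \|\vv_i\|_2 = 1$, the diagonal terms give $\sum_i\|\vu_i+\vv_i\|_2^2 = 2n + 2\sum_i \vu_i^\top\vv_i$. The cross terms expand as $(\vu_i+\vv_i)^\top(\vu_j+\vv_j) = \vu_i^\top\vu_j + \vv_i^\top\vv_j + \vu_i^\top\vv_j + \vv_i^\top\vu_j$, and by swapping the dummy indices $(i,j)$ one has $\sum_{i\neq j}\vv_i^\top\vu_j = \sum_{i\neq j}\vu_i^\top\vv_j$, collapsing the cross terms into $\sum_{i\neq j}(\vu_i^\top\vu_j + \vv_i^\top\vv_j + 2\vu_i^\top\vv_j)$.

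Second, I would apply $\|\cdot\|_2^2 \geq 0$, rearrange, and divide by $n(n-1)$ to obtain exactly the stated inequality (reading the "$2\vu_i^\top\vv_i$" inside the double sum as the cross-view term $2\vu_i^\top\vv_j$, since this is the grouping required to recover the square). The equality case is then automatic: the squared norm vanishes if and only if $\sum_{i\in[n]}(\vu_i+\vv_i) = \vzero$.

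The main obstacle is essentially none of substance: the proof is a direct expansion, and the only care needed is to pair the two symmetric cross-view contributions $\vu_i^\top\vv_j$ and $\vv_i^\top\vu_j$ into a single $2\vu_i^\top\vv_j$. In spirit this parallels the proof of Lemma~\ref{thm:pos:neg:inequality}, but uses the simpler identity $\bigl\|\sum_i(\vu_i+\vv_i)\bigr\|_2^2 \geq 0$ directly instead of invoking Jensen's inequality on the pairwise differences $\vu_i - \vv_i$.
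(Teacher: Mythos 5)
Your proof is correct and follows essentially the same route as the paper: expand $\bigl\|\sum_{i\in[n]}(\vu_i+\vv_i)\bigr\|_2^2 \geq 0$, use $\|\vu_i\|_2=\|\vv_i\|_2=1$ and the symmetry $\sum_{i\neq j}\vv_i^\top\vu_j=\sum_{i\neq j}\vu_i^\top\vv_j$, rearrange, and read off the equality condition $\sum_{i\in[n]}(\vu_i+\vv_i)=\vzero$. Your decision to read the "$2\vu_i^\top\vv_i$" inside the double sum as the cross-view term $2\vu_i^\top\vv_j$ is also the right call: the literal statement is false (e.g.\ $n=2$, $\vu_1=\vv_2=-\vu_2=-\vv_1$), and the paper's own expansion and its subsequent use in the full-batch theorem implicitly adopt the same corrected reading, so the statement's $\vv_i$ is a typo for $\vv_j$.
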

\begin{proof}
    Note that
    \begin{align*}
        \left\| \sum_{i\in[n]} (\vu_i + \vv_i) \right\|_2^2
        &=
        \left(\sum_{i\in[n]} \vu_i\right)^\top \left(\sum_{i\in[n]}\vu_i\right)
        + \left(\sum_{i\in[n]} \vv_i\right)^\top \left(\sum_{i\in[n]}\vv_i\right)
        + 2 \left(\sum_{i\in[n]} \vu_i\right)^\top \left(\sum_{i\in[n]}\vv_i\right)
        \\
        &=
        n + \sum_{i\in [n]} \sum_{j\in [n] \setminus \{i\}} \vu_i^\top\vu_j
        + n + \sum_{i\in [n]} \sum_{j\in [n] \setminus \{i\}} \vv_i^\top\vv_j
        + 2 \sum_{i\in [n]} \vu_i^\top\vv_i
        + 2 \sum_{i\in [n]} \sum_{j\in [n] \setminus \{i\}} \vu_i^\top\vv_j
        \\
        &=
        2n 
        + 
        \sum_{i\in [n]} \sum_{j\in [n] \setminus \{i\}}
        \left( \vu_i^\top\vu_j
        + \vv_i^\top\vv_j
        + 2 \vu_i^\top\vv_j
        \right)
        + 2 \sum_{i\in [n]} \vu_i^\top\vv_i.
    \end{align*}
    Rearranging terms, we have
    \begin{align*}
        \frac{1}{n(n-1)}
        \sum_{i\in [n]} \sum_{j\in [n] \setminus \{i\}}  (\vu_i^\top\vu_j + \vv_i^\top\vv_j + 2 \vu_i^\top\vv_i)
        &=
        \frac{1}{n(n-1)} \left\| \sum_{i\in[n]} (\vu_i + \vv_i) \right\|_2^2
        - \frac{2}{n(n-1)} \sum_{i \in[n]} \vu_i^\top\vv_i
        - \frac{2}{n-1}
        \\ &\geq
        - \frac{2}{n(n-1)} \sum_{i \in[n]} \vu_i^\top\vv_i
        - \frac{2}{n-1},
    \end{align*}
    where the equality condition is $\sum_{i\in[n]} (\vu_i + \vv_i)= \vzero$.
\end{proof}

\begin{lemma}
    \label{thm:pos:neg:inequality:v3}
    Let $\vu_1, \vv_1, \vu_2, \vv_2, \cdots \vu_n, \vv_n$ be $2n$ vectors, satisfying $\vu_i^\top\vu_i=\vv_i^\top\vv_i=1$ for all $i\in [n]$.
    Then, the following inequality holds.
    \begin{align}
        \frac{1}{n(n-1)}
        \sum_{i\in [n]} \sum_{j\in [n] \setminus \{i\}}  (\vu_i^\top\vu_j + \vv_i^\top\vv_j)
        \geq
        - \frac{2}{n-1},
    \end{align}
    where equality holds if and only if $\sum_{i\in[n]} \vu_i =\sum_{i\in[n]} \vv_i= \vzero$.
\end{lemma}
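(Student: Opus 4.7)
The plan is to reduce the claim to two separate non-negativity statements for squared norms, mirroring the structure of Lemma~\ref{thm:pos:neg:inequality:v2} but applied to each view independently. The key observation is that the left-hand side splits cleanly into $\vu$-terms and $\vv$-terms that do not interact, which suggests bounding each sum separately rather than using a combined vector like $\sum_i(\vu_i+\vv_i)$ as in the previous lemma.

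The first step is to write
\begin{equation*}
\Big\|\sum_{i\in[n]}\vu_i\Big\|_2^2 \;=\; \sum_{i\in[n]}\|\vu_i\|_2^2 \;+\; \sum_{i\in[n]}\sum_{j\in[n]\setminus\{i\}}\vu_i^\top\vu_j \;=\; n \;+\; \sum_{i\neq j\in[n]}\vu_i^\top\vu_j,
\end{equation*}
using the unit-norm assumption. Since the left side is non-negative, we immediately obtain $\sum_{i\neq j}\vu_i^\top\vu_j \geq -n$. Repeating the same expansion for $\sum_{i}\vv_i$ yields $\sum_{i\neq j}\vv_i^\top\vv_j \geq -n$.

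Adding the two inequalities and dividing by $n(n-1)$ gives
\begin{equation*}
\frac{1}{n(n-1)}\sum_{i\in[n]}\sum_{j\in[n]\setminus\{i\}}\left(\vu_i^\top\vu_j+\vv_i^\top\vv_j\right) \;\geq\; -\frac{2n}{n(n-1)} \;=\; -\frac{2}{n-1},
\end{equation*}
which is exactly the claimed bound. For the equality characterization, equality in each of the two intermediate inequalities holds if and only if $\|\sum_i\vu_i\|_2^2=0$ and $\|\sum_i\vv_i\|_2^2=0$ respectively, i.e., $\sum_{i\in[n]}\vu_i=\vzero$ and $\sum_{i\in[n]}\vv_i=\vzero$ simultaneously. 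Both are necessary since the two bounds are added, and each is independently tight, so the equality conditions combine conjunctively.

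I do not anticipate a significant obstacle here: unlike Lemma~\ref{thm:pos:neg:inequality} and Lemma~\ref{thm:pos:neg:inequality:v2}, there are no cross terms $\vu_i^\top\vv_j$ to handle, so no Jensen-type step or centroid inner product bound is needed. The only thing to be careful about is stating the equality condition as a conjunction of two separate vanishing centroids rather than a single combined condition.
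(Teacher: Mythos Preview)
Your proof is correct and follows essentially the same approach as the paper: both arguments rewrite the off-diagonal sums via $\big\|\sum_i\vu_i\big\|_2^2 = n + \sum_{i\neq j}\vu_i^\top\vu_j$ (and likewise for $\vv$) and then invoke non-negativity of the squared norms, with equality precisely when both centroids vanish. The only cosmetic difference is that the paper combines the two expansions into a single displayed identity before applying the bound, whereas you treat the $\vu$- and $\vv$-sums separately and then add.
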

\begin{proof}
    Since $\vu_i^\top\vu_i=\vv_i^\top\vv_i=1$ for all $i\in [n]$, we have
    \begin{align}
        \frac{1}{n(n-1)}
        \sum_{i\in [n]} \sum_{j\in [n] \setminus \{i\}}  \left(\vu_i^\top\vu_j + \vv_i^\top\vv_j\right)
        &= \nonumber
        \frac{1}{n(n-1)}
        \left(
        \sum_{i\in [n]} \sum_{j\in [n]} \vu_i^\top\vu_j
        - \sum_{i\in [n]} \vu_i^\top\vu_i
        +\sum_{i\in [n]} \sum_{j\in [n]} \vv_i^\top\vv_j
        - \sum_{i\in [n]} \vv_i^\top\vv_i
        \right)
        \\ &= \nonumber
        \frac{1}{n(n-1)}
        \left(
        \norm{\sum_{i \in[n]} \vu_i}_2^2
        + \norm{ \sum_{j \in[n]} \vv_j }_2^2
        \right)
        - \frac{2}{n-1}
        \\ &\geq \nonumber
        - \frac{2}{n-1},
    \end{align}
    where the equality condition is $\sum_{i\in[n]} \vu_i =\sum_{i\in[n]} \vv_i= \vzero$.
\end{proof}

\begin{theorem}
    \label{appendix:thm:full:batch}
    Suppose that $d\geq n-1$. Let the contrastive loss $\loss\left(\vU, \vV\right)$ be one of the following forms.
    \begin{enumerate}
        \renewcommand{\labelenumi}{\roman{enumi}.}
        \item $\loss_{\op{info-sym}}\left(\vU, \vV\right)$ in Def.~\ref{def:loss:infonce}.
        \item $\loss_{\op{ind-add}}\left(\vU, \vV\right)$ in Def.~\ref{def:loss:ind}, where $(c_1,c_2) \in  \{ (0,1), (1,1)\}$.
        \item $\loss_{\op{ind-add}}\left(\vU, \vV\right)$ in Def.~\ref{def:loss:ind}, where $(c_1, c_2)=(1,0)$ and $\phi' \left( 1 \right) > \frac{n-2}{2(n-1)}  \cdot  \psi' \left( -\frac{1}{n-1} \right)$.
    \end{enumerate}
    Then, the embedding similarities for the full-batch optimal encoder $f^\star$ in \eqref{eq:expected:loss} satisfy
    \begin{align*}
        \vs(f^\star ; \hat{p}_{\op{pos}}) = 1,
        \qquad
        \vs(f^\star ; \hat{p}_{\op{neg}}) = - \frac{1}{n-1}
        .
    \end{align*}
\end{theorem}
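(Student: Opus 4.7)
The plan is to prove this theorem by establishing, in each of the three cases, a universal lower bound on the expected contrastive loss and then exhibiting the simplex equiangular tight frame (ETF) configuration---$\vu_i = \vv_i$ for all $i$ and $\vu_i^\top \vv_j = -\frac{1}{n-1}$ for $i \neq j$---as a minimizer. Since the assumption $d \geq n-1$ guarantees the ETF is geometrically realizable, feasibility is immediate; the content is entirely in the lower bound. The central algebraic tools are the three pair-sum inequalities in the appendix (Lemmas~\ref{thm:pos:neg:inequality}, \ref{thm:pos:neg:inequality:v2}, and~\ref{thm:pos:neg:inequality:v3}): each relates an average of negative-pair similarities to an average of positive-pair similarities, and each has equality conditions that coincide precisely with the simplex-ETF geometry.

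For case (i), I would work sample-wise in $(\vU,\vV)$ and apply Jensen's inequality twice: first to the convex increasing outer $\psi$ to pull the average over $i$ inside, then to the convex increasing inner $\phi$ to pull the average over $j \neq i$ inside. This collapses the loss to a convex monotone function of a single scalar which, after expansion, becomes an affine combination of $\bar{t} := \frac{1}{n}\sum_i \vu_i^\top \vv_i$ and the appropriate average negative-pair similarity. Invoking the corresponding pair-sum lemma (\ref{thm:pos:neg:inequality} when $c_1=1,c_2=0$; \ref{thm:pos:neg:inequality:v3} when $c_1=0,c_2=1$; \ref{thm:pos:neg:inequality:v2} when $c_1=c_2=1$) yields a lower bound that is monotone decreasing in $\bar{t}$, uniquely minimized at $\bar{t}=1$, with the equality conditions of both Jensen steps and of the lemma pinning the optimizer down to the ETF.

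For case (ii), the additive form already decouples the $\phi$ terms on positives from the $\psi$ terms on negatives, so Jensen applied to concave $\phi$ and convex $\psi$ reduces the loss to $-\phi(\bar{t}) + \psi(\bar{s})$ for suitable averaged positive and negative similarities. The within-view lemma (\ref{thm:pos:neg:inequality:v3}, or \ref{thm:pos:neg:inequality:v2} when $c_1=1$) then forces $\bar{s} \geq -\frac{1}{n-1}$ with equality iff the view means vanish, and monotonicity delivers $\bar{t}=1$. Case (iii) is the subtle one: applying Lemma~\ref{thm:pos:neg:inequality} produces the one-variable lower bound
\begin{equation*}
g(\bar{t}) \;:=\; -\phi(\bar{t}) \;+\; \psi\!\left(\tfrac{n-2}{2(n-1)}\bar{t} \;-\; \tfrac{n}{2(n-1)}\right),
\end{equation*}
whose derivative at $\bar{t}=1$ is exactly $-\phi'(1) + \tfrac{n-2}{2(n-1)}\psi'(-\tfrac{1}{n-1})$, so the hypothesis is precisely $g'(1)<0$. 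Since $-\phi'$ is non-decreasing (concavity of $\phi$) and $\psi'$ is non-decreasing (convexity of $\psi$), $g'$ is non-decreasing on $[-1,1]$, hence negative throughout, placing the minimum at $\bar{t}=1$.

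The main obstacle I anticipate is reconciling the randomness of augmentations with the deterministic nature of the claimed identities $\vs(f^\star;\hat{p}_{\op{pos}})=1$ and $\vs(f^\star;\hat{p}_{\op{neg}})=-\frac{1}{n-1}$, which must hold as equalities of random variables rather than merely in expectation. The saving grace is that every Jensen step in the argument is strict unless the integrand is constant almost surely; combining these Jensen equality conditions with the equality conditions of the pair-sum lemmas should force the optimal encoder to collapse augmentation variability entirely, leaving the ETF as the unique minimizer. Carefully threading the equality cases through the double Jensen reduction in case (i) will be the most delicate bookkeeping of the proof.
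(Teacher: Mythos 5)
Your proposal is correct and follows essentially the same route as the paper's proof: a double Jensen reduction of each loss to a monotone convex function of averaged similarities, the three pair-sum lemmas (\ref{thm:pos:neg:inequality}, \ref{thm:pos:neg:inequality:v2}, \ref{thm:pos:neg:inequality:v3}) with their equality conditions pinning down the simplex ETF, and for case (iii) the one-variable convex function $h(x)=-\phi(x)+\psi\bigl(\tfrac{n-2}{2(n-1)}x-\tfrac{n}{2(n-1)}\bigr)$ with the hypothesis being exactly $h'(1)<0$. The only (inessential) differences are the order of the two Jensen steps in case (i) and that the paper inserts one more Jensen step over the augmentation expectation where you argue pointwise per realization.
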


\begin{theorem}
    \label{appendix:thm:full:batch:overexpansion}
    Suppose that $d\geq n$. Let the contrastive loss $\loss\left(\vU, \vV\right)$ be the form of $\loss_{\op{ind-add}}\left(\vU, \vV\right)$ in Def.~\ref{def:loss:ind}, where $(c_1, c_2)=(1,0)$ and $\phi' \left( 1 \right) < \frac{n-2}{2(n-1)}  \cdot  \psi' \left( -\frac{1}{n-1} \right)$.
    Then, embedding similarities for the full-batch optimal encoder $f^\star$ in \eqref{eq:expected:loss} satisfy  
    \begin{align*}
        \vs(f^\star ; \hat{p}_{\op{pos}}) < 1,
        \qquad
        \vs(f^\star ; \hat{p}_{\op{neg}}) < - \frac{1}{n-1}.
    \end{align*}
\end{theorem}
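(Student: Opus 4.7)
The plan is to obtain matching lower and upper bounds on the optimal expected loss by reducing to a one-dimensional convex problem in the averaged similarities, and then to leverage the hypothesis~\eqref{eq::full:batch:overexpansion:condition} to push the optimizer strictly off the simplex ETF configuration. Let $\alpha := \tfrac{1}{n}\sum_i \E[\vu_i^\top \vv_i]$ and $\beta := \tfrac{1}{n(n-1)}\sum_{i\neq j}\E[\vu_i^\top \vv_j]$ denote the averaged positive- and negative-pair similarities. Since $-\phi$ is convex and $\psi$ is convex, two applications of Jensen's inequality (one across augmentation randomness, one across indices) yield
\begin{equation*}
\E[\loss_{\op{ind-add}}(\vU, \vV)] \;\geq\; -\phi(\alpha) + \psi(\beta),
\end{equation*}
while taking expectations in Lemma~\ref{thm:pos:neg:inequality} supplies the coupling $\beta \geq \tfrac{n-2}{2(n-1)}\alpha - \tfrac{n}{2(n-1)}$. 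Because $\psi$ is increasing, the minimizer saturates this constraint, collapsing the problem to minimizing the one-variable function
\begin{equation*}
F(\alpha) \;:=\; -\phi(\alpha) + \psi\!\left(\tfrac{n-2}{2(n-1)}\alpha - \tfrac{n}{2(n-1)}\right).
\end{equation*}

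Next I would examine $F$ at $\alpha = 1$. Since $-\phi$ is convex and $\psi$ composed with an affine increasing map is convex, $F$ is convex, and a direct derivative computation gives
\begin{equation*}
F'(1) \;=\; -\phi'(1) + \tfrac{n-2}{2(n-1)}\,\psi'\!\left(-\tfrac{1}{n-1}\right),
\end{equation*}
which is \emph{strictly positive} exactly under the hypothesis~\eqref{eq::full:batch:overexpansion:condition}. Convexity of $F$ combined with $F'(1)>0$ forces the global minimizer $\alpha^\star$ on $[-1,1]$ to satisfy $\alpha^\star < 1$, whereupon the saturation relation immediately yields $\beta^\star = \tfrac{n-2}{2(n-1)}\alpha^\star - \tfrac{n}{2(n-1)} < -\tfrac{1}{n-1}$---precisely the strict inequalities the theorem asserts, once tightness of the lower bound is verified.

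To establish tightness, I would construct an explicit optimal configuration using the extra dimension afforded by $d\geq n$. Take $\{\ve_i\}_{i\in[n]}$ to be a simplex ETF spanning an $(n-1)$-dimensional subspace and let $\vd$ be a unit vector in its orthogonal complement. Setting $r^\star := \sqrt{(1+\alpha^\star)/2}$ and $s := \sqrt{1-(r^\star)^2}$, define $\bar{\vu}_i := r^\star \ve_i + s\vd$ and $\bar{\vv}_i := r^\star \ve_i - s\vd$; a short calculation verifies $\|\bar{\vu}_i\| = \|\bar{\vv}_i\| = 1$, $\bar{\vu}_i^\top \bar{\vv}_i = \alpha^\star$, and $\bar{\vu}_i^\top \bar{\vv}_j = \beta^\star$ for $i\neq j$, while the equality conditions of Lemma~\ref{thm:pos:neg:inequality} hold because $\bar{\vu}_i - \bar{\vv}_i = 2s\vd$ is index-independent and $\sum_i(\bar{\vu}_i + \bar{\vv}_i) = 2r^\star \sum_i \ve_i = \vzero$. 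Choosing $f^\star$ to deterministically send every augmented view of instance $i$ in the first (resp.\ second) position to $\bar{\vu}_i$ (resp.\ $\bar{\vv}_i$) then pins $\E[\loss]$ to $F(\alpha^\star)$, closing the argument.

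The main obstacle I anticipate is justifying this asymmetric view assignment in the unimodal case, where the two views are exchangeable under the augmentation distribution so that a single deterministic encoder cannot distinguish them. I expect to handle this either by appealing to the symmetry of the objective under view swap (a symmetrized encoder achieves the same expected loss as either deterministic option) or by interpreting the setup in the multimodal sense---consistent with the paper's motivating SigLIP example, whose original context is CLIP-style multimodal training---where the two views live in genuinely different input spaces and the encoder can map them to different vectors.
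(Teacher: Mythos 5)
Your proposal follows essentially the same route as the paper's proof: two applications of Jensen's inequality together with Lemma~\ref{thm:pos:neg:inequality} reduce the problem to minimizing the one-variable convex function $h(x)=-\phi(x)+\psi\!\left(\tfrac{n-2}{2(n-1)}x-\tfrac{n}{2(n-1)}\right)$, and under condition~\eqref{eq::full:batch:overexpansion:condition} the sign $h'(1)>0$ forces the minimizer strictly below $1$, hence the negative-pair similarity strictly below $-\tfrac{1}{n-1}$. The only difference is in the attainability step, which the paper delegates to Proposition 1 of \citet{lee2024analysis} for $d\geq n$, whereas you build the shifted simplex-ETF configuration $\bar{\vu}_i=r^\star\ve_i+s\vd$, $\bar{\vv}_i=r^\star\ve_i-s\vd$ explicitly (your Gram computations check out), and the view-assignment caveat you raise for the unimodal case is a genuine modeling subtlety that the paper sidesteps in the same way, by arguing at the level of embedding configurations rather than encoder realizability.
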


\begin{proof}[Proof of Theorem.~\ref{appendix:thm:full:batch} and Theorem.~\ref{appendix:thm:full:batch:overexpansion}]

We prove for each category of loss, $\loss_{\op{info-sym}}\left(\vU, \vV\right)$ in Def.~\ref{def:loss:infonce} and $\loss_{\op{ind-add}}\left(\vU, \vV\right)$ in Def.~\ref{def:loss:ind}, separately.

First, consider the case of $\loss_{\op{info-sym}}\left(\vU, \vV\right)$ in Def.~\ref{def:loss:infonce}. By using Jensen's inequality, we have

\begin{align}
    \loss_{\op{info}}\left(\vU, \vV\right)
    &=
    \frac{1}{n} \sum_{i\in [n]} \psi \left(
    c_1\sum_{j\in [n] \setminus \{i\}} \phi \left( (\vv_j - \vv_i)^\top \vu_i \right)
    + c_2\sum_{j\in [n] \setminus \{i\}} \phi \left( (\vu_j - \vv_i)^\top \vu_i \right)
    \right) \nonumber
    \\ &\geq
    \frac{1}{n} \sum_{i\in [n]} \psi \left(
    (c_1 +c_2) (n-1) \cdot  \phi \left(
        \frac{1}{(c_1 +c_2) (n-1)}
        \sum_{j\in [n] \setminus \{i\}} \!
        \left( c_1 (\vv_j - \vv_i)^\top \vu_i
        + c_2 (\vu_j - \vv_i)^\top \vu_i \right)
    \right)
    \right) \label{jensen1}
    \\ &=
    \frac{1}{n} \sum_{i\in [n]} \psi \left(
    (c_1 +c_2) (n-1) \cdot  \phi \left(
        \frac{1}{(c_1 +c_2) (n-1)}
        \sum_{j\in [n] \setminus \{i\}}
        \left(
            c_1 \vv_j^\top \vu_i + c_2 \vu_j^\top \vu_i 
            - (c_1+c_2) \vv_i^\top \vu_i
         \right)
    \right)
    \right) \nonumber
    ,
\end{align}
where equality in \eqref{jensen1} holds if the argument of $\phi$ is constant for all $j \in [n]\setminus \{i\}$.

Let define $h_{(c_1, c_2)}(x):=\psi ( (c_1 +c_2) (n-1)  \phi (x))$, which is a convex and increasing function.
Then, we have
\begin{align}
    \loss_{\op{info-sym}}
    &\left(\vU, \vV\right) 
    = \frac{1}{2} \loss_{\op{info}}\left(\vU, \vV\right) + \frac{1}{2}  \loss_{\op{info}}\left(\vV, \vU\right) \nonumber
    \\ &\geq
    \frac{1}{2n} \sum_{i\in [n]} \psi \left(
    (c_1 +c_2) (n-1) \cdot  \phi \left(
        \frac{1}{(c_1 +c_2) (n-1)}
        \sum_{j\in [n] \setminus \{i\}}
        \left(
            c_1 \vv_j^\top \vu_i + c_2 \vu_j^\top \vu_i 
            - (c_1+c_2) \vv_i^\top \vu_i
         \right)
    \right)
    \right) \nonumber
    \\ & \quad 
    +
    \frac{1}{2n} \sum_{i\in [n]} \psi \left(
    (c_1 +c_2) (n-1) \cdot  \phi \left(
        \frac{1}{(c_1 +c_2) (n-1)}
        \sum_{j\in [n] \setminus \{i\}}
        \left(
            c_1 \vu_j^\top \vv_i + c_2 \vv_j^\top \vv_i 
            - (c_1+c_2) \vu_i^\top \vv_i
         \right)
    \right)
    \right) \nonumber
    \\ &\geq
    \psi \Bigg(
    (c_1 +c_2) (n-1) \cdot  \phi \Bigg(
        \frac{1}{2n} \sum_{i\in [n]}\cdot
        \frac{1}{(c_1 +c_2) (n-1)}
        \sum_{j\in [n] \setminus \{i\}}
        \left(
            c_1 \vv_j^\top \vu_i + c_2 \vu_j^\top \vu_i 
            - (c_1+c_2) \vv_i^\top \vu_i
         \right) \nonumber
     \\&\qquad\qquad\qquad\qquad\qquad\qquad
         +
        \frac{1}{2n} \sum_{i\in [n]}\cdot
        \frac{1}{(c_1 +c_2) (n-1)}
        \sum_{j\in [n] \setminus \{i\}}
        \left(
            c_1 \vu_j^\top \vv_i + c_2 \vv_j^\top \vv_i
            - (c_1+c_2) \vu_i^\top \vv_i
         \right)
    \Bigg)
    \Bigg) \label{jensen3}
    \\ &=
    h_{(c_1, c_2)}\Bigg(
        \frac{1}{2(c_1 +c_2)n(n-1)}
        \sum_{i\in [n]}\sum_{j\in [n] \setminus \{i\}}
        \left(
            c_1 2\vu_j^\top \vv_i + c_2 (\vu_j^\top \vu_i+ \vv_j^\top \vv_i) 
            - 2(c_1+c_2)\vu_i^\top \vv_i
         \right)
         \Bigg) \nonumber
\end{align}
where the inequality in \eqref{jensen3} holds for Jensen’s inequality. Equality in \eqref{jensen3} holds if the arguments of $h_{(c_1, c_2)}$ are constant for all $i \in [n]$. Moreover, from Jensen’s inequality, we have
\begin{align}
    &\E_{ (\vU, \vV)\sim f_\sharp \hat{p}_{\op{pos}}^{[n]}} \left[ 
        \loss_{\op{info-sym}} \left(\vU, \vV\right) 
    \right] \nonumber
    \\
    &\geq 
    \E_{ (\vU, \vV)\sim f_\sharp \hat{p}_{\op{pos}}^{[n]}} \left[ 
        h_{(c_1, c_2)}\Bigg(
        \frac{1}{2(c_1 +c_2)n(n-1)}
        \sum_{i\in [n]}\sum_{j\in [n] \setminus \{i\}}
        \left(
            c_1 2\vu_j^\top \vv_i + c_2 (\vu_j^\top \vu_i+ \vv_j^\top \vv_i) 
            - 2 (c_1+c_2) \vu_i^\top \vv_i
         \right)
         \Bigg)
    \right] \nonumber
    \\
    &\geq 
    h_{(c_1, c_2)}\Bigg(
        \frac{1}{2(c_1 +c_2)}
        \E_{ (\vU, \vV)\sim f_\sharp \hat{p}_{\op{pos}}^{[n]}} \left[ 
        \frac{1}{n(n-1)}
        \sum_{i\in [n]}\sum_{j\in [n] \setminus \{i\}}
        \left(
            c_1 2\vu_j^\top \vv_i + c_2 (\vu_j^\top \vu_i+ \vv_j^\top \vv_i) 
            - 2(c_1+c_2) \vu_i^\top \vv_i
        \right)
        \right]
     \Bigg). \nonumber
\end{align}

Therefore, we only have to minimize 
\begin{equation*}
    \E_{ (\vU, \vV)\sim f_\sharp \hat{p}_{\op{pos}}^{[n]}} \left[ 
    \frac{1}{n(n-1)}
    \sum_{i\in [n]}\sum_{j\in [n] \setminus \{i\}}
    \left(
        c_1 2\vu_j^\top \vv_i + c_2 (\vu_j^\top \vu_i+ \vv_j^\top \vv_i) 
        - 2(c_1+c_2)\vu_i^\top \vv_i
    \right)
    \right].
\end{equation*}

Now, consider each case of $(c_1,c_2) \in \{(0,1), (1,0), (1,1)\}$.

For the case of $(c_1, c_2) =(1,1)$, by using Lemma~\ref{thm:pos:neg:inequality:v2}, we have
\begin{align*}
    & \E_{ (\vU, \vV)\sim f_\sharp \hat{p}_{\op{pos}}^{[n]}} \left[
    \frac{1}{n(n-1)}
    \sum_{i\in [n]}\sum_{j\in [n] \setminus \{i\}}
    \left(
        2\vu_j^\top \vv_i + \vu_j^\top \vu_i+ \vv_j^\top \vv_i
        - 4 \vu_i^\top \vv_i
    \right)
    \right]
    \\
    &
    \qquad
    \geq
    \E_{ (\vU, \vV)\sim f_\sharp \hat{p}_{\op{pos}}^{[n]}} \left[
    - \frac{2}{n(n-1)} \sum_{i \in[n]} \vu_i^\top\vv_i
        - \frac{2}{n-1}
    - 
    \frac{4}{n}
    \sum_{i\in [n]}
    \vu_i^\top \vv_i
    \right]
    .
\end{align*}

For the case of $(c_1, c_2) =(0,1)$, by using Lemma~\ref{thm:pos:neg:inequality:v3}, we have
\begin{align*}
    & \E_{ (\vU, \vV)\sim f_\sharp \hat{p}_{\op{pos}}^{[n]}} \left[ 
    \frac{1}{n(n-1)}
    \sum_{i\in [n]}\sum_{j\in [n] \setminus \{i\}}
    \left(
        \vu_j^\top \vu_i+ \vv_j^\top \vv_i
        -  2 \vu_i^\top \vv_i
    \right)
    \right]
    \geq
    \E_{ (\vU, \vV)\sim f_\sharp \hat{p}_{\op{pos}}^{[n]}} \left[ 
        - \frac{2}{n-1}
        -
    \frac{2}{n}
    \sum_{i\in [n]} \vu_i^\top \vv_i
    \right]
    .
\end{align*}

For the case of $(c_1, c_2) =(1,0)$, by using Lemma~\ref{thm:pos:neg:inequality}, we have
\begin{align*}
    & \E_{ (\vU, \vV)\sim f_\sharp \hat{p}_{\op{pos}}^{[n]}} \left[ 
    \frac{1}{n(n-1)}
    \sum_{i\in [n]}\sum_{j\in [n] \setminus \{i\}}
    \left(
        2\vu_j^\top \vv_i
        - 2\vu_i^\top \vv_i
    \right)
    \right]
    \\
    & \qquad \geq
    \E_{ (\vU, \vV)\sim f_\sharp \hat{p}_{\op{pos}}^{[n]}} \left[ 
    \frac{n-2}{2n(n-1)}\sum_{i \in [n]} \vu_i^\top\vv_i
        - \frac{n}{2(n-1)}
    -
    \frac{2}{n(n-1)}
    \sum_{i\in [n]}\sum_{j\in [n] \setminus \{i\}} \vu_i^\top \vv_i
    \right]
    \\
    & \qquad =
    - \frac{2}{n(n-1)}
    +
    \frac{-n^3 +n^2+n-2}{2n(n-1)} \cdot
    \E_{ (\vU, \vV)\sim f_\sharp \hat{p}_{\op{pos}}^{[n]}} \left[ 
    \sum_{i \in [n]} \vu_i^\top\vv_i
    \right]
    .
\end{align*}

Therefore, for every cases of $(c_1, c_2)$, $\vu_i^\top \vv_i =1$ holds for all positive pairs $(\vu_i, \vv_i) \sim f^{\star}_\sharp \hat{p}_{\op{pos}}^i$ and $i \in [n]$.
To achieve the all equality conditions, $\vu^\top \vv = -\frac{1}{n-1}$ must hold for all negative pairs $(\vu, \vv) \sim f^{\star}_\sharp \hat{p}_{\op{neg}}$. Therefore, embedding similarities for the full-batch optimal encoder $f^\star$ in \eqref{eq:expected:loss} satisfy
    \begin{align*}
        \vs(f^\star ; \hat{p}_{\op{pos}}) = 1,
        \qquad
        \vs(f^\star ; \hat{p}_{\op{neg}}) = - \frac{1}{n-1}.
    \end{align*}

 Second, consider the case of $\loss_{\op{ind-add}}\left(\vU, \vV\right)$ in Def.~\ref{def:loss:ind}. From Jensen's inequality, we have
    \begin{align*}
        \loss_{\op{ind-add}}(\vU, \vV) 
        &=\nonumber 
        -\frac{1}{n}\sum_{i\in [n]}  \phi (\vu_i^\top \vv_i)
         + \frac{c_1}{n(n-1)}\sum_{i \neq j\in [n]} \psi(\vu_i^\top \vv_j)
         + \frac{c_2}{2n(n-1)}\sum_{i \neq j\in [n]} 
        \left(
        \psi(\vu_i^\top \vu_j)
        + \psi(\vv_i^\top \vv_j)
        \right)
        \\&\geq
        -  \phi \left(\frac{1}{n}\sum_{i\in [n]}\vu_i^\top \vv_i \right)
        + \frac{c_1}{n(n-1)}\sum_{i \neq j\in [n]} \psi(\vu_i^\top \vv_j)
        + \frac{c_2}{2n(n-1)}\sum_{i \neq j\in [n]} 
        \left(
        \psi(\vu_i^\top \vu_j)
        + \psi(\vv_i^\top \vv_j)
        \right)
        \\&\geq
        -  \phi \left(\frac{1}{n}\sum_{i\in [n]}\vu_i^\top \vv_i \right)
        +  \psi \left(
            \frac{1}{(2c_1+c_2)n(n-1)}
            \sum_{i \neq j\in [n]} (2c_1\vu_i^\top \vv_j + c_2 \vu_i^\top \vu_j + c_2 \vv_i^\top \vv_j)
        \right)
        .
    \end{align*}
    Equality conditions for both inequality is $\phi$ and $\psi$  are applied to a constant argument.
    
    Now, consider each case of $(c_1,c_2) \in \{(0,1), (1,0), (1,1)\}$.
    
    For the case of $(c_1, c_2) =(1,1)$, by using Lemma~\ref{thm:pos:neg:inequality:v2}, we have
    \begin{align}
        \loss_{\op{ind-add}}(\vU, \vV) \nonumber
        & \geq
        -  \phi \left(\frac{1}{n}\sum_{i\in [n]}\vu_i^\top \vv_i \right)
        +  \psi \left(
            \frac{1}{3n(n-1)} 
            \sum_{i \neq j\in [n]} (2\vu_i^\top \vv_j + \vu_i^\top \vu_j + \vv_i^\top \vv_j)
        \right) \nonumber
        \\
        & \geq
        -  \phi \left(\frac{1}{n}\sum_{i\in [n]}\vu_i^\top \vv_i \right)
        +  \psi \left(
            - \frac{2}{3n(n-1)} \sum_{i \in[n]} \vu_i^\top\vv_i
            - \frac{2}{3(n-1)}
        \right). \nonumber
    \end{align}
    Note that $\phi$ and $\psi$ are increasing functions. Therefore, by using the similar manner in the proof of $\loss_{\op{info-sym}}\left(\vU, \vV\right)$ case above, embedding similarities in Def.~\ref{def:similarity} of the full-batch optimal encoder $f^\star$ in \eqref{eq:expected:loss} satisfy
    \begin{align*}
        \vs(f^\star ; \hat{p}_{\op{pos}}) = 1,
        \qquad
        \vs(f^\star ; \hat{p}_{\op{neg}}) = - \frac{1}{n-1}.
    \end{align*}

    For the case of $(c_1, c_2) =(0,1)$, by using Lemma~\ref{thm:pos:neg:inequality:v3}, we have
    \begin{align*}
        \loss_{\op{ind-add}}(\vU, \vV) 
        & \geq
        -  \phi \left(\frac{1}{n}\sum_{i\in [n]}\vu_i^\top \vv_i \right)
        +  \psi \left(
            \frac{1}{n(n-1)}
            \sum_{i \neq j\in [n]} ( \vu_i^\top \vu_j + \vv_i^\top \vv_j)
        \right)
        \\
        & \geq
        -  \phi \left(\frac{1}{n}\sum_{i\in [n]}\vu_i^\top \vv_i \right)
        +  \psi \left(
            - \frac{2}{n-1}
        \right)
    \end{align*}
    Note that $\phi$ and $\psi$ are increasing functions. Therefore, by using the similar manner in the case of $\loss_{\op{info-sym}}\left(\vU, \vV\right)$ in Def.~\ref{def:loss:infonce}, embedding similarities in Def.~\ref{def:similarity} of the full-batch optimal encoder $f^\star$ in \eqref{eq:expected:loss} satisfy
    \begin{align*}
        \vs(f^\star ; \hat{p}_{\op{pos}}) = 1,
        \qquad
        \vs(f^\star ; \hat{p}_{\op{neg}}) = - \frac{1}{n-1}.
    \end{align*}

    For the case of $(c_1, c_2) =(1,0)$, by using Lemma~\ref{thm:pos:neg:inequality}, we have

    \begin{align}
        \loss_{\op{ind-add}}(\vU, \vV)  
        &\geq
        - \phi \left(\frac{1}{n}\sum_{i\in [n]}\vu_i^\top \vv_i \right)
        +  \psi \left(
            \frac{1}{n(n-1)}\sum_{i \neq j\in [n]} 
            \vu_i^\top \vv_j 
        \right) \nonumber
        \\ &\geq
        - \phi \left(\frac{1}{n}\sum_{i\in [n]}\vu_i^\top \vv_i \right)
        +  \psi \left(
            \frac{n-2}{2n(n-1)}\sum_{i=1}^n  \vu_i^\top\vv_i
            - \frac{n}{2(n-1)}
        \right) \nonumber
        \\ &=
        h \left(\frac{1}{n}\sum_{i\in [n]}\vu_i^\top \vv_i \right) \nonumber
    \end{align}
    
    where the function $h(\cdot)$ is defined as
    \begin{equation*}
        h(x)
        :=- \phi \left(  x \right) +  \psi \left( \frac{n-2}{2(n-1)} \cdot x - \frac{n}{2(n-1)} \right).
    \end{equation*}

    Note that both $- \phi (\cdot)$ and $\psi (\cdot)$ are differentiable and convex functions, therefore $h(\cdot)$ is also a differentiable and convex function.
    Therefore, to attain the minimum at $x=1$, $h'(1)<0$ holds. Then,
    \begin{align*}
        0 &> h'(1)
        = 
        - \phi' \left(1 \right) + \frac{n-2}{2(n-1)} \cdot \psi' \left( \frac{n-2}{2(n-1)}  - \frac{n}{2(n-1)} \right)
        = 
        - \phi' \left( 1 \right) 
        + \frac{n-2}{2(n-1)}  \cdot \psi' \left( -\frac{1}{n-1} \right),
    \end{align*}
    which is equal to
    \begin{align*}
        \phi' \left( 1 \right)
        >
        \frac{n-2}{2(n-1)}  \cdot  \psi' \left( -\frac{1}{n-1} \right)
        .
    \end{align*}
    Therefore, by using the similar manner in the case of $\loss_{\op{info-sym}}\left(\vU, \vV\right)$ in Def.~\ref{def:loss:infonce}, 
    embedding similarities in Def.~\ref{def:similarity} of the full-batch optimal encoder $f^\star$ in \eqref{eq:expected:loss} satisfy
    \begin{align*}
        \vs(f^\star ; \hat{p}_{\op{pos}}) = 1,
        \qquad
        \vs(f^\star ; \hat{p}_{\op{neg}}) = - \frac{1}{n-1}.
    \end{align*}
    if $\phi' \left( 1 \right) > \frac{n-2}{2(n-1)}  \cdot  \psi' \left( -\frac{1}{n-1} \right)$. On the other hand, if  $\phi' \left( 1 \right) < \frac{n-2}{2(n-1)}  \cdot  \psi' \left( -\frac{1}{n-1} \right)$, embedding similarities in Def.~\ref{def:similarity} of the full-batch optimal encoder $f^\star$ in \eqref{eq:expected:loss} satisfy
    \begin{align*}
        \vs(f^\star ; \hat{p}_{\op{pos}}) < 1,
        \qquad
        \vs(f^\star ; \hat{p}_{\op{neg}}) < - \frac{1}{n-1}.
    \end{align*}

    Moreover, the existence of the embedding for $d\geq n$ can be shown in Proposition 1 in \citet{lee2024analysis}
\end{proof}

\begin{example}
    Consider the sigmoid contrastive loss $\loss_{\op{sig}}(\vU, \vV)$~\citep{zhai2023sigmoid}, defined as
    \begin{align}
        \loss_{\op{sig}} (\vU, \vV)\! 
        &\nonumber
        := \!\frac{1}{n} \sum_{i \in [n]} \log \left(1 + \exp\left(- t \vu_i^\top \vv_i\right)\cdot\exp(b) \right)
         + \frac{1}{n} \! \sum_{i \neq j \in [n]} \!\!\log \left( 1 \! + \! \exp\left( t \vu_i^\top \vv_j \right) \! \cdot \! \exp(-b) \right) \! ,
    \end{align}
    where $t > 0$ and $b \in \mathbb{R}$ are hyperparameters.
    This loss follows the form of $\loss_{\op{ind-add}}\left(\vU, \vV\right)$ in Def.~\ref{def:loss:ind}, where $(c_1, c_2)=(1,0)$, $\phi(x)=-\log(1+\exp(-t x + b))$, and $\psi(x)=(n-1)\cdot\log(1+\exp(t x-b))$.
    
    If hyperparameters $t$ and $b$ are chosen such that 
    \begin{equation}
        \nonumber
        \frac{1+\exp\left(\frac{t}{n-1}+b\right)}{1 + \exp(t-b)} < \frac{n-2}{2},
    \end{equation}
    embedding similarities of the full-batch optimal encoder $f^\star$ in \eqref{eq:expected:loss} satisfy
    \begin{align*}
        \vs(f^\star ; \hat{p}_{\op{pos}}) < 1,
        \qquad
        \vs(f^\star ; \hat{p}_{\op{neg}}) < - \frac{1}{n-1}.
    \end{align*}
\end{example}
\begin{proof}
    The sigmoid contrastive loss $\loss_{\op{sig}}(\vU, \vV)$ follows the loss form in Def.~\ref{def:loss:ind}, see Appendix~\ref{sec:appendix:loss:pair}. 
    Consequently, by Theorem~\ref{appendix:thm:full:batch:overexpansion}, it suffices to verify the condition of
    \begin{equation}
        \label{eq:app:strict:condition:full}
        \phi' \left( 1 \right) < \frac{n-2}{2(n-1)}  \cdot  \psi' \left( -\frac{1}{n-1} \right),
    \end{equation}
    where $\phi(x) = -\log(1+\exp(-t x + b))$ and $\psi(x) = (n-1) \log(1+\exp(t x - b))$.
    Taking the derivative of $\phi(x)$, we obtain
    \begin{equation*}
        \phi'(x)
        = \frac{t \exp(-t x + b)}{1 + \exp(-t x + b)}
        = \frac{t}{1 + \exp(t x - b)},
    \end{equation*}
    and differentiating $\psi(x)$ yields
    \begin{equation*}
        \psi'(x)
        = (n-1) \cdot \frac{t \exp(t x - b)}{1 + \exp(t x - b)}.
    \end{equation*}
    By plugging the derivative values into \eqref{eq:app:strict:condition:full}, we get
    \begin{equation*}
        \frac{t}{1 + \exp(t - b)} < \frac{n-2}{2(n-1)}  \cdot \frac{(n-1) t \exp\left(-\frac{t}{n-1} - b\right)}{1 + \exp\left(-\frac{t}{n-1} - b\right)},
    \end{equation*}
    which simplifies to
    \begin{align*}
        \frac{1}{1 + \exp(t - b)} &< \frac{n-2}{2(n-1)}  \cdot \frac{(n-1) \exp\left(-\frac{t}{n-1} - b\right)}{1 + \exp\left(-\frac{t}{n-1} - b\right)} 
        = \frac{n-2}{2}  \cdot \frac{1}{\exp\left(\frac{t}{n-1} + b\right) + 1}.
    \end{align*}
    Therefore, if hyperparameters $t$ and $b$ satisfy 
    \begin{equation}
        \label{eq:app:strict:condition:substituted}
        \frac{1+\exp\left(\frac{t}{n-1}+b\right)}{1 + \exp(t-b)} < \frac{n-2}{2},
    \end{equation}
    following from Theorem~\ref{appendix:thm:full:batch:overexpansion}, the similarities of the full-batch optimal encoder $f^\star$ in \eqref{eq:expected:loss} satisfy
    \begin{align*}
        \vs(f^\star ; \hat{p}_{\op{pos}}) < 1,
        \qquad
        \vs(f^\star ; \hat{p}_{\op{neg}}) < - \frac{1}{n-1}.
    \end{align*}
\end{proof}

\subsection{Proofs for Mini-Batch CL}
\label{appendix:mini:batch}

\begin{definition}[Simplex ETF]
\label{def:simplex}
A set of $n$ vectors $\vU$ on the $d$-dimensional unit sphere is called $(n-1)$-simplex ETF, if
\begin{align*}
&\lVert\vu\rVert_2^2=1 %
\; \text{and} \;
\vu^\top\vv=-\frac{1}{n-1}, &\forall \vu \neq \vv \in \vU
.
\end{align*}
Note that $(n-1)$-simplex ETF exists when $d \ge n-1$. 
\end{definition}

\begin{lemma}
    \label{thm:simplex:centroid}
    Let a set of $n$ vectors $\vU$ be a $(n-1)$-simplex ETF on the $d$-dimensional unit sphere with $d\geq  n-1$. Then, the following holds:
    \begin{equation*}
        \sum_{\vu \in \vU } \vu
        =\vzero
    \end{equation*}
\end{lemma}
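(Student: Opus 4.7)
The plan is to show $\left\lVert \sum_{\vu \in \vU} \vu \right\rVert_2^2 = 0$, from which the claim follows. First I would expand the squared norm as a double sum of inner products
\begin{equation*}
    \left\lVert \sum_{\vu \in \vU} \vu \right\rVert_2^2
    = \sum_{\vu \in \vU} \sum_{\vv \in \vU} \vu^\top \vv,
\end{equation*}
and then split this into the $n$ diagonal terms (where $\vu=\vv$) and the $n(n-1)$ off-diagonal terms (where $\vu \neq \vv$).

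Next I would substitute the defining properties of an $(n-1)$-simplex ETF from Def.~\ref{def:simplex}: each diagonal term equals $\lVert \vu \rVert_2^2 = 1$ and each off-diagonal term equals $-\tfrac{1}{n-1}$. This gives
\begin{equation*}
    \left\lVert \sum_{\vu \in \vU} \vu \right\rVert_2^2
    = n \cdot 1 + n(n-1) \cdot \left(-\frac{1}{n-1}\right) = n - n = 0,
\end{equation*}
so $\sum_{\vu \in \vU} \vu = \vzero$.

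There is no substantive obstacle here; the proof is a one-line calculation once the squared norm is expanded. The only point requiring care is the bookkeeping of off-diagonal pairs: treating the double sum as over ordered pairs gives exactly $n(n-1)$ cross terms, so the cancellation with the $n$ diagonal terms is exact. The hypothesis $d \geq n-1$ is used only to guarantee that such an ETF exists and is not needed in the algebraic step itself.
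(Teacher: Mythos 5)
Your proof is correct and is essentially identical to the paper's: both expand $\bigl\lVert \sum_{\vu \in \vU} \vu \bigr\rVert_2^2$ into $n$ diagonal terms equal to $1$ and $n(n-1)$ off-diagonal terms equal to $-\tfrac{1}{n-1}$, which cancel exactly. No gaps; the remark that $d \geq n-1$ is only needed for existence of the ETF is also accurate.
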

\begin{proof}

    By the definition of a $(n-1)$-simplex ETF, each vector $\vu \in \vU$ satisfies $\|\vu\|^2 = 1$ and the pairwise inner product for any $\vu, \vv \in \vU$ with $\vu \neq \vv$ is $\vu^\top \vv = -\frac{1}{n-1}$. Then,
    \begin{align*}
    \left\| \sum_{\vu \in \vU} \vu \right\|_2^2 
    = \left( \sum_{\vu \in \vU} \vu \right)^\top \left( \sum_{\vu \in \vU} \vu \right)
    = \sum_{\vu \in \vU} \vu^\top \vu + \sum_{\vu \neq \vv \in \vU} \vu^\top \vv
    = n \cdot 1 + n(n-1) \cdot \left(-\frac{1}{n-1}\right)
    = 0.        
    \end{align*}
    Since the squared norm is zero, we conclude $\sum_{\vu \in \vU} \vu = \vzero$.  
\end{proof}

\begin{lemma}
    \label{thm:sum:neg:pair}
    Let a set of $n$ vectors $\vU$ be a $(n-1)$-simplex ETF, and a set of $m$ vectors $\vV$ be a $(m-1)$-simplex ETF, where all vectors are on the $d$-dimensional unit sphere with $d\geq \max(n, m)-1$. Then, the following holds:
    \begin{equation*}
        \frac{1}{|\vU\cup\vV| (|\vU\cup\vV|-1)} \sum_{\vu \neq \vv \in \vU \cup \vV} \vu^\top \vv
        = - \frac{1}{n+m-1}.
    \end{equation*}
    Here, $\vU \cup \vV$ denotes the concatenation of the two sets, representing the full collection of all $n + m$ vectors from $\vU$ and $\vV$.
\end{lemma}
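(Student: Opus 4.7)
The plan is to exploit the fact that the vectors in each simplex ETF sum to zero, so their union also sums to zero, and then expand the squared norm of that vanishing sum.

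First, I would invoke Lemma~\ref{thm:simplex:centroid} separately on $\vU$ and on $\vV$ to conclude $\sum_{\vu \in \vU} \vu = \vzero$ and $\sum_{\vv \in \vV} \vv = \vzero$. Adding these two identities gives
\begin{equation*}
\sum_{\vw \in \vU \cup \vV} \vw = \vzero,
\end{equation*}
where the union is treated as a concatenation of $n+m$ vectors, so $|\vU \cup \vV| = n+m$.

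Next, I would take the squared Euclidean norm of both sides and expand:
\begin{equation*}
0 = \left\| \sum_{\vw \in \vU \cup \vV} \vw \right\|_2^2 = \sum_{\vw \in \vU \cup \vV} \|\vw\|_2^2 + \sum_{\vw \neq \vw' \in \vU \cup \vV} \vw^\top \vw'.
\end{equation*}
Since every vector lies on the unit sphere, the first sum equals $n+m$, so the cross term satisfies $\sum_{\vw \neq \vw' \in \vU\cup\vV} \vw^\top \vw' = -(n+m)$. Dividing by $(n+m)(n+m-1)$ yields the claimed value $-\tfrac{1}{n+m-1}$.

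There is no serious obstacle here: the argument is essentially a two-line consequence of Lemma~\ref{thm:simplex:centroid} combined with the norm-expansion trick already used in its proof. The only subtlety worth flagging is the interpretation of $\vU \cup \vV$ as a concatenated multiset of size $n+m$ rather than a set-theoretic union (which would matter if $\vU$ and $\vV$ shared vectors); since the statement explicitly refers to all $n+m$ vectors, this convention is the intended one and the counting in the final division goes through cleanly.
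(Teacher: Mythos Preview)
Your argument is correct. Both your proof and the paper's ultimately compute $\sum_{\vw \neq \vw'} \vw^\top \vw' = -(n+m)$ and divide by $(n+m)(n+m-1)$, but the routes differ slightly. The paper decomposes the sum into three pieces: the within-$\vU$ sum (evaluated to $-n$ using the ETF inner product $-\tfrac{1}{n-1}$), the within-$\vV$ sum (evaluated to $-m$ analogously), and the cross-term $\sum_{\vu\in\vU}\sum_{\vv\in\vV}\vu^\top\vv$ (shown to vanish via Lemma~\ref{thm:simplex:centroid}). You instead apply Lemma~\ref{thm:simplex:centroid} once to conclude the whole concatenation sums to zero, then expand $\|\vzero\|_2^2$ directly. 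Your version is a bit more economical and, as a bonus, never uses the specific ETF pairwise value $-\tfrac{1}{n-1}$: it only needs that each block sums to zero and that all vectors are unit norm, so it would apply to any two such collections, not just simplex ETFs.
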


\begin{proof}
    From Def~\ref{def:simplex}, we have
    \begin{align*}
        \frac{1}{n (n-1)} \sum_{\vu \neq \vv \in \vU} \vu^\top \vv = - \frac{1}{n-1},
        \qquad
        \frac{1}{m(m-1)} \sum_{\vu \neq \vv \in \vV} \vu^\top \vv = - \frac{1}{m-1}
        .
    \end{align*}
    Moreover, Lemma~\ref{thm:simplex:centroid} implies that
    \begin{equation*}
    \sum_{\vu \in \vU} \sum_{\vv \in \vV} \vu^\top \vv 
    = \left(\sum_{\vu \in \vU} \vu\right)^\top \left( \sum_{\vv \in \vV} \vv \right)
    = \vzero^\top \vzero = 0 .
    \end{equation*}

    The total sum of pairwise inner products for the combined set \( \vU \cup \vV \) can be written as
    \begin{align}
        \nonumber
        \sum_{\vu \neq \vv \in \vU \cup \vV} \vu^\top \vv 
        &= 
        \sum_{\vu \in \vU \cup \vV} \sum_{\vv \in \vU \cup \vV \setminus \{\vu\}} \vu^\top \vv 
        \\ &= \nonumber
        \sum_{\vu \in \vU} \sum_{\vv \in \vU \cup \vV \setminus \{\vu\}} \vu^\top \vv 
        +\sum_{\vu \in \vV} \sum_{\vv \in \vU \cup \vV \setminus \{\vu\}} \vu^\top \vv 
        \\ &= \nonumber
        \sum_{\vu \in \vU} \sum_{\vv \in \vU \setminus\{\vu\}} \vu^\top \vv 
        + \sum_{\vu \in \vU} \sum_{\vv \in \vV } \vu^\top \vv 
        +\sum_{\vu \in \vV} \sum_{\vv \in \vU } \vu^\top \vv 
        +\sum_{\vu \in \vV} \sum_{\vv \in \vV \setminus \{\vu\}} \vu^\top \vv 
        \\ &= \nonumber
        \sum_{\vu \neq \vv \in \vU} \vu^\top \vv  + \sum_{\vu \in \vU, \vv \in \vV} \vu^\top \vv + \sum_{\vv \in \vV, \vu \in \vU} \vu^\top \vv + \sum_{\vu \neq \vv \in \vV} \vu^\top \vv \\
        &= \label{eq:sum:simplex:equal}
        \sum_{\vu \neq \vv \in \vU} \vu^\top \vv + \sum_{\vu \neq \vv \in \vV} \vu^\top \vv \\
        &= \nonumber
        n(n-1) \cdot \left(-\frac{1}{n-1}\right) + m(m-1) \cdot \left(-\frac{1}{m-1}\right) \\
        &= \nonumber
        -n - m,
    \end{align}
    where the equality in \eqref{eq:sum:simplex:equal} holds because the total pairwise inner product sums within \(\vU\) and within \(\vV\) are zero.
    
    Therefore, we obtain
    \begin{align*}
        \frac{1}{|\vU\cup\vV| (|\vU\cup\vV|-1)} \sum_{\vu \neq \vv \in \vU \cup \vV} \vu^\top \vv
        &= \frac{1}{(n+m)(n+m-1)} (-n - m)
        = - \frac{1}{n+m-1} .
    \end{align*}
\end{proof}

\begin{lemma}
    \label{thm:lemma:simplex:orthogonal:matrix}
    Suppose $d\geq n-1$. Let $\vV$ and $\vW$ be $d\times n$ matrices whose columns form $(n-1)$-simplex ETF on the $d$-dimensional unit sphere. Then, there exist an orthogonal matrix $\vP \in \sR^{d \times d}$ such that $\vV = \vP \vW$.
\end{lemma}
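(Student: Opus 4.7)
The plan is to show that $\vV$ and $\vW$ have identical Gram matrices and then invoke the standard fact that two matrices with equal Gram matrices differ by an orthogonal transformation on the left, with the condition $d \geq n-1$ guaranteeing the existence of such a transformation in dimension $d$.

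First I would compute the Gram matrix. Since the columns of $\vV$ (and of $\vW$) form a simplex ETF, Def.~\ref{def:simplex} gives
\begin{equation*}
\vV^\top \vV = \vW^\top \vW = \vI_n + \frac{1}{n-1}\,\vI_n - \frac{1}{n-1}\vone \vone^\top \;=:\; \vG,
\end{equation*}
where $\vG$ has $1$ on the diagonal and $-1/(n-1)$ off-diagonal. A direct eigen-analysis of $\vG$ (the all-ones vector is in its kernel, and $\vG$ equals $\frac{n}{n-1}$ times the centering projector) shows that $\vG$ has eigenvalue $n/(n-1)$ with multiplicity $n-1$ and eigenvalue $0$ with multiplicity $1$, so $\mathrm{rank}(\vG) = n-1$.

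Next I would take full singular value decompositions $\vV = \vU_V \vSigma \vR^\top$ and $\vW = \vU_W \vSigma \vR^\top$ with $\vU_V, \vU_W \in O(d)$, $\vR \in O(n)$, and $\vSigma \in \sR^{d \times n}$ rectangular diagonal. The key point is that $\vSigma$ and $\vR$ can be chosen identically for both factorizations, because $\vV^\top \vV = \vW^\top \vW = \vR \,\vSigma^\top \vSigma\, \vR^\top$ is a spectral decomposition of the \emph{same} matrix $\vG$: fixing one eigendecomposition of $\vG$ fixes $\vSigma^\top \vSigma$ (and hence the nonzero singular values $\sigma_1 = \dots = \sigma_{n-1} = \sqrt{n/(n-1)}$, $\sigma_n = 0$) and $\vR$ simultaneously for both. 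Then
\begin{equation*}
\vP \;:=\; \vU_V \vU_W^\top
\end{equation*}
is orthogonal as a product of orthogonal matrices, and
\begin{equation*}
\vP\, \vW \;=\; \vU_V \vU_W^\top \vU_W \vSigma \vR^\top \;=\; \vU_V \vSigma \vR^\top \;=\; \vV.
\end{equation*}

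The main subtlety is the ambiguity in the SVD, and this is where the hypothesis $d \geq n-1$ enters. Because $\sigma_i > 0$ for $i \leq n-1$, the first $n-1$ columns of $\vU_V$ (resp.\ $\vU_W$) are forced to be $\vV \vR \ve_i / \sigma_i$ (resp.\ $\vW \vR \ve_i / \sigma_i$), while the remaining $d-(n-1) \geq 0$ columns may be chosen freely as any orthonormal basis of the orthogonal complement. The condition $d \geq n-1$ is precisely what makes this completion possible inside $\sR^d$; if $d < n-1$ then the simplex ETF would not even embed. Handling the repeated singular value $\sigma_1 = \dots = \sigma_{n-1}$ is the main bookkeeping obstacle, but it is harmless here: any orthonormal basis $\vR$ of the corresponding eigenspace of $\vG$ works identically on both sides of the construction, so the resulting $\vP$ is unambiguous as an orthogonal map from the column space of $\vW$ to that of $\vV$, extended arbitrarily on the orthogonal complement.
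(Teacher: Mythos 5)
Your proposal is correct and follows essentially the same route as the paper: both reduce the claim to the observation that the two simplex ETFs have identical Gram matrices $\vV^\top\vV = \vW^\top\vW$ and then conclude that the frames differ by a left orthogonal factor. The only difference is that the paper cites Theorem 7.3.11 of Horn and Johnson for this last step, whereas you supply a self-contained SVD construction of $\vP = \vU_V\vU_W^\top$ (correctly handling the rank-$(n-1)$ Gram matrix, the shared eigenbasis, and the completion of the orthonormal columns using $d \geq n-1$), which is a valid proof of the same cited fact.
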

\begin{proof}
    From the definition of simplex ETF in Def.~\ref{def:simplex}, the Gram matrices satisfy 
    \begin{equation*}
        \vV^\top \vV = \vW^\top \vW,
    \end{equation*}
    where each diagonal entry is $1$ and each off-diagonal entry is $-\frac{1}{n-1}$. Then, from Theorem 7.3.11 in \citet{horn2012matrix} there exist an orthogonal matrix $\vP \in \sR^{d \times d}$ such that $\vV=\vP \vW$.
\end{proof}

\begin{theorem}
   Suppose $d\geq m-1$.
   Let the contrastive loss $\loss\left(\vU, \vV\right)$ be one of the forms in Theorem~\ref{thm:full:batch}.
   Define $f^\star_{\op{batch}}$ as the optimal encoder that minimizes the fixed mini-batch loss, given by
    \begin{align*}
        f^\star_{\op{batch}} &:= \argmin_f 
             \E_{(\vU, \vV) \sim f_\sharp \hat{p}_{\op{pos}}^{[n]} }
            \left[ \sum_{k\in[b]}
            \loss \left(\vU_{\vI_k}, \vV_{\vI_k}\right) \right],
    \end{align*}
    where $\vI_k := [m(k-1)+1 : mk]\;$ for $k\in [b]$.
    
    Then, embedding similarities for the mini-batch optimal encoder $f^\star_{\op{batch}}$ satisfy
    \begin{align}
        \nonumber
        &\vs(f^\star_{\op{batch}} ; \hat{p}_{\op{pos}}) = 1,
        \\
        \nonumber
        &\E \left[\vs(f^\star_{\op{batch}} ; \hat{p}_{\op{neg}})\right] = - \frac{1}{n-1},
        \\ \label{eq:app:neg:var}
        &\Var \left[\vs(f^\star_{\op{batch}} ; \hat{p}_{\op{neg}})\right] 
        \in \left[
            \frac{n-m}{(m-1)(n-1)^2},
            \frac{n \,(n - m)}{(m-1)(n - 1)^{2}}
        \right]
        .
    \end{align}
    A necessary condition for attaining the minimum variance of negative-pair similarities in \eqref{eq:neg:var} is $d \geq b(m-1)$.
\end{theorem}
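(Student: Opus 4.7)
The plan is to exploit the batch-wise decoupling of the objective and then reduce the variance computation to a question about angles between the $b$ subspaces spanned by each batch's embeddings. \textbf{First}, since each summand $\loss(\vU_{\vI_k}, \vV_{\vI_k})$ depends only on its own batch, the minimization decouples across the $b$ batches. Applying Theorem~\ref{thm:full:batch} (with $n$ replaced by $m$, which is valid since $d \geq m-1$) to each batch separately forces, within batch $k$, every positive pair to align ($\vu_i = \vv_i$ for $i \in \vI_k$) and the $m$ resulting vectors to form a $(m-1)$-simplex ETF. This immediately gives $\vs(f^\star_{\op{batch}}; \hat{p}_{\op{pos}}) = 1$.

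\textbf{Second}, Lemma~\ref{thm:simplex:centroid} applied batch-by-batch yields $\sum_{i \in \vI_k} \vu_i = \vzero$ for each $k$, hence $\sum_{i \in [n]} \vu_i = \vzero$. Using $\vu_i = \vv_i$ together with $\|\vu_i\|_2 = 1$, I get
\[
\sum_{i \neq j} \vu_i^\top \vv_j = \Bigl\|\sum_i \vu_i\Bigr\|_2^2 - \sum_i \|\vu_i\|_2^2 = -n,
\]
which delivers $\E[\vs(f^\star_{\op{batch}}; \hat{p}_{\op{neg}})] = -\tfrac{1}{n-1}$. Writing the variance as $\E[\vs^2] - \tfrac{1}{(n-1)^2}$, I then partition the $n(n-1)$ ordered negative pairs into $n(m-1)$ within-batch pairs (each with the fixed squared similarity $\tfrac{1}{(m-1)^2}$) and $n(n-m)$ cross-batch pairs. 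The within-batch portion contributes the fixed value $\tfrac{n}{m-1}$ to $n(n-1)\cdot\E[\vs^2]$; only the cross-batch quantity $T := \sum_{i,j \text{ in different batches}} (\vu_i^\top \vv_j)^2$ remains to be controlled.

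\textbf{Third}, let $\vU_k \in \sR^{d \times m}$ collect batch $k$'s embeddings. Since $\vU_k$ is a centered $(m-1)$-simplex ETF, a short calculation gives $\vU_k \vU_k^\top = \tfrac{m}{m-1}\,\vP_k$, where $\vP_k$ is the orthogonal projection onto the $(m-1)$-dimensional subspace $V_k := \op{span}(\vU_k)$. Therefore
\[
T = \sum_{k_1 \neq k_2} \|\vU_{k_1}^\top \vU_{k_2}\|_F^2 = \frac{m^2}{(m-1)^2}\sum_{k_1 \neq k_2} \tr(\vP_{k_1}\vP_{k_2}),
\]
and each trace lies in $[0,\,m-1]$, with the lower extreme attained iff $V_{k_1} \perp V_{k_2}$ and the upper extreme iff $V_{k_1} = V_{k_2}$. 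Lemma~\ref{thm:lemma:simplex:orthogonal:matrix} ensures that any compatible subspace arrangement is realizable by rotating each batch's ETF inside its subspace, so both extremes are attainable. Summing over the $b(b-1)$ ordered pairs of batches and substituting $b = n/m$ yields $T \in [0,\, \tfrac{n(n-m)}{m-1}]$; combining with the within-batch contribution and simplifying algebraically produces the stated variance range, and the minimum demands $b$ mutually orthogonal $(m-1)$-dimensional subspaces in $\sR^d$, forcing $d \geq b(m-1)$.

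\textbf{The main obstacle} lies in this third step: recasting the rigid within-batch ETF constraints as a clean statement about projection matrices, and certifying that the full range of cross-batch squared similarities is realizable without disturbing any batch's intrinsic simplex structure. The identity $\vU_k \vU_k^\top = \tfrac{m}{m-1}\,\vP_k$ neatly separates intrinsic (batch-local) geometry from extrinsic (inter-batch) geometry, while Lemma~\ref{thm:lemma:simplex:orthogonal:matrix} handles realizability. Once this reduction is in place, the remaining algebra (plugging in $b = n/m$ and simplifying the final variance formula into $\tfrac{n-m}{(m-1)(n-1)^2}$ and $\tfrac{n(n-m)}{(m-1)(n-1)^2}$) is routine.
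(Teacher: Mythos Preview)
Your proposal is correct and follows the same overall strategy as the paper: decouple across batches, apply Theorem~\ref{thm:full:batch} batch-wise to get the simplex ETF structure and $\vs(f^\star_{\op{batch}};\hat p_{\op{pos}})=1$, use the zero-centroid property for the expectation, and then split the second moment into a fixed within-batch piece and a variable cross-batch piece. The one noteworthy difference is the technical device for bounding the cross-batch contribution. The paper writes each batch as $\vP^{(k)}\vW$ for a reference simplex $\vW$ and a rotation $\vP^{(k)}$, then attacks $\|\vW^\top(\vP^{(k_1)})^\top\vP^{(k_2)}\vW\|_F^2$ via the SVD of $\vW$; you instead use the identity $\vU_k\vU_k^\top=\tfrac{m}{m-1}\vP_k$ (with $\vP_k$ the orthogonal projector onto $\op{span}(\vU_k)$), which reduces each cross-batch block to $\tfrac{m^2}{(m-1)^2}\tr(\vP_{k_1}\vP_{k_2})\in[0,\tfrac{m^2}{m-1}]$. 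Your route is cleaner and makes the geometric content (principal angles between batch subspaces) transparent, so the extremal cases---mutually orthogonal subspaces for the lower bound, coincident subspaces for the upper bound---and the necessary condition $d\ge b(m-1)$ fall out immediately. The paper's SVD argument reaches the same numerical bounds but is more opaque (and, as written, contains a couple of slips, e.g.\ the claim that the minimum is attained when the product of two \emph{orthogonal} matrices $(\vP^{(k_1)})^\top\vP^{(k_2)}$ is zero).
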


\begin{proof}
    Note that
    \begin{equation*}
        \E_{(\vU, \vV) \sim f_\sharp \hat{p}_{\op{pos}}^{[n]} }
        \left[ \sum_{k\in[b]}
        \loss \left(\vU_{\vI_k}, \vV_{\vI_k}\right) \right]
        = 
        \sum_{k\in[b]}
        \E_{(\vU, \vV) \sim f_\sharp \hat{p}_{\op{pos}}^{\vI_k}}
        \left[ \loss \left(\vU_{\vI_k}, \vV_{\vI_k}\right) \right],
    \end{equation*}
    by applying Theorem~\ref{thm:full:batch} to each batch, $m$ random vectors in each batch are degenerated to construct the $(m-1)$-simplex ETF in Def.~\ref{def:simplex}. Therefore, for $k\in[b]$, we have:
    \begin{align}
        \vu^\top\vv
        &= 1
        &\forall (\vu, \vv) \sim f_\sharp \hat{p}_{\op{pos}}^{\vI_k},
        \nonumber
        \\
        \vu^\top\vv&= -\frac{1}{m-1}  &\forall (\vu, \vv) \sim f_\sharp \hat{p}_{\op{neg}}^{\vI_k}.
        \label{eq:proof:sim:var:simplex}
    \end{align}
    In what follows, for all positive pairs, we have
    \begin{align*}
        \vu^\top\vv&= 1  \qquad\forall (\vu, \vv) \sim f^\star_\sharp \hat{p}_{\op{pos}},
    \end{align*}
    which is equal to
    \begin{align*}
        \vs(f^\star_{\op{batch}} ; \hat{p}_{\op{pos}}) = 1.
    \end{align*}

    For $k\in[b]$, let $\vU^{(k)}$ and $\vV^{(k)}$ denote $d\times m$ random matrices, where the columns represent the vectors in the $k$-th batch. Additionally, define $\vU$ and $\vV$ as $d\times mb$ random matrices formed by concatenation of all corresponding batch matrices, \ie $\vU:= \left[\vU^{(1)}, \vU^{(2)}, \cdots, \vU^{(b)}\right]$ and $\vV:= \left[\vV^{(1)}, \vV^{(2)}, \cdots, \vV^{(b)}\right]$.

    Let $\vW$ be a $d\times m$ matrix whose columns form $(m-1)$-simplex ETF in Def.~\ref{def:simplex}.
    From Lemma~\ref{thm:lemma:simplex:orthogonal:matrix}, for $k\in[b]$, there exist orthogonal matrices $\vP^{(k)} \in \sR^{d \times d}$ such that $\vV^{(k)} = \vP^{(k)} \vW$.
    Moreover, based on the singular value decomposition, let $\vW = \vW_1 \vSigma \vW_2^\top$, where $\vW_1$ is a $d \times d$ orthogonal matrix, $\vW_2$ is an $m \times m$ orthogonal matrix, and $\vSigma$ is a $d \times m$ rectangular diagonal matrix with non-negative values of $\sigma_1, \sigma_2,\cdots, \sigma_m$ on the diagonal.

    For all $k \in [b]$, we have
    \begin{align*}
        \norm{\left(\vU^{(k)}\right)^\top \vV^{(k)}}_F^{2}
        =
        \norm{\left(\vV^{(k)}\right)^\top \vV^{(k)}}_F^{2}
        =
        \norm{\vW^\top \vW}_F^{2}
        =
        m \cdot 1 + m(m-1) \cdot \left( -\frac{1}{m-1}\right)^2
        =
        \frac{m^2}{m-1}
        .
    \end{align*}

    For all $k_1 \neq k_2 \in [b]$, we have
    \begin{align*}
        \norm{\left(\vU^{(k_1)}\right)^\top \vV^{(k_2)}}_F^{2}
        =
        \norm{\left(\vV^{(k_1)}\right)^\top \vV^{(k_2)}}_F^{2}
        =
        \norm{\vW^\top\left(\vP^{(k_1)}\right)^\top \vP^{(k_2)}\vW}_F^{2}
        \geq 0,
    \end{align*}
    where the minimum value of zero is achieved if $\left(\vP^{(k_1)}\right)^\top \vP^{(k_2)}$ is the $d\times d$ consisting entirely of zero elements.

    On the other hand,
    \begin{align*}
        \norm{\left(\vU^{(k_1)}\right)^\top \vV^{(k_2)}}_F^{2}
        &=
        \norm{\left(\vV^{(k_1)}\right)^\top \vV^{(k_2)}}_F^{2}
        \\
        &=
        \norm{\vW^\top\left(\vP^{(k_1)}\right)^\top \vP^{(k_2)}\vW}_F^{2}
        \\ &=
        \norm{\vW_2\vSigma\vW_1^\top\left(\vP^{(k_1)}\right)^\top \vP^{(k_2)}\vW_1\vSigma\vW_2^\top}_F^{2}
        \\ &=
        \norm{\vSigma\vW_1^\top\left(\vP^{(k_1)}\right)^\top \vP^{(k_2)}\vW_1\vSigma}_F^{2}
        \\ &=
        \norm{\vSigma \vP_1^\top \vP_2 \vSigma}_F^{2},
    \end{align*}
    where $\vP_1 := \vP^{(k_1)}\vW_1$ and $\vP_2 := \vP^{(k_2)}\vW_1$ are $m \times m$ orthogonal matrices, and each $P_{1i}$ and $P_{2i}$ is a column vector of $\vP_1$ and $\vP_2$, respectively. Since $\vP_1$ and $\vP_2$ are orthogonal matrices, their columns are orthonormal vectors, respectively.
    Then,
    \begin{align*}
        \norm{\left(\vU^{(k_1)}\right)^\top \vV^{(k_2)}}_F^{2}
        =
        \norm{\vSigma\vP_{1}^\top \vP_{2} \vSigma}_F^{2}
        =
        \sum_{i \in [m]}\sigma_i^2 P_{1i}^\top P_{2i}
        \geq
        \sum_{i \in [m]}\sigma_i^2
        =
        \norm{\vSigma \vSigma }_F^{2}
        =
        \norm{\vW^\top \vW }_F^{2}
        =
        \frac{m^2}{m-1}
        ,
    \end{align*}
    where the maximum value of $m$ is attained if $P_{1i} = P_{2i}$ for all $i \in [m]$, \ie $\vP_1 = \vP_2$.
    
    From Lemma~\ref{thm:sum:neg:pair}, the expectation of the negative-pair similarity is
    \begin{align*}
        \E \left[\vs(f^\star_{\op{batch}} ; \hat{p}_{\op{neg}})\right]
        &=
        \E_{(\vu, \vv)\sim f^\star_{\op{batch} \sharp} \hat{p}_{\op{neg}}}
        \left[ \vu^\top \vv \right]
        = - \frac{1}{n-1}
        .
    \end{align*}
    
    Note that
    \begin{align*}
        \E \left[
            \vs(f^\star_{\op{batch}} ; \hat{p}_{\op{neg}})^2
        \right]
        &=
        \E_{ (\vu, \vv) \sim  f^\star_{\op{batch} \sharp} \hat{p}_{\op{neg}}}\left[
            \left( \vu^\top\vv  \right)^2
        \right]
        \\ &=
        \frac{1}{n^2 - n}
        \left(
        \norm{\vU^\top \vV}_F^{2}
        -n
        \right)
        \\ &=
        \frac{1}{n^2 - n}
        \sum_{k_1, k_2 \in [b]}
        \norm{\left(\vU^{(k_1)}\right)^\top \vV^{(k_2)}}_F^{2}
        - \frac{1}{n - 1}
        \\ &=
        \frac{1}{n^2 - n}
        \sum_{k_1 \neq k_2 \in [b]}
        \norm{\left(\vU^{(k_1)}\right)^\top \vV^{(k_2)}}_F^{2}
        +
        \frac{1}{n^2 - n}
        \sum_{k \in [b]}
        \norm{\left(\vU^{(k)}\right)^\top \vV^{(k)}}_F^{2}
        - \frac{1}{n - 1}
    \end{align*}
    \begin{align*}
         &=
        \frac{1}{n^2 - n}
        \sum_{k_1 \neq k_2 \in [b]}
        \norm{\left(\vU^{(k_1)}\right)^\top \vV^{(k_2)}}_F^{2}
        +
        \frac{1}{n^2 - n}
        \cdot b \cdot
        \frac{m^2}{m-1}
        - \frac{1}{n - 1}
        \\ &=
        \frac{1}{n^2 - n}
        \sum_{k_1 \neq k_2 \in [b]}
        \norm{\left(\vU^{(k_1)}\right)^\top \vV^{(k_2)}}_F^{2}
        +
        \frac{1}{(m-1)(n-1)}
        \\
        & \in
        \left[
            0+\frac{1}{(m-1)(n-1)},
            \frac{1}{n^2-n}\cdot b(b-1)\cdot\frac{m^2}{m-1}
            +
            \frac{1}{(m-1)(n-1)}
        \right]
        ,
    \end{align*}
    where the range is equal to $\left[\frac{1}{(m-1)(n-1)}, \frac{m(b-1)+1}{(m-1)(n-1)} \right]$.

    Therefore, the variance of the negative-pair similarity is
    \begin{align*}
        \Var \left[\vs(f^\star_{\op{batch}} ; \hat{p}_{\op{neg}})\right]
        &=
        \E \left[
            \vs(f^\star_{\op{batch}} ; \hat{p}_{\op{neg}})^2
        \right]
        -
        \E \left[
            \vs(f^\star_{\op{batch}} ; \hat{p}_{\op{neg}})
        \right]^2
        \\
        &=
        \E \left[
            \vs(f^\star_{\op{batch}} ; \hat{p}_{\op{neg}})^2
        \right]
        -
        \frac{1}{(n-1)^2}
        \\
        & \in
        \left[
            \frac{n-m}{(m-1)(n-1)^2},
            \frac{n \,(n - m)}{(m-1)(n - 1)^{2}}
        \right]
        .
    \end{align*}
\end{proof}

\begin{lemma}
    \label{thm:series:sum:ineqaulity}
    Let $m_1$ and $m_2$ be natural numbers with $m_1\leq m_2$. For any positive values $\{c_i > 0\}_{i\in[m_2]}$, the following inequality holds:
    \begin{equation*}
        \frac{m_1 \sum_{i\in[m_1]} c_i}{m_2 \sum_{i\in[m_2]} c_i} \leq 1,
    \end{equation*}
    where the equality condition is $m_1 = m_2$.
\end{lemma}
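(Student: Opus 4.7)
The plan is to factor the ratio on the left-hand side into two pieces, each of which is at most $1$ by a separate monotonicity argument, and then combine them multiplicatively. Specifically, I would write
\begin{equation*}
\frac{m_1 \sum_{i\in[m_1]} c_i}{m_2 \sum_{i\in[m_2]} c_i}
= \frac{m_1}{m_2} \cdot \frac{\sum_{i\in[m_1]} c_i}{\sum_{i\in[m_2]} c_i},
\end{equation*}
and argue each factor separately.

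The first factor is at most $1$ directly from the hypothesis $m_1 \leq m_2$, with equality iff $m_1 = m_2$. For the second factor, since $[m_1] \subseteq [m_2]$, the denominator expands as $\sum_{i\in[m_1]} c_i + \sum_{i=m_1+1}^{m_2} c_i$; positivity of every $c_i$ forces the tail sum to be nonnegative, and strictly positive whenever $m_1 < m_2$. Hence the second factor lies in $(0,1]$, equaling $1$ precisely when the tail index set $\{m_1+1,\ldots,m_2\}$ is empty, i.e., when $m_1 = m_2$.

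Multiplying the two bounds yields the claimed inequality. For the equality analysis, the argument above shows that $m_1 = m_2$ makes both factors equal $1$ simultaneously, while $m_1 < m_2$ makes each of them strictly less than $1$ (indeed, strictness in either factor alone already suffices to make the product strict). No part of the argument poses a genuine obstacle; this is a one-line elementary inequality, and the only care needed is to track the equality condition for each factor separately so that the joint equality condition $m_1 = m_2$ is extracted cleanly.
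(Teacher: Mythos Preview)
Your proof is correct and essentially mirrors the paper's argument. The only cosmetic difference is that the paper works additively, showing $m_2\sum_{i\in[m_2]}c_i - m_1\sum_{i\in[m_1]}c_i = m_2\sum_{i\in[m_2]\setminus[m_1]}c_i + (m_2-m_1)\sum_{i\in[m_1]}c_i \ge 0$, whereas you factor the ratio multiplicatively into $\tfrac{m_1}{m_2}\cdot\tfrac{\sum_{[m_1]}c_i}{\sum_{[m_2]}c_i}$ and bound each factor by $1$; both rest on the same two observations ($m_1\le m_2$ and nonnegativity of the tail sum), and the equality analysis is identical.
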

\begin{proof}
    Since $m_2>0$ and $m_2- m_1 \geq 0$, we have
    \begin{align*}
        m_2 \sum_{i\in[m_2]} c_i - m_1 \sum_{i\in[m_1]} c_i
        &=
        m_2 \sum_{i\in[m_2]\setminus [m_1]} c_i+(m_2 - m_1) \sum_{i \in [m_1]} c_i
        \geq 0
        ,
    \end{align*}
    where the equality condition is $m_1 = m_2$.
    Note that $ m_1 \sum_{i\in[m_1]} c_i > 0$. Rearranging the above yields
    \begin{equation*}
        \frac{m_1 \sum_{i\in[m_1]} c_i}{m_2 \sum_{i\in[m_2]} c_i} \leq 1.
    \end{equation*}
\end{proof}

\begin{theorem}
    Consider the InfoNCE loss $\loss_{\op{InfoNCE}} \left( \vU , \vV\right)$~\citep{oord2018representation}, which corresponds to the loss $\loss_{\op{info-sym}}\left(\vU, \vV\right)$ in Def.~\ref{def:loss:infonce} where $\phi(x)=\exp(x/t)$ for some $t>0$, $\psi(x)=\log(1+x)$, and $(c_1, c_2)=(1,0)$.
    
    For any two integers $m_1, m_2 \in [n]$ such that $m_1 \leq m_2$, the gradient of the InfoNCE loss with respect to a negative-pair similarity satisfies the following inequalities for any distinct indices $i \neq j \in [m]$.
    \begin{equation*}
        \E_{(\vU, \vV) \sim f_\sharp \hat{p}^{1:m_1}}
        \left[
        -
        \frac{\partial }{ \partial \left(\vu_i^\top \vv_j\right)} \loss_{\op{InfoNCE}} (\vU, \vV)
        \right]
        \leq
        \E_{(\vU, \vV) \sim f_\sharp \hat{p}^{1:m_2}}
        \left[
        - \frac{\partial }{ \partial \left(\vu_i^\top \vv_j\right)} \loss_{\op{InfoNCE}} (\vU, \vV)
        \right]
        \leq
        0.
    \end{equation*}
    Moreover, the equality condition of the first inequality is $m_1 =  m_2$.
\end{theorem}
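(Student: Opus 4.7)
The plan is to compute $\partial \loss_{\op{InfoNCE}}(\vU_{[m]},\vV_{[m]})/\partial s_{ij}$ in closed form (writing $s_{ij}:=\vu_i^\top\vv_j$), establish non-negativity and batch-size monotonicity pointwise in $(\vU,\vV)$, and only at the end convert these into the stated expectation inequalities via marginalization. For the closed form, I would absorb the constant $1$ inside each logarithm of the InfoNCE loss into its exponential sum using $\exp((\vv_i-\vv_i)^\top\vu_i/t)=1$, and peel off the resulting linear diagonal term. The symmetric loss then reduces to log-sum-exp form
\begin{equation*}
\loss_{\op{InfoNCE}}(\vU_{[m]},\vV_{[m]}) = \frac{1}{2m}\sum_{\ell\in[m]}\log\!\Big(\sum_{k\in[m]}e^{s_{\ell k}/t}\Big) + \frac{1}{2m}\sum_{\ell\in[m]}\log\!\Big(\sum_{k\in[m]}e^{s_{k\ell}/t}\Big) - \frac{1}{mt}\sum_{\ell\in[m]}s_{\ell\ell}.
\end{equation*}
Since $s_{ij}$ with $i\neq j$ appears in exactly one outer term in each of the two log-sum-exp sums and not in the diagonal term, differentiating yields the softmax-type expression
\begin{equation*}
\frac{\partial \loss_{\op{InfoNCE}}(\vU_{[m]},\vV_{[m]})}{\partial s_{ij}} = \frac{e^{s_{ij}/t}}{2mt}\left[\frac{1}{\sum_{k\in[m]}e^{s_{ik}/t}} + \frac{1}{\sum_{k\in[m]}e^{s_{kj}/t}}\right].
\end{equation*}

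Non-negativity of this derivative (which gives the second inequality, $\E[-\partial/\partial s_{ij}\,\loss_{\op{InfoNCE}}]\leq 0$, both pointwise and in expectation) is immediate, since every term is a ratio of positive exponentials. For the monotonicity, the expression has exactly the structure handled by Lemma~\ref{thm:series:sum:ineqaulity}: instantiating that lemma with the positive sequences $c_k=e^{s_{ik}/t}$ and $c_k=e^{s_{kj}/t}$ gives $\frac{1}{m_2\sum_{k\in[m_2]}c_k}\leq\frac{1}{m_1\sum_{k\in[m_1]}c_k}$ for $m_1\leq m_2$, with equality only when $m_1=m_2$. Multiplying by the positive factor $e^{s_{ij}/t}/(2t)$ and adding the two analogous bounds yields the pointwise comparison $\partial \loss_{\op{InfoNCE}}(\vU_{[m_2]},\vV_{[m_2]})/\partial s_{ij}\leq \partial \loss_{\op{InfoNCE}}(\vU_{[m_1]},\vV_{[m_1]})/\partial s_{ij}$, strict whenever $m_1<m_2$.

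To lift these pointwise bounds across the two different sampling distributions $f_\sharp\hat{p}_{\op{pos}}^{[m_1]}$ and $f_\sharp\hat{p}_{\op{pos}}^{[m_2]}$ appearing in the theorem, I would marginalize. The derivative evaluated at batch size $m_1$ is a function only of the positive pairs indexed by $[m_1]$; because the empirical distribution factorizes across instances as $\hat{p}_{\op{pos}}^{[m_2]}=\prod_{i\in[m_2]}\hat{p}_{\op{pos}}^i$, its expectation under $f_\sharp\hat{p}_{\op{pos}}^{[m_1]}$ equals its expectation under $f_\sharp\hat{p}_{\op{pos}}^{[m_2]}$. Taking expectations of the pointwise monotonicity bound under the larger distribution then delivers the claimed chain of inequalities, with the strictness condition $m_1=m_2$ inherited directly from the pointwise case.

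The argument is not deep: once the derivative is put into the clean log-sum-exp form above, the whole monotonicity claim reduces to the elementary Lemma~\ref{thm:series:sum:ineqaulity}. The main place where I would be careful, and the only genuine bookkeeping hurdle, is the reconciliation of the two sampling distributions of different dimensionality in the expectation step; this is precisely why I would establish the pointwise inequality first and defer marginalization to the very end.
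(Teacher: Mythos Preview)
Your proposal is correct and follows essentially the same approach as the paper: both compute the derivative in softmax form, invoke Lemma~\ref{thm:series:sum:ineqaulity} for the batch-size monotonicity, and finish by marginalizing the extra coordinates to reconcile the two sampling distributions. Your normalization constant $\tfrac{1}{2mt}$ differs from the paper's $\tfrac{1}{mt}$ only because the appendix proof silently drops the $\tfrac12$ from Def.~\ref{def:loss:infonce}; your version is the one consistent with the stated definition, and the discrepancy is immaterial to the inequality.
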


\begin{proof}
    Without loss of generality, suppose there are $m$ positive pairs in $(\vU, \vV)$. Then, the InfoNCE loss is given as follows.
    \begin{equation*}
        \loss (\vU, \vV)
        =
        \frac{1}{m} \sum_{i\in[m]} \log \left( 1+ \sum_{j\in[m]\setminus\{i\}} \exp( \vu_i^\top(\vv_j - \vv_i)  / t) \right)
        +\frac{1}{m} \sum_{i\in[m]} \log \left( 1+ \sum_{j\in[m]\setminus\{i\}} \exp((\vu_j - \vu_i)^\top \vv_i / t) \right).
    \end{equation*}

    Following the gradients analysis in \citet{wang2021understanding}, the partial derivatives of the loss with respect to negative pair are derived as follows. In particular, for all $i\neq j\in[m]$, we have
    \begin{align*}
        \frac{\partial }{ \partial \left(\vu_i^\top \vv_j\right)} \loss (\vU, \vV)
        &=
        \frac{1}{m} \cdot
        \frac{\partial }{ \partial \left(\vu_i^\top \vv_j\right)} 
        \log \left( 1+ \sum_{j'\in[m]\setminus\{i\}} \exp( \vu_i^\top(\vv_{j'} - \vv_i) / t) \right)
        \\&\qquad
        +\frac{1}{m} \cdot
        \frac{\partial }{ \partial\left(\vu_i^\top \vv_j\right)} 
        \log \left( 1+ \sum_{i'\in[m]\setminus\{i\}} \exp((\vu_{i'} - \vu_j)^\top \vv_j / t) \right)
        \\ %
        &=
        \frac{1}{m} \cdot
        \frac{
            \exp(\vu_i^\top(\vv_{j} - \vv_i) / t) / t
        }{
            1+ \sum_{{j'}\in[m]\setminus\{i\}} \exp(\vu_i^\top(\vv_{j'} - \vv_i) / t)
        }
        +\frac{1}{m} \cdot
        \frac{
            \exp((\vu_i - \vu_j)^\top \vv_j / t) / t
        }{
            1+ \sum_{i'\in[m]\setminus\{i\}} \exp((\vu_{i'} - \vu_j)^\top \vv_j / t)
        }
        \\ %
        &=
        \frac{1}{m t} \cdot
        \frac{
            \exp(\vu_i^\top\vv_j  / t)
        }{
            \sum_{j'\in[m]} \exp(\vu_i^\top\vv_{j'} / t)
        }
        + \frac{1}{m t} \cdot
        \frac{
            \exp(\vu_i ^\top \vv_j / t) 
        }{
            \sum_{{i'}\in[m]} \exp(\vu_{i'} ^\top \vv_i / t)
        }
        \\ %
        &=
        \frac{1}{m t}
        \left(
        \frac{
            \exp(\vu_i^\top\vv_j  / t)
        }{
            \sum_{j\in[m]} \exp(\vu_i^\top\vv_j / t)
        }
        +
        \frac{
            \exp(\vu_i ^\top \vv_j / t) 
        }{
            \sum_{{i'}\in[m]} \exp(\vu_{i'} ^\top \vv_i / t)
        }
        \right)
        \\ %
        &\geq 0
        .
    \end{align*}

    Then, for any $m_1 \leq  m_2  \leq n$ and $i\neq j\in[m_1]$, the following inequality holds:
    \begin{align}
        \E_{(\vU, \vV) \sim f_\sharp \hat{p}_{\op{pos}}^{m_2}}
        \left[
        \frac{\partial }{ \partial \left(\vu_i^\top \vv_j\right)} \loss (\vU, \vV)
        \right]
        &= \nonumber
        \E_{(\vU, \vV) \sim f_\sharp \hat{p}_{\op{pos}}^{m_2}}
        \left[
        \frac{1}{m_2 t}
        \left(
        \frac{
            \exp(\vu_i^\top\vv_j  / t)
        }{
            \sum_{j\in[m_2]} \exp(\vu_i^\top\vv_j / t)
        }
        +
        \frac{
            \exp(\vu_i ^\top \vv_j / t) 
        }{
            \sum_{{i'}\in[m_2]} \exp(\vu_{i'} ^\top \vv_i / t)
        }
        \right)
        \right]
        \\ %
        &= \nonumber
        \frac{1}{m_1 t}
        \E_{(\vU, \vV) \sim f_\sharp \hat{p}_{\op{pos}}^{m_2}}
        \left[
        \frac{
            \exp(\vu_i^\top\vv_j  / t)
        }{
            \sum_{j\in[m_1]} \exp(\vu_i^\top\vv_j / t)
        }
        \cdot
        \frac{
            m_1\sum_{j\in[m_1]} \exp(\vu_i^\top\vv_j / t)
        }{
            m_2\sum_{j\in[m_2]} \exp(\vu_i^\top\vv_j / t)
        }
        \right]
        \\ &\qquad \nonumber
        +
        \frac{1}{m_1 t}
        \E_{(\vU, \vV) \sim f_\sharp \hat{p}_{\op{pos}}^{m_2}}
        \left[
        \frac{
            \exp(\vu_i ^\top \vv_j / t) 
        }{
            \sum_{{i'}\in[m_2]} \exp(\vu_{i'} ^\top \vv_i / t)
        }
        \cdot
        \frac{
            m_1\sum_{{i'}\in[m_1]} \exp(\vu_{i'} ^\top \vv_i / t)
        }{
            m_2\sum_{{i'}\in[m_2]} \exp(\vu_{i'} ^\top \vv_i / t)
        }
        \right]
        \\ %
        &\leq \nonumber
        \frac{1}{m_1 t}
        \E_{(\vU, \vV) \sim f_\sharp \hat{p}_{\op{pos}}^{m_2}}
        \left[
        \frac{
            \exp(\vu_i^\top\vv_j  / t)
        }{
            \sum_{j\in[m_1]} \exp(\vu_i^\top\vv_j / t)
        } \cdot 1
        \right]
        \\ &\qquad \label{eq:proof:neg:sim:inequality}
        +
        \frac{1}{m_1 t}
        \E_{(\vU, \vV) \sim f_\sharp \hat{p}_{\op{pos}}^{m_2}}
        \left[
        \frac{
            \exp(\vu_i ^\top \vv_j / t) 
        }{
            \sum_{{i'}\in[m_2]} \exp(\vu_{i'} ^\top \vv_i / t)
        } \cdot 1
        \right]
        \\ %
        &= \label{eq:proof:neg:sim:m2:to:m1}
        \frac{1}{m_1 t}
        \E_{(\vU, \vV) \sim f_\sharp \hat{p}_{\op{pos}}^{m_1}}
        \left[
        \frac{
            \exp(\vu_i^\top\vv_j  / t)
        }{
            \sum_{j\in[m_1]} \exp(\vu_i^\top\vv_j / t)
        }
        +
        \frac{
            \exp(\vu_i ^\top \vv_j / t) 
        }{
            \sum_{{i'}\in[m_1]} \exp(\vu_{i'} ^\top \vv_i / t)
        }
        \right]
        \\ %
        &= \nonumber
        \E_{(\vU, \vV) \sim f_\sharp \hat{p}_{\op{pos}}^{m_1}}
        \left[
        \frac{\partial }{ \partial \left(\vu_i^\top \vv_j\right)} \loss (\vU, \vV)
        \right],
    \end{align}
    where the inequality in \eqref{eq:proof:neg:sim:inequality} follows from Lemma~\ref{thm:series:sum:ineqaulity}, since the exponential function is strictly positive. The equality condition in  \eqref{eq:proof:neg:sim:inequality} is $m_1=m_2$.

    Moreover, equality in \eqref{eq:proof:neg:sim:m2:to:m1}, where $f_\sharp \hat{p}_{\op{pos}}^{m_2}$ is replaced with $f_\sharp \hat{p}_{\op{pos}}^{m_1}$, holds because the expectation involves only embeddings $(\vU, \vV)$ with indices in $[m_1]$. This concludes the proof.
\end{proof}

\clearpage
\section{Experiment Details}
\label{sec:appendix:experiment}

In this section, we provide the details of the experiment setup mentioned in Sec.~\ref{sec:experiment}.
Our implementation is based on the open-source library solo-learn~\citep{JMLR:v23:21-1155} for self-supervised learning. The source code is available at \url{https://github.com/leechungpa/embedding-similarity-cl/}.

\subsection{Architecture and Training Details}

For all experiments, we use modified ResNet-18~\citep{he2016deep, chen2020simple} as the backbone for CIFAR datasets and ResNet-50 for ImageNet-100.
For CIFAR datasets, we modify ResNet-18 by replacing the first convolutional layer with a 3×3 kernel at a stride of 1 and removing the initial max pooling step.
In contrast, we use the standard ResNet-50 architecture for ImageNet-100 without any modifications.
Regardless of the backbone used, we attach a 2-layer MLP as the projection head, which projects representations to a 128-dimensional latent space.
Batch normalization is applied to the fully connected layers, with the hidden layer dimension set to 2048 for ImageNet-100 and 512 for the CIFAR datasets.

We follow the data augmentation strategy used in SimCLR~\citep{chen2020simple}.
Specifically, we apply random resized cropping, horizontal flipping, color jittering, and Gaussian blurring.
For the CIFAR datasets, the crop size is set to 32, while for ImageNet-100, we use a crop size of 224. These augmentations are applied consistently across all experiments.

For the optimizer, we use stochastic gradient descent (SGD) for 200 epochs.
The learning rate is scaled linearly with the batch size as $\text{lr} \times \text{BatchSize} / 256$, where the base learning rate is set to 0.3 for the CIFAR datasets and 0.1 for ImageNet-100.
A cosine decay schedule is applied, with a weight decay of 0.0001 and SGD momentum set to 0.9. 
Additionally, we use linear warmup for the first 10 epochs.

We tune the temperature parameter for baseline methods, SimCLR, DCL, and DHEL, by performing a grid search over the range of 0.1 to 0.5 in increments of 0.1 and selecting the temperature value that yielded the best performance for each method.
For tuning the proposed loss $\loss_{\op{VRNS}}(\mathbf{U}, \mathbf{V})$ in Def.~\ref{def:regularization}, we conducted a grid search for $\lambda$ from the set $\{0.1, 0.3, 1, 3, 10, 30, 100\}$. 

All experiments were conducted using a single NVIDIA RTX 4090 GPU.

\subsection{Evaluation Details}

For the linear evaluation protocol, we remove the projector head and using the pretrained encoder for downstream classification tasks. Specifically, we extract the encoder outputs from the trained model without applying any augmentations. These feature vectors are then normalized and used to train a linear classifier.
Following prior works~\citep{kornblith2019better, lee2021improving, koromilas2024bridging}, we report top 1 accuracy on the downstream dataset.
We use SGD, setting the learning rate to 0.1 without weight decay.
The classifier is trained for 200 epochs with a batch size of 256.

\section{Additional Experiments on the ImageNet Dataset}
\label{sec:appendix:imagenet}

\begin{table}[h!]
\centering
\caption{Effect of our proposed loss (when combined with SimCLR) on the top-1 and top-5 classification accuracies (\%). Bold entries indicate the highest accuracy.
}
\vskip 0.1in
\label{tab:temp}
\begin{tabular}{llcccc}
\toprule
Dataset & Temperature & \multicolumn{2}{c}{SimCLR} & \multicolumn{2}{c}{SimCLR + Ours} \\ 
\cmidrule(lr){3-4} \cmidrule(lr){5-6}
 &  & Top 1 & Top 5 & Top 1 & Top 5 \\ 
\midrule
\multirow{5}{*}{ImageNet-100} 
 & $t=0.1$ & 70.14 & 91.14 & 69.50 & 90.80 \\ 
 & $t=0.2$ & 73.80 & 93.14 & 73.94 & 93.08\\ 
 & $t=0.3$ & 72.30 & 92.94 & \textbf{74.04} & \textbf{93.24} \\ 
 & $t=0.4$ & 69.92 & 92.10 & 73.12 & 92.98 \\ 
 & $t=0.5$ & 68.60 & 90.90 & 72.88 & 92.84 \\ 
\bottomrule
\end{tabular}%
\end{table}

\begin{table}[h!]
\centering
\caption{Top-1 accuracy (\%) on the full ImageNet dataset.}
\vskip 0.15in
\begin{tabular}{lcc}
\toprule
Method & Top-1 accuracy (\%) \\
\midrule
SimCLR & 51.79 \\
SimCLR + Ours & 52.48 \\
\bottomrule
\end{tabular}
\label{tab:imagenet_full}
\end{table}

We further validate the effectiveness of the proposed loss through experiments on the ImageNet dataset. Table~\ref{tab:temp} reports the top-1 and top-5 classification accuracies on ImageNet-100 for various temperature values. For each temperature, we compare SimCLR with and without the proposed loss. The results demonstrate that incorporating our loss generally improves performance across a range of temperatures, with the largest gain observed at $t=0.3$.

We also report results from 100-epoch training on the full ImageNet dataset. Using the same experimental setup as for ImageNet-100, we compare SimCLR with our method (SimCLR + the proposed loss) using $t=0.2$ and $\lambda=40$. As shown in Table~\ref{tab:imagenet_full}, our method outperforms the baseline.

\section{Discussion on the Proposed Loss for Variance Reduction}
\label{sec:appendix:discussion}

The auxiliary loss $\loss_{\op{VRNS}}(\mathbf{U}, \mathbf{V})$ proposed in Def.~\ref{def:regularization} shows effectiveness in the following scenarios:
\begin{itemize}
    \item \textbf{Small Batch Training:} As established in Theorem~\ref{thm:mini-batch}, the variance of negative-pair similarities increases as batch size decreases. The proposed loss in Def.~\ref{def:regularization} directly penalizes this variance, making it particularly advantageous when training with small batch sizes. This effect is empirically validated in Figure~\ref{fig:main_cifar10_arrow}.
    \item \textbf{Temperature Robustness:} The performance of CL methods is often sensitive to the choice of the temperature parameter, which affects the distribution of similarities among embedding pairs~\citep{wang2021understanding}. By explicitly encouraging negative-pair similarities towards the optimal value of $-1/(n-1)$, the proposed loss reduces this sensitivity and stabilizes performance across a wide range of temperature settings, as shown in Figure~\ref{fig:temperature}.
\end{itemize}

Despite its advantages, the proposed loss also presents several limitations:
\begin{itemize}
    \item \textbf{Suppression of Semantically Meaningful Variance:} In some cases, variance in negative-pair similarities may capture meaningful semantic differences between instances. Enforcing uniform similarity can potentially suppress this informative structure, adversely affecting representation quality.
    \item \textbf{Reduced Impact with Large Batch Sizes:} The variance-reducing effect of proposed loss diminishes as batch size increases, since the variance of negative-pair similarities naturally decreases in larger batches.
    \item \textbf{Hyperparameter Sensitivity:} The proposed loss introduces an additional hyperparameter, $\lambda$, which necessitates careful tuning to achieve optimal performance.
\end{itemize}

\end{document}